\documentclass[12pt, final, onecolumn, notitlepage]{article}

\usepackage[backend=biber,style=apa,natbib=true]{biblatex}
\addbibresource{main.bib}

\usepackage{amsmath, amsfonts, amssymb, amsthm, graphicx, enumitem, titlesec, comment, hyperref, bbm, algorithm, algcompatible, xcolor, thm-restate, authblk}

\allowdisplaybreaks
\titlelabel{\thetitle.\quad}
\usepackage[margin=1in]{geometry}
\usepackage[noend]{algpseudocode}

\newcommand{\vect}[1]{\mathbf{#1}}

\newcommand{\RR}{\mathbb{R}}

\newcommand{\GG}{\mathbb{G}}
\newcommand{\PP}{\mathbb{P}}
\newcommand\norm[1]{\left\lVert#1\right\rVert}
\newcommand\abs[1]{\left\lvert#1\right\rvert}

\newcommand\parens[1]{\left(#1\right)}

\newcommand\angles[1]{\langle#1\rangle}
\newcommand*{\pr}[2][]{\mathrm{Pr}\ifx\\\left[#1\right]\\\else_{#1}\fi \left[#2\right]}
\newcommand*{\EE}[2][]{\mathbb{E}\ifx\\\left[#1\right]\\\else_{#1}\fi \left[#2\right]}
\newcommand*{\Var}[2][]{\mathrm{Var}\ifx\\\left[#1\right]\\\else_{#1}\fi \left[#2\right]}
\newcommand*{\Cov}[2][]{\mathrm{Cov}\ifx\\\left(#1\right)\\\else_{#1}\fi \left(#2\right)}

\newcommand{\perm}{\mathrm{perm}}
\newcommand{\haf}{\mathrm{haf}}

\newcommand{\diag}{\mathrm{\mathbf{diag}}}
\newcommand{\row}{\mathrm{row}}
\newcommand{\col}{\mathrm{col}}
\newcommand{\ms}{\mathrm{ms}}

\newcommand{\tr}{\mathrm{Tr}}
\newcommand{\size}{\mathrm{rsize}}

\algdef{SE}[DOWHILE]{Do}{doWhile}{\algorithmicdo}[1]{\algorithmicwhile\ #1}%
\MakeRobust{\Call}

\newtheorem{theorem}{Theorem}[section]
\newtheorem{lemma}[theorem]{Lemma}
\newtheorem{prop}[theorem]{Proposition}

\newtheorem{claim}[theorem]{Claim}
\newtheorem{conjecture}[theorem]{Conjecture}

\theoremstyle{definition}
\newtheorem{defin}[theorem]{Definition}
\newtheorem{example}[theorem]{Example}
\newtheorem{remark}[theorem]{Remark}

\title{Towards a Law of Iterated Expectations for Heuristic Estimators}
\date{\today}
\author[ ]{Paul Christiano\footnote{Work done while at the Alignment Research Center prior to April 2024}}
\author[1]{Jacob Hilton}
\author[2]{Andrea Lincoln}
\author[1]{Eric Neyman\footnote{Corresponding author: \texttt{eric@alignment.org}}}
\author[1]{Mark Xu\footnote{Authors listed in alphabetical order}}
\affil[1]{Alignment Research Center}
\affil[2]{Boston University}

\begin{document}
\maketitle

\begin{abstract}
    \citet{christiano2022formalizing} define a \emph{heuristic estimator} to be a hypothetical algorithm that estimates the values of mathematical expressions from arguments. In brief, a heuristic estimator $\GG$ takes as input a mathematical expression $Y$ and a formal ``heuristic argument'' $\pi$, and outputs an estimate $\GG(Y \mid \pi)$ of $Y$. In this work, we argue for the informal principle that a heuristic estimator ought not to be able to predict its own errors, and we explore approaches to formalizing this principle. Most simply, the principle suggests that $\GG(Y - \GG(Y \mid \pi) \mid \pi)$ ought to equal zero for all $Y$ and $\pi$. We argue that an ideal heuristic estimator ought to satisfy two stronger properties in this vein, which we term \emph{iterated estimation} (by analogy to the law of iterated expectations) and \emph{error orthogonality.}

    Although iterated estimation and error orthogonality are intuitively appealing, it can be difficult to determine whether a given heuristic estimator satisfies the properties. As an alternative approach, we explore \emph{accuracy}: a property that (roughly) states that $\GG$ has zero average error over a distribution of mathematical expressions. However, in the context of two estimation problems, we demonstrate barriers to creating an accurate heuristic estimator. We finish by discussing challenges and potential paths forward for finding a heuristic estimator that accords with our intuitive understanding of how such an estimator ought to behave, as well as the potential applications of heuristic estimators to understanding the behavior of neural networks.
\end{abstract}

\section{Introduction} \label{sec:intro}
It is often possible to estimate a mathematical expression with a high degree of confidence, even when proving tight bounds on the expression is difficult or impossible. For example, let $Y$ be the number of primes between $e^N$ and $e^N + 1000 N$, where $N$ is 1 billion. The prime number theorem tells us that among integers of this size, roughly one in a billion are prime. Thus, it would be reasonable to estimate that $Y \approx 1000$, and we can be quite confident that $500 \le Y \le 2000$, barring some yet-undiscovered regularity in the pattern of prime numbers. By contrast, formally proving that $500 \le Y \le 2000$ may not be possible without a computationally intractable brute-force verification.

Arguments like this one -- known as \emph{heuristic arguments} -- analyze the structure of a problem in order to estimate a quantity. The estimate produced by a heuristic argument reflects a ``best guess'' about the quantity after taking into account some considerations. Correspondingly, the estimate is uncertain -- the argument is not a proof -- and could be revised in light of additional considerations. Heuristic arguments are common in fields such as number theory, discrete math, and theoretical computer science (see \citet{eu71, mz02, ch19} for examples).

Despite the ubiquity of heuristic arguments in mathematics, there has been little prior work attempting to formalize this style of reasoning. To our knowledge, the first in-depth attempt at formalization was \citet{christiano2022formalizing}. The authors introduced the notion of a heuristic estimation algorithm (henceforth \emph{heuristic estimator}), which takes as input a formally specified real-valued expression $Y$ together with a set of formal ``arguments'' $\pi_1, \dots, \pi_m$, and estimates the value of $Y$ by incorporating the information provided by $\pi_1, \dots, \pi_m$. The authors suggested that a heuristic estimator should be guided by a \emph{presumption of independence:} presuming two sub-parts of the expression $Y$ can be treated independently unless an argument points out a relationship between them.

\citet{christiano2022formalizing} discuss properties that a heuristic estimator ought to satisfy, such as linearity and respect for proofs (see Section~\ref{sec:new_properties} below). In this work, we will suggest and study a new property, inspired by the law of iterated expectations from probability theory: a heuristic estimator should not be able to easily predict its own errors.

\subsection{A running example} \label{sec:example}
In this work, we will use $\GG$ (for ``guesser'') to denote a hypothetical heuristic estimator, which takes as input a formal mathematical expression $Y$ and a set of arguments $\Pi = \{\pi_1, \dots, \pi_m\}$, and outputs an estimate of the value of $Y$ based on the arguments in $\Pi$. As we discuss below, $\Pi$ can be thought of as a ``state of knowledge'': a formal description of all facts considered by the heuristic estimator. Under this view, $\GG(Y \mid \Pi)$ is an estimate of $Y$ in light of these facts. We will use the notation $\GG(Y \mid \Pi)$ to denote this estimate.\footnote{\citet{christiano2022formalizing} use the notation $\tilde{\mathbb{E}}(Y, \pi_1, \dots, \pi_m)$. Our change in notation is meant to emphasize that, while there are similarities between heuristic estimation and expected values, they are importantly different.} In the case of $\Pi = \{\pi\}$, we will simply write $\GG(Y \mid \pi)$.

In this section, we introduce a simple example in order to illustrate how we would like a heuristic estimator $\GG$ to behave.

Given a positive irrational number $x$ and a positive integer $k$, let $d_k(x)$ be the $k$-th digit of $x$ past the decimal point, when written in base 10. (For example, since $\sqrt{2} = 1.41\dots$, we have $d_1(\sqrt{2}) = 4$, $d_2(\sqrt{2}) = 1$, and so on.) For our example, we will take $Y$ to be the expression
\[Y := \sum_{n = 101}^{120} d_6(\sqrt{n}).\]
That is $Y$, is the sum of the sixth digits past the decimal point of $\sqrt{101}$, $\sqrt{102}$, and so on, up to $\sqrt{120}$.

As we will discuss in Section~\ref{sec:perspectives}, we would like $\GG$ to return a ``subjective'' expectation of $Y$ in light of the arguments provided as input. So first, suppose that $\GG$ receives no arguments. In the absence of further information, it would be reasonable to believe that the sixth digit of $\sqrt{n}$ is equally likely to be each of $0, 1, \dots, 9$. Thus, $\GG$'s subjective expectation of summand $d_6(\sqrt{n})$ should be the average of $0, 1, \dots, 9$, i.e.\ $4.5$. Thus, we would like $\GG$ to satisfy\footnote{More precisely, instead of being given no arguments, $\GG$ would likely be given a short argument that points out that there are twenty summands and its estimate for each summand ought to be $4.5$.}
\[\GG(Y \mid \emptyset) = 20 \cdot 4.5 = 90.\]

Now, let $\pi_n$ be a proof of the value of $d_6(\sqrt{n})$. For example, $\pi_{101}$ proves that $d_6(\sqrt{101}) = 5$, perhaps by showing that $10.04987\color{red}5\color{black}^2 < 101 < 10.04987\color{red}6\color{black}^2$.

In light of $\pi_{101}$, $\GG$ should change its belief about $d_6(\sqrt{101})$ to $5$, but should not change its beliefs about $d_6(\sqrt{n})$ for any other $n$. Thus, we would like
\begin{equation} \label{eq:pi_101}
    \GG(Y \mid \pi_{101}) = 5 + 19 \cdot 4.5 = 90.5.
\end{equation}

It may be useful to imagine $\GG$ being presented with $\pi_{101}$, $\pi_{102}$, and so on in sequence. After observing each additional argument, $\GG$ takes the argument into account in order to refine its estimate. At the end of the process, $\GG$'s estimate is exactly correct:
\begin{align*}
    \GG(Y \mid \emptyset) &= 20 \cdot 4.5 = 90\\
    \GG(Y \mid \pi_{101}) &= 5 + 19 \cdot 4.5 = 90.5\\
    \GG(Y \mid \pi_{101}, \pi_{102}) &= 5 + 4 + 18 \cdot 4.5 = 90\\
    &\dots\\
    \GG(Y \mid \pi_{101}, \pi_{102}, \dots, \pi_{120}) &= 5 + 4 + 1 + 9 + \dots + 0 + 2 + 1 = 78
\end{align*}

In practice, the arguments given to $\GG$ can be significantly more complex than simple partial computations of $Y$. As a simple example, consider the argument $\pi_{101}'$ which shows that $10.04987\color{red}5\color{black}^2 < 101 < 10.04987\color{red}9\color{black}^2$, and thus that $d_7(\sqrt{101})$ is $5$, $6$, $7$, or $8$. Such an argument should cause $\GG$ to update its subjective belief about $d_7(\sqrt{101})$ from ``uniform over $\{0, 1, \dots, 9\}$'' to ``uniform over $\{5, 6, 7, 8\}$'' and to update its estimate of $Y$ accordingly. Of course, arguments can cause $\GG$ to update its estimates in much more complicated ways as well.

\subsection{Perspectives on heuristic estimation} \label{sec:perspectives}
In this section, we clarify the purpose and desired behavior of a heuristic estimator by analogizing heuristic estimates to three other concepts: proof verification, conditional expected values, and subjective probabilities and estimates.

\paragraph{Analogy to proof verification.} As discussed in \citet{christiano2022formalizing, neyman2024algorithmic}, a heuristic estimator can be thought of as a generalization of a formal proof verifier. A proof verifier takes as input a formal mathematical statement and a purported proof, and checks whether the proof proves the statement. Similarly, a heuristic estimator takes as input a formal mathematical expression and an argument (or arguments) about the expression, and outputs an estimate of the value of the expression that reflects those arguments. Importantly, the purpose of a heuristic estimator is to incorporate the heuristic arguments that it has been given as input, rather than to generate its own arguments. The output of a heuristic estimator is only as sophisticated as the arguments that it has been given.

As discussed by \citet{christiano2022formalizing}, $\GG$ would ideally respect proofs: roughly speaking, if $\pi$ contains a proof that $\ell \le Y \le h$, then $\ell \le \GG(Y \mid \pi) \le h$. This is the sense in which heuristic estimators \emph{generalize} proof verifiers, rather than just being analogous to them. Table~\ref{table:he_pv_analogy} illustrates the analogy to proof verification in more detail.

\begin{table}[ht]
    \centering
    \begin{tabular}{c||c}
        \textbf{Heuristic estimation} & \textbf{Proof verification}\\
        \hline
        Heuristic estimator & Proof verifier\\
        \hline
        Formal mathematical expression & Formal mathematical statement\\
        \hline
        List of heuristic arguments & Purported proof of statement\\
        \hline
        Formal language for heuristic arguments & Formal language for proofs\\
        \hline
        Desiderata for estimator & Soundness and completeness\\
        \hline
        Algorithm's estimate of expression & Verifier's output (accept or reject)\\
    \end{tabular}
    \caption{We are interested in developing a heuristic estimator for mathematical expressions. There are similarities between this task and the (solved) task of developing an algorithm for verifying formal proofs of mathematical statements. This table illustrates the analogy. (Adapted from \citet[Chapter 9]{neyman2024algorithmic}.)}
    \label{table:he_pv_analogy}
\end{table}

\paragraph{Analogy to conditional expectations.} In some ways, a heuristic estimator is analogous to a conditional expected value. In probability theory, the expectation of a random variable $X$ conditioned on an event $A$, $\EE{X \mid A}$ is the average value of $X$ over the subset of outcomes in which $A$ occurs. Intuitively, it is the estimate of $X$ given by an observer who does not know the exact outcome and instead only knows that $A$ occurred. Similarly, if $Y$ is a mathematical expression and $\Pi$ is a set of heuristic arguments, then $\GG(Y \mid \Pi)$ is an estimate of $Y$ given by an observer who has not computed the exact value of $Y$ and has instead only done the computations described in $\Pi$. Although there is a particular correct value of $Y$, the observer does not know this value, and $\GG(Y \mid \Pi)$ is a subjective ``best guess'' about $Y$ given only $\Pi$.

\paragraph{Analogy to subjective probabilities and estimates.} Perhaps most intuitively, $\GG$ is a procedure that extracts a subjective expectation from a state of knowledge. Under this view, $\Pi$ formally describes a set of facts known by an observer, and $\GG(Y \mid \Pi)$ is an estimate of $Y$ in light of those facts. To clarify this perspective, we recall the notion of subjective expectations.

The subjectivist view of probability interprets probability as the subjective credence of an observer. For example, suppose that a coin was created to have a bias (i.e.\ probability of heads) that was uniformly sampled from $[0, 1]$, and the coin was flipped. According to an observer who does not know the bias (but knows that the bias was sampled uniformly), the subjective probability that the coin came up heads is $0.5$. According to an observer who knows that the bias is $p$, the subjective probability of heads is $p$. According to an observer who saw the outcome of the coin flip, the probability is $1$ if the coin came up heads and $0$ if it came up tails. Indeed, orthodox subjectivists would argue that even if the bias were selected using an unknown procedure, an observer would still have a subjective probability that the coin came up heads; this probability is governed by the observer's priors about how coins' biases are determined.

Just as observers may have subjective probabilities of events, they may have subjective expected values of quantities. Recall our example from Section~\ref{sec:example}. A typical mathematician does not know $d_6(\sqrt{101})$ (the sixth digit of $\sqrt{101}$ past the decimal point), but their subjective probability distribution over the digit is uniform over $\{0, \dots, 9\}$. Correspondingly, their subjective expectation for $d_6(\sqrt{101})$ is $4.5$ (the average of $0, \dots, 9$). Similarly, the mathematician's subjective expectation for $Y := d_6(\sqrt{101}) + \dots + d_6(\sqrt{120})$ is $20 \cdot 4.5 = 90$. If the mathematician were to learn that $d_6(\sqrt{101}) = 5$, then they would update their subjective expectation to $5 + 19 \cdot 4.5 = 90.5$.

A heuristic estimator is meant to formalize the reasoning process undertaken by the mathematician, much as illustrated in Section~\ref{sec:example}. Under this view of heuristic estimation, $Y$ is some formal mathematical expression that an observer is uncertain about (perhaps a large summation, or perhaps a more complicated expression), $\Pi$ is the observer's knowledge, and $\GG$ returns a subjective estimate of $Y$ based on the knowledge encoded in $\Pi$.

\subsection{The principle of unpredictable errors} \label{sec:new_properties}
\textcite{christiano2022formalizing} suggest that in order for $\GG$ to be a coherent, general-purpose estimator, it ought to satisfy some formal properties. For example:
\begin{itemize}
    \item \textbf{Correctly estimating constants:} If the expression $Y$ is a hard-coded constant $c$, then $\GG$ should estimate $Y$ correctly. That is, for $c \in \RR$, for all $\Pi$, we have $\GG(c \mid \Pi) = c$. For example, if $\GG$ correctly estimates constants, then $\GG(2 \mid \Pi) = 2$ for all $\Pi$ (but $\GG(2 \cdot 2 \mid \Pi)$ is not necessarily $4$).
    \item \textbf{Linearity:} For $a, b \in \RR$ and expressions $X, Y$, $\GG$'s estimate of the expression $aX + bY$ is linear in its estimates of $X$ and $Y$ -- that is, for all $\Pi$, we have
    \[\GG(aX + bY \mid \Pi) = a \GG(X \mid \Pi) + b \GG(Y \mid \Pi).\]
    \item \textbf{Respect for proofs:} Given a proof that $Y \ge 0$, the proof may be turned into a heuristic argument $\pi$ such that $\GG(Y \mid \Pi) \ge 0$ for all $\Pi \ni \pi$.
\end{itemize}
Additionally, the authors suggest some informal properties -- such as \emph{presumption of independence} and \emph{independence of irrelevant arguments} -- though no formal statements are provided.\\

The basis of this work is the following informal principle: \emph{a heuristic estimator should not be able to predict its own errors.} We call this the \emph{principle of unpredictable errors.} The main technical content of this work concerns the formalization of this principle.

The principle of unpredictable errors is motivated by our earlier analogy of heuristic estimates to subjective Bayesian expectations. A Bayesian reasoner cannot predict the direction in which they will update their estimate in light of new information: if they could, then they would make that update before receiving the information. More formally, a Bayesian reasoner's subjective estimate of their own future subjective estimate of a given quantity should be equal to their current estimate of the quantity. This is known as the \emph{martingale property.}

We will discuss two approaches to formalizing the principle of unpredictable errors. The first approach (introduced in Section~\ref{sec:iterated_estimation}) -- which we call the \emph{subjective} approach -- involves two formal properties of $\GG$: \emph{iterated estimation} and \emph{error orthogonality}. Both properties are inspired by laws that govern conditional expected values. The \emph{iterated estimation} property states that for all expressions $Y$ and for all sets of arguments $\Pi$ and $\Pi' \subseteq \Pi$, we have
\begin{equation} \label{eq:iterated_est}
    \GG(\GG(Y \mid \Pi) \mid \Pi') = \GG(Y \mid \Pi').
\end{equation}
In other words, if $\GG$ is given a small set of arguments $\Pi'$, its guess about its estimate of $Y$ if it were to consider a larger set of arguments $\Pi$ should be its current estimate of $Y$. This law is inspired by the law of iterated expectations in probability theory.

(Why is this desirable? Consider our running example from Section~\ref{sec:example}. If given no arguments, $\GG$'s best estimate of $\GG(Y \mid \pi_{101})$ should be its current estimate of $90$: before considering $\pi_{101}$, $\GG$ should not be able to predict how $\pi_{101}$ will update its estimate. This is analogous to aforementioned martingale property of Bayesian reasoners.)

The \emph{error orthogonality} property is a more sophisticated version of the iterated estimation property (though not a strict generalization).\footnote{Error orthogonality only generalizes iterated estimation if certain additional assumptions are made about $\GG$: for example, one may need to assume that for all expressions $Z$ and sets of arguments $\Pi_1, \Pi_2$, we have $\GG(Z G(1 \mid \Pi_1) \mid \Pi_2) = G(Z \mid \Pi_2)$. Note that this property does not follow from linearity and correctly estimating constants.} Informally, error orthogonality states that the error of $\GG$'s estimate of $Y$ given $\Pi$ should not be predictable from $\GG$'s estimate of any other quantity (given $\Pi$ or a subset thereof). Formally, error orthogonality states that for all expressions $X, Y$ and for all sets of arguments $\Pi$ and $\Pi_1, \Pi_2 \subseteq \Pi$, we have
\begin{equation} \label{eq:error_orth}
    \GG((Y - \GG(Y \mid \Pi)) \cdot \GG(X \mid \Pi_1) \mid \Pi_2) = 0.
\end{equation}
In other words, the outer $\GG$ is tasked with estimating the subjective covariance\footnote{See Section~\ref{sec:iterated_estimation} for a discussion of what we mean by ``subjective covariance.''} between two quantities: the error in $\GG$'s estimate of $Y$ given a set of arguments $\Pi$, and $\GG$'s estimate of $X$ given a smaller set of arguments $\Pi_1$. The error orthogonality property states that this subjective covariance must be zero.

If the error orthogonality property were to not hold, it would mean that adding some constant multiple of $\GG(X \mid \Pi_1)$ to $\GG(Y \mid \Pi)$ would produce an improved estimate of $Y$. In other words: knowing $\GG(X \mid \Pi_1)$ makes the error of $\GG(Y \mid \Pi)$ predictable.

Our discussion of iterated estimation and error orthogonality will lead naturally to another approach to formalizing the principle of unpredictable errors, which we call the \emph{objective} approach (Section~\ref{sec:multiacc}). We define a new class of properties of $\GG$, which we call \emph{accuracy} properties. Concretely, iterated estimation closely resembles \emph{1-accuracy,} while error orthogonality closely resembles \emph{self-accuracy.} The key difference is that while iterated estimation and error orthogonality concern $\GG$'s estimates of its own outputs ($\GG$'s errors are \emph{subjectively} unpredictable to $\GG$), accuracy properties concern the expected value of $\GG$'s outputs over a known distribution ($\GG$'s errors are \emph{objectively} unpredictable over a distribution). We summarize these properties in Table~\ref{table:error_unpredictability}.

\begin{table}[ht]
    \centering
    \renewcommand{\arraystretch}{1.2}
    \textbf{\large Properties formalizing the principle of unpredictable errors} \\[0.5em]
    \begin{tabular}{c|c}
        \textbf{Subjective properties} & \textbf{Objective properties} \\
        \hline
        Iterated estimation: for $\Pi' \subseteq \Pi$, & 1-accuracy: \\
        $\GG(\GG(Y \mid \Pi) \mid \Pi') = \GG(Y \mid \Pi')$ & $\EE[Y \sim \mathcal{D}]{Y - \GG(Y \mid \Pi)} = 0$ \\ 
        \hline
        Error orthogonality: for $\Pi_1, \Pi_2 \subseteq \Pi$, & Self-accuracy: for $\Pi' \subseteq \Pi$, \\
        $\GG((Y - \GG(Y \mid \Pi)) \cdot \GG(X \mid \Pi_1) \mid \Pi_2) = 0$ & $\EE[Y \sim \mathcal{D}]{(Y - \GG(Y \mid \Pi')) \cdot \GG(Y \mid \Pi)} = 0$ \\
    \end{tabular}
    \caption{Summary of correspondences between subjective properties concerning the unpredictability of $\GG$'s errors and the corresponding objective properties. Note that 1-accuracy is a sub-case of the more general $X$-accuracy property (equation~\ref{eq:g_acc}).}
    \label{table:error_unpredictability}
\end{table}

\subsection{Outline of this work}
In Section~\ref{sec:iterated_estimation}, we motivate and discuss the subjective approach to formalizing the principle of unpredictable errors. We also discuss challenges with the approach. Concretely, the iterated estimation and error orthogonality properties are meant to help guide a search for a ``reasonable'' heuristic estimator; however, the properties involve $\GG$'s estimate of its own output, which poses two challenges. First, $\GG$ might be able to satisfy the iterated estimation property simply by treating input expressions of the form $\GG(Y \mid \Pi)$ as a special case, effectively satisfying the iterated estimation property ``by fiat.'' Second, reasoning about $\GG$'s estimate of its own output may be unwieldy, because $\GG$ is likely to be a nontrivial algorithm.

In light of these challenges, in Section~\ref{sec:multiacc} we take inspiration from the subjective approach, and introduce the objective approach, which -- loosely speaking -- requires that $\GG$ have unpredictable errors over a distribution of possible inputs. Concretely, given a distribution $\mathcal{D}$ over mathematical expressions $Y$ and a predictor $X$, we define an estimator $f(Y)$ to be \emph{$X$-accurate} with respect to $Y$ if $\EE[\mathcal{D}]{(Y - f(Y))X} = 0$: in other words, the error of $f$'s estimate of $Y$ is uncorrelated with $X$ over $\mathcal{D}$. We use this notion of accuracy to define accuracy properties of heuristic estimators over a distribution.

We explore accuracy in the context of two case studies. First, in Section~\ref{sec:hafnian}, we consider the problem of estimating the expected product of jointly normal random variables. We ask whether it is possible to efficiently compute an estimator of the expected product that is $X$-accurate for a small and simple collection of predictors $X$. We present a formal computational barrier: we show that computing an accurate \emph{linear} estimator is $\mathsf{\#P}$-hard. We further conjecture that computing \emph{any} accurate estimator is computationally intractable. We then define and discuss \emph{approximately} accurate estimation. We find that approximately accurate estimates are efficiently computable under certain assumptions about the predictors $X$, but leave the question open in the general case.

Our second case study (Section~\ref{sec:perm}) is estimating the permanent of a matrix. We consider three predictors for the permanent, and show that an accurate estimator with respect to the three predictors can be obtained via linear regression. However, this estimator fails certain sanity checks (it sometimes outputs negative estimates for the permanent of positive matrices). We find that, although there is a natural estimator in terms of the three predictors that \emph{does} pass this sanity check, it is not accurate with respect to the predictors. Together, the results of our two case studies lead us to conclude that the objective approach to formalizing the principle of unpredictable errors is unlikely to succeed.

In Section~\ref{sec:revisiting}, we put our discussion of accuracy in context by revisiting our desiderata for $\GG$. Our rejection of the objective approach leads us to pose the following question: is there a formalization of the principle of unpredictable errors that accords with our intuitive understanding of how heuristic estimators ought to behave?

Finally, in Section~\ref{sec:conclusion}, we discuss a potential application of heuristic estimation to understanding the behavior of neural networks. We also discuss the main challenges ahead for constructing a reasonable heuristic estimator and briefly describe a new perspective on heuristic explanation. In the context of neural networks, this perspective views heuristic arguments as distributional models of a neural network's internal activations.

\subsection{Related work}
This work builds on \citet{christiano2022formalizing}, which asked whether it is possible to formalize the heuristic arguments and heuristic estimation. The authors posited that the foundational principle for heuristic estimation is a \emph{presumption of independence:} two expressions should be presumed independent, unless there is an argument to the contrary. Our work is also closely related to \citet[Chapter 9]{neyman2024algorithmic}, which explores desiderata for heuristic arguments (particularly linearity and respect for proofs) in the context of boolean circuits. By contrast, our work focuses on exploring and developing a new desideratum that aims to capture the notion that a heuristic estimator ought not to be able to predict its own errors.

Besides this line of work, little prior work attempts to formalize heuristic estimation. Perhaps the closest is \citet{tao12}, which also explores the presumption of independence as the basis for heuristic argumentation in mathematics. The author uses the presumption of independence to heuristically justify the ABC conjecture from number theory.

Our work also bears resemblances to \citet{barak15, barak16}. The author explores whether there is an estimation algorithm whose estimate for a given quantity appears reasonable to a broad class of formal ``observers.'' These observers are analogous to heuristic arguments in our setting. However, while Barak explores estimation that appears reasonable to a large but pre-defined class of observers, we are interested in an estimation algorithm that appears reasonable to a particular set of observers (heuristic arguments) that are given as input. This narrower goal may allow the estimation algorithm to satisfy a much broader range of possible observers.

\citet{garrabrant16induction} explore the assignment of probabilities to logical statements (such as the probability that the trillionth digit of $\pi$ is $7$). The authors construct a \emph{logical inductor,} an algorithm that keeps track of probabilities for every logical statement. Each day, the logical inductor is presented with a proof of a logical fact, and the inductor updates its probabilities in light of the proof. The inductor's strategy resembles online learning: it assigns weights to each algorithm that assigns probabilities to logical statements, and updates the weights in light of the proofs that it receives. While this work bears resemblance to ours, we are ultimately concerned with \emph{deductive} reasoning about the values of mathematical quantities (or the truth values of logical statements) based on heuristic arguments. By contrast, a logical inductor uses \emph{inductive} reasoning, because it assigns probabilities to logical statements by placing higher weights on algorithms that have done well so far.

Finally, \citet{gowers23coincidence} posits a \emph{no-coincidence principle:} ``If an apparently outrageous coincidence happens in mathematics, then there is a reason for it.'' This intuition is an important motivation for our overall pursuit of heuristic estimation. In our terms, the no-coincidence principle states that every mathematical expression can be adequately ``explained'' to $\GG$ with some heuristic argument. See \citet{christiano2022formalizing} for further discussion.

\section{The subjective properties: Iterated estimation and error orthogonality} \label{sec:iterated_estimation}
\subsection{Motivation and definitions}
We begin by defining and motivating the iterated estimation property.

\begin{defin}
    A heuristic estimator $\GG$ satisfies the \emph{iterated estimation} property if for all expressions $Y$ and for all sets of arguments $\Pi$ and $\Pi' \subseteq \Pi$, we have
    \begin{equation} \tag{\ref{eq:iterated_est}}
        \GG(\GG(Y \mid \Pi) \mid \Pi') = \GG(Y \mid \Pi').
    \end{equation}
\end{defin}

\begin{example}
    Let $Y, \pi_{101}, \pi_{102}$ be as in our running example from Section~\ref{sec:example}. Let $\Pi = \{\pi_{101}, \pi_{102}\}$ and $\Pi' = \{\pi_{101}\}$. We have $\GG(Y \mid \pi_{101}) = 90.5$ (see equation~\ref{eq:pi_101}). The iterated estimation property states that $\GG(\GG(Y \mid \pi_{101}, \pi_{102}) \mid \pi_{101})$ is also equal to $90.5$. In other words, after $\GG$ has learned $\pi_{101}$ (but not yet $\pi_{102}$), its estimate of what its belief of $Y$ \emph{will be} after learning $\pi_{102}$ is its current estimate of $Y$, namely $90.5$.\footnote{Why shouldn't $\GG$ process and incorporate $\pi_{102}$ when estimating what it will believe once it sees $\pi_{102}$? As we will see, processing and merging arguments is in general very nontrivial. We only want $\GG$ to incorporate the arguments that it is given, and not attempt to guess at what its estimate would be if it were to incorporate additional arguments.}
\end{example}

The name ``iterated estimation'' comes from the \emph{law of iterated expectations.} In its simplest form, this law states that for random variables $X, Y$, we have $\EE{\EE{X \mid Y}} = \EE{X}$. More generally and more formally, the law states that on a probability space $(\Omega, \mathcal{F}, \PP)$ with $\sigma$-algebras $\mathcal{H}' \subseteq \mathcal{H} \subseteq \mathcal{F}$, for any integrable random variable $Y$ we have
\begin{equation} \label{eq:iterated_exp}
    \EE{\EE{Y \mid \mathcal{H}} \mid \mathcal{H}'} = \EE{Y \mid \mathcal{H}'},
\end{equation}
where $=$ denotes equality of random variables on all of $\Omega$.

Intuitively, $\mathcal{H}$ is finer than $\mathcal{H}'$, so $\EE{Y \mid \mathcal{H}'}$ is the expectation of $Y$ given some partial information, and $\EE{Y \mid \mathcal{H}}$ is the expectation of $Y$ given more information. Thus, the law of iterated expectations states if someone has partial information about $Y$, then their estimate of what their estimate of $Y$ would be if they had additional information should be their current estimate of $Y$. Put otherwise, they should not be able to predict how they will update their estimate upon learning additional information.

The intuition behind equation~\ref{eq:iterated_est} is essentially the same. $\GG(Y \mid \Pi')$ is $\GG$'s estimate of $Y$ given some set of arguments, and $\GG(Y \mid \Pi)$ is $\GG$'s estimate of $Y$ given those arguments plus additional ones. Thus, in the expression $\GG(\GG(Y \mid \Pi) \mid \Pi')$, the outer $\GG$ estimates what its estimate of $Y$ would be if it considered the additional arguments in $\Pi$, but the outer $\GG$ does not itself consider those arguments. Its estimate of $\GG(Y \mid \Pi)$ should thus be equal to its estimate of $Y$ given the arguments it does consider, i.e.\ $\GG(Y \mid \Pi_1)$. Put otherwise, $\GG$ should not be able to predict how it will update upon considering the arguments in $\Pi \setminus \Pi_1$.\\

There is a stronger form of the law of iterated expectations, which does not have a canonical name but which we will call the projection law:\footnote{Random variables with finite second moments form a Hilbert space with inner product $\angles{X, Y} := \EE{XY}$. The expectation of a random variable $Y$ conditioned on a sigma-algebra $\mathcal{H}$ corresponds to the orthogonal projection of $Y$ onto the subspace of $\mathcal{H}$-measurable random variables (see e.g.\ \citet{sidak57relations}). The projection law can be derived from this fact, hence our choice of name.}
\begin{prop}[{Projection law of conditional expectations, see e.g.\ \citet[Chapter 3]{moshayedi2022conditional}}] \label{prop:projection_law}
    Let $(\Omega, \mathcal{F}, \PP)$ be a probability space with $\sigma$-sub-algebras $\mathcal{H}' \subseteq \mathcal{H} \subseteq \mathcal{F}$. Let $X, Y$ be random variables satisfying $\EE{X^2}, \EE{Y^2} < \infty$. Then
    \begin{equation} \label{eq:projection_law}
        \EE{(Y - \EE{Y \mid \mathcal{H}}) \cdot \EE{X \mid \mathcal{H}} \mid \mathcal{H}'} = 0.
    \end{equation}
\end{prop}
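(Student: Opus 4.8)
The plan is to reduce equation~\ref{eq:projection_law} to two standard facts about conditional expectation: the tower property (equation~\ref{eq:iterated_exp}) and the rule that $\mathcal{H}$-measurable factors may be pulled outside of $\EE{\cdot \mid \mathcal{H}}$. The key move is to condition on the \emph{finer} $\sigma$-algebra $\mathcal{H}$ first, and only afterwards on $\mathcal{H}'$.

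Concretely, I would first observe that integrability is not an issue: since $\EE{X^2}, \EE{Y^2} < \infty$ and conditional expectation is a contraction on $L^2$, both $\EE{X \mid \mathcal{H}}$ and $Y - \EE{Y \mid \mathcal{H}}$ lie in $L^2(\Omega, \mathcal{F}, \PP)$, so their product lies in $L^1$ by Cauchy--Schwarz and all conditional expectations below are well-defined. Then, because $\mathcal{H}' \subseteq \mathcal{H}$, the tower property gives
\[
    \EE{(Y - \EE{Y \mid \mathcal{H}}) \cdot \EE{X \mid \mathcal{H}} \mid \mathcal{H}'} = \EE{\EE{(Y - \EE{Y \mid \mathcal{H}}) \cdot \EE{X \mid \mathcal{H}} \mid \mathcal{H}} \mid \mathcal{H}'}.
\]
Now $\EE{X \mid \mathcal{H}}$ is $\mathcal{H}$-measurable, so pulling it out of the inner conditional expectation and using linearity together with the fact that $\EE{Y \mid \mathcal{H}}$ is itself $\mathcal{H}$-measurable,
\[
    \EE{(Y - \EE{Y \mid \mathcal{H}}) \cdot \EE{X \mid \mathcal{H}} \mid \mathcal{H}} = \EE{X \mid \mathcal{H}} \cdot \left( \EE{Y \mid \mathcal{H}} - \EE{Y \mid \mathcal{H}} \right) = 0 \quad \text{almost surely.}
\]
Substituting this back into the previous display, the right-hand side becomes $\EE{0 \mid \mathcal{H}'} = 0$, which is equation~\ref{eq:projection_law}.

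An essentially equivalent route is to verify directly that the constant random variable $0$ satisfies the defining property of $\EE{\cdot \mid \mathcal{H}'}$ for the integrand, i.e.\ that $\EE{(Y - \EE{Y \mid \mathcal{H}}) \cdot \EE{X \mid \mathcal{H}} \cdot \mathbbm{1}_A} = 0$ for every $A \in \mathcal{H}'$. This holds because $\mathbbm{1}_A \, \EE{X \mid \mathcal{H}}$ is a square-integrable $\mathcal{H}$-measurable random variable, and $Y - \EE{Y \mid \mathcal{H}}$ is $L^2$-orthogonal to every square-integrable $\mathcal{H}$-measurable random variable --- this is exactly the orthogonal-projection characterization of conditional expectation recalled in the footnote. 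Either way, I do not expect a genuine obstacle: the only point requiring care is the integrability bookkeeping needed to justify moving factors in and out of conditional expectations, and the finite-second-moment hypotheses are included precisely to make that step routine.
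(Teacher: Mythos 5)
Your proof is correct. Note that the paper does not actually supply a proof of this proposition---it is stated as a known fact with a citation to a textbook---so there is nothing to compare against. Your argument (tower property followed by taking out the $\mathcal{H}$-measurable factor $\EE{X \mid \mathcal{H}}$, with the $L^2$/Cauchy--Schwarz bookkeeping to justify integrability) is exactly the standard one; the alternative route you sketch, verifying directly that $0$ satisfies the defining property of $\EE{\cdot \mid \mathcal{H}'}$ via the orthogonal-projection characterization, is equally valid and in fact aligns with how the paper motivates the result in its footnote.
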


Note that equation~\ref{eq:projection_law} simplifies to equation~\ref{eq:iterated_exp} in the case where $X = 1$. Moreover, the intuition for equation~\ref{eq:projection_law} is similar to the intuition for equation~\ref{eq:iterated_exp}. Equation~\ref{eq:projection_law} states that if all you know is the partial information given by $\mathcal{H}'$, then $Y - \EE{Y \mid \mathcal{H}}$ and $\EE{X \mid \mathcal{H}}$ are uncorrelated over your uncertainty about the state of the world $\omega \in \Omega$. This is true because $Y - \EE{Y \mid \mathcal{H}}$ is your error in estimating $Y$ after learning the information given by $\mathcal{H}$, while $\EE{X \mid \mathcal{H}}$ is a random variable that only depends on the information given by $\mathcal{H}$. Thus, learning the value of $\EE{X \mid \mathcal{H}}$ cannot cause you to update your estimate of $Y - \EE{Y \mid \mathcal{H}}$ away from $0$. Again, the core intuition is that it should be impossible to predict your error (i.e.\ $Y - \EE{Y \mid \mathcal{H}}$) based on information that you already know (i.e.\ $\EE{X \mid \mathcal{H}}$).\\

Just as the iterated estimation property is inspired by the law of iterated expectations, the error orthogonality property is inspired by the projection law.
\begin{defin} \label{def:error_orthogonality}
    A heuristic estimator $\GG$ satisfies the \emph{error orthogonality} property if for all expressions $X, Y$ and for all sets of arguments $\Pi$ and $\Pi_1, \Pi_2 \subseteq \Pi$, we have
    \begin{equation} \tag{\ref{eq:error_orth}}
        \GG((Y - \GG(Y \mid \Pi)) \cdot \GG(X \mid \Pi_1) \mid \Pi_2) = 0.
    \end{equation}
\end{defin}

\begin{example}
    Let $Y, \pi_{101}$ be as in our running example from Section~\ref{sec:example}. Let $X$ be the expression $d_6(\sqrt{101})$ (whose value is $5$). We consider a few situations:
    \begin{enumerate}[label=(\arabic*)]
        \item (Example) If $\Pi = \Pi_1 = \Pi_2 = \{\pi_{101}\}$, then the outer $\GG$ (which knows that $X = 5$) would perhaps reason along the following lines: it knows that $\GG(X \mid \Pi_1) = 5$. This is a constant, so is subjectively uncorrelated with $Y - \GG(Y \mid \Pi)$. The outer $\GG$ also has the same estimate for $Y$ as for $\GG(Y \mid \Pi)$. Thus, its estimate of the entire expression is $0$.
        \item (Example) If $\Pi = \Pi_1 = \{\pi_{101}\}$ and $\Pi_2 = \emptyset$, then the outer $\GG$ does not know the exact value of $\GG(X \mid \Pi_1)$. However, it believes $\GG(X \mid \Pi_1)$ and $Y - \GG(Y \mid \Pi)$ to be subjectively uncorrelated, because it knows that $\Pi$ includes $\Pi_1$. Thus, given the outer $\GG$'s state of knowledge, the best estimate of $Y - \GG(Y \mid \Pi)$ is zero for all possible values of $\GG(X \mid \Pi_1)$. Thus, its estimate of the entire expression is $0$.
        \item (Non-example) If $\Pi = \Pi_1 = \emptyset$ and $\Pi_2 = \{\pi_{101}\}$ (thus violating the premise that $\Pi_2 \subseteq \Pi$), then the outer $\GG$'s estimate of $Y$ is $90.5$, whereas it knows that $\GG(Y \mid \Pi) = 90$ and $\GG(X \mid \Pi_1) = 4.5$. Thus, its estimate of the entire expression is $2.25$.
        \item (Non-example) If $\Pi = \Pi_2 = \emptyset$ and $\Pi_1 = \{\pi_{101}\}$ (thus violating the premise that $\Pi_1 \subseteq \Pi$), then the outer $\GG$ still does not know the exact value of $\GG(X \mid \Pi_1)$. In this situation, though, it believes $\GG(X \mid \Pi_1)$ and $Y - \GG(Y \mid \Pi)$ to be subjectively \emph{correlated.} This is because the outer $\GG$ knows for sure that $\GG(Y \mid \Pi) = 90$; meanwhile, both $Y$ and $\GG(X \mid \Pi_1)$ depend on the value of $X$, and thus $Y$ and $\GG(X \mid \Pi_1)$ are positively correlated over the outer $\GG$'s uncertainty about $X$. Thus, its estimate of the entire expression is positive.
    \end{enumerate}
\end{example}

Why would we like $\mathbb{G}$ to satisfy the error orthogonality property? Recall our comparison of heuristic estimates to subjective expected values: specifically, let us consider the output of $\GG$ as a subjective expectation given a state of knowledge. If the quantity in equation~\ref{eq:error_orth} is not zero, this means that an observer with state-of-knowledge $\Pi_2$ believes $Y - \GG(Y \mid \Pi)$ and $\GG(X \mid \Pi_1)$ to be subjectively correlated over the observer's uncertainty.\footnote{For two quantities $A$ and $B$, the quantity $\GG(AB \mid \pi) - \GG(A \mid \pi)\GG(B \mid \pi)$ represents the \emph{subjective covariance} between $A$ and $B$ according to an observer whose state of knowledge is described by $\pi$. In this case, we take $A = Y - \GG(Y \mid \Pi)$, $B = \GG(X \mid \Pi_1)$, and $\pi = \Pi_2$, and note that -- at least if $\GG$ satisfies linearity and iterated estimation -- we have $\GG(A \mid \pi) = 0$.} In other words: the observer believes that the subjective estimate of $X$ given state-of-knowledge $\Pi_1$ is predictive of the error in the subjective estimate of $Y$ given state-of-knowledge $\Pi$. However, any such prediction should have already been factored into the estimate $\GG(Y \mid \Pi)$.

In other words, just like the iterated estimation property, the error orthogonality property is about $\GG$'s errors being unpredictable to $\GG$. However, the error orthogonality property states that $\GG$'s error in estimating $Y$ given $\Pi$ should be subjectively unpredictable even given $\GG$'s estimate of some quantity $X$ based on $\Pi_1 \subseteq \Pi$. This parallels the sense in which the projection law generalizes the law of iterated expectations.

\subsection{Challenges with the subjective approach} \label{sec:challenges}
Although there are reasons to find the iterated estimation and error orthogonality properties compelling, there are challenges with using these properties as stated to seek a reasonable heuristic estimator. These challenges come primarily from the fact that the properties concern $\GG$'s estimates of its own output.

The first challenge: it seems plausible that the iterated estimation and error orthogonality properties could be satisfied ``by fiat'': for example, $\GG$ could check whether it is estimating the quantity $\GG(Y \mid \Pi)$ given $\Pi' \subseteq \Pi$, and then -- if so -- simply compute $\GG(Y \mid \Pi')$ and output the result. Although this behavior would satisfy the iterated estimation property, we do not want $\GG$ to special-case expressions of this form. Instead, we want $\GG$ to satisfy the iterated estimation property as a consequence of its more general behavior.\footnote{As an analogy, consider a proof verifier $V$ that takes as input a mathematical statement $x$ and a purported proof $\pi$, and outputs $1$ (accept) or $0$ (reject) depending on whether $\pi$ proves $x$. Let $s(x, \pi)$ be the statement ``If $V(x, \pi) = 1$, then $x$.'' For every $(x, \pi)$, there is a proof $\pi'$ of $s(x, \pi)$ (specifically: if $V(x, \pi) = 1$, then $\pi$ proves $x$ and thus $s(x, \pi)$; and if $V(x, \pi) = 0$, then the computational trace of $V$ on $(x, \pi)$ shows that $V(x, \pi) = 0$ and thus proves $s(x, \pi)$). However, $V$ should not treat the input $(s(x, \pi), \pi')$ as a special case; instead, $V$ should verify that $\pi'$ proves $s(x, \pi)$ just as it would verify any other proof.}

It is unclear whether this special-casing approach would work if $\GG$ also needed to satisfy other properties, such as respect for proofs (see Section~\ref{sec:new_properties}). The question of whether the iterated estimation and error orthogonality properties meaningfully constrain the solution space of heuristic estimators that also satisfy linearity and respect for proofs -- or whether a heuristic estimator that satisfies linearity and respect for proofs can be trivially modified to satisfy these two new properties via special-casing -- is an intriguing direction for future work.

The second challenge: the fact that these two properties concern $\GG$'s estimates of its own output makes it difficult to use these properties to reason about $\GG$. If our goal is to find a reasonable heuristic estimator $\GG$, it is most useful to have properties that pin down $\GG$'s outputs on simple inputs. The iterated estimation and error orthogonality properties are not helpful in this regard, because the simplest possible equality that is derivable from either property still involves a mathematical expression that includes the code of $\GG$ as part of the expression. Furthermore, without knowing $\GG$'s code, a constraint that involves $\GG$'s behavior on its own code is less useful.

For these two reasons, we are interested in more grounded variants of the iterated estimation and error orthogonality properties: ones that still capture the key intuition that $\GG$'s errors ought not be predictable, but that do not involve nested $\GG$'s. In the next section, we turn to one such approach.

\section{Accuracy as an objective measure of error unpredictability} \label{sec:multiacc}
\subsection{Motivation and definitions} \label{sec:multiacc_intro}
As we have discussed, the iterated estimation and error orthogonality properties state that $\GG$'s errors should be \emph{subjectively} unpredictable, i.e.\ unpredictable to $\GG$ itself. In this section, we discuss as an alternative property that we call \emph{accuracy} with respect to a predictor $X$, or \emph{multiaccuracy} with respect to a set of predictors $S$. (Multi)accuracy states that $\GG$'s errors should be \emph{objectively} unpredictable: that is, zero-mean over a specified distribution of expressions that are similar to the one being estimated.

The term ``multiaccuracy'' originates in the algorithmic fairness literature, where it is used to describe a predictor that appears unbiased to a given set of statistical tests \citep{hebert18multicalibration, kim19multiaccuracy}. We adapt this definition for our purposes.

\begin{defin} \label{def:multiacc}
    Let $\mathcal{Y}$ be a space of real-valued mathematical expressions, and let $\mathcal{D}$ be a probability distribution over $\mathcal{Y}$ such that $\EE[Y \sim \mathcal{D}]{Y^2} < \infty$.\footnote{When taking expected values as in Equation~\ref{eq:multiacc}, we will elide the distinction between the expression $Y$ and its value.} Let $X: \mathcal{Y} \to \RR$ be a random variable such that $\EE[Y \sim \mathcal{D}]{X^2} < \infty$. An estimator\footnote{For now, we will assume that $f$ is a deterministic function of $Y$. In Section~\ref{sec:approx}, we will discuss sampling-based estimators.} $f: \mathcal{Y} \to \RR$ is \emph{$X$-accurate} over $\mathcal{D}$ if
    \begin{equation} \label{eq:multiacc}
        \EE[Y \sim \mathcal{D}]{(Y - f(Y)) X} = 0.
    \end{equation}
    We say that $f$ is \emph{self-accurate} over $\mathcal{D}$ if $f$ is $f$-accurate over $\mathcal{D}$. For a set $S$ of random variables, we say that $f$ is \emph{$S$-multiaccurate} over $\mathcal{D}$ if $f$ is $X$-accurate over $\mathcal{D}$ for all $X \in S$.
\end{defin}

\begin{example} \label{ex:acc}
    Let $\mathcal{Y}$ be the space of expressions of the form $2 \cdot c_1 + 3 \cdot c_2$, where $c_1, c_2 \in \RR$. (For example, the expression $2 \cdot 0.7 + 3 \cdot -5$ belongs to $\mathcal{Y}$.) Let $\mathcal{D}$ be the distribution over $\mathcal{Y}$ obtained by selecting $c_1, c_2$ independently from $\mathcal{N}(0, 1)$. In Figure~\ref{fig:accuracy_venn}, we classify several estimators of $Y \in \mathcal{Y}$ based on whether they are $1$-accurate, $c_1$-accurate, and self-accurate over $\mathcal{D}$.

    \begin{figure}[ht]
        \centering
        \includegraphics[scale=1]{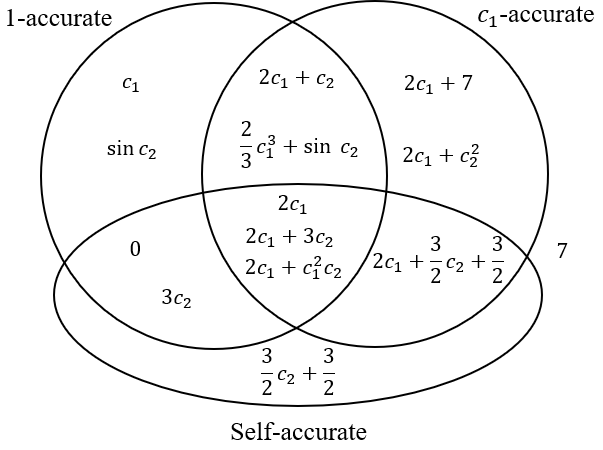}
        \caption{Let $\mathcal{Y}$ be the space of expressions of the form $2 \cdot c_1 + 3 \cdot c_2$, where $c_1, c_2 \in \RR$, and let $\mathcal{D}$ be the distribution over $\mathcal{Y}$ obtained by selecting $c_1, c_2$ independently from $\mathcal{N}(0, 1)$. This figure classifies estimators of $Y \in \mathcal{Y}$ based on whether they are $1$-accurate, $c_1$-accurate, and self-accurate over $\mathcal{D}$.}
        \label{fig:accuracy_venn}
    \end{figure}

    For example, the estimator $f(Y) = 3c_2$ is $1$-accurate over $\mathcal{D}$, meaning that it has the correct mean:
    \[\EE[Y \sim \mathcal{D}]{(Y - f(Y)) \cdot 1} = \EE[c_1, c_2 \sim \mathcal{N}(0, 1)]{2c_1} = 0.\]
    It is also self-accurate:
    \[\EE[Y \sim \mathcal{D}]{(Y - f(Y)) \cdot f(Y)} = \EE[c_1, c_2 \sim \mathcal{N}(0, 1)]{2c_1 \cdot 3c_2} = 0.\]
    However, it is not $c_1$-accurate:
    \[\EE[Y \sim \mathcal{D}]{(Y - f(Y)) \cdot c_1} = \EE[c_1, c_2 \sim \mathcal{N}(0, 1)]{2c_1 \cdot c_1} = 2 \neq 0.\]
\end{example}

Although the expressions in the space $\mathcal{Y}$ of Example~\ref{ex:acc} are easy to compute, for the remainder of this work, $\mathcal{Y}$ will typically contain expressions whose values cannot be efficiently estimated. (For example, $Y$ might be the permanent of a matrix -- see Section~\ref{sec:perm}.) Meanwhile, $X$ is best thought of an efficiently-computable predictor\footnote{In the sense of predictor variables in statistical models.} of $Y$, and $f(Y)$ as an efficiently-computable estimator of $Y$. While the type signatures of $X$ and $f$ are the same -- they are random variables on $\mathcal{Y}$, or, put otherwise, functions from $\mathcal{Y}$ to $\RR$ -- they sometimes play different roles. In particular, the predictor $X$ (or set of predictors $S$) is typically given, and we are interested in finding an estimator $f(Y)$ that is $X$-accurate (or $S$-multiaccurate).

If $X = 1$, then equation~\ref{eq:multiacc} reduces to the condition that $f$ has the correct mean: $\EE[\mathcal{D}]{f(Y)} = \EE[\mathcal{D}]{Y}$. If $X = f(Y)$, then equation~\ref{eq:multiacc} reduces to $\EE[\mathcal{D}]{(Y - f(Y))f(Y)} = 0$, i.e.\ that the error of $f$ is uncorrelated with the value of $f$. As Proposition~\ref{prop:multiacc} will show, an estimator that does not satisfy this condition can be improved (from the standpoint of expected squared error) simply by being multiplied by an appropriate constant, whereas an estimator that satisfies this condition cannot be improved in this way.

\begin{remark} \label{remark:nontransitive}
    Accuracy is not transitive: if $f$ is $g$-accurate and $g$ is $h$-accurate, it does not follow that $f$ is $h$-accurate. One example: $Y \sim \mathcal{N}(0, 1)$, $f(Y) = Y + 1$, $g(Y) = Y$, and $h(Y) = 1$.
\end{remark}

\begin{prop} \label{prop:multiacc}
    Let $\mathcal{D}$, $S$, $f$ be as in Definition~\ref{def:multiacc}. The following are equivalent:
    \begin{enumerate}[label=(\arabic*)]
        \item \label{item:multiacc} $f$ is $S$-multiaccurate over $\mathcal{D}$.
        \item \label{item:onex} For all $X \in S$,  among all estimators that differ from $f$ by a constant multiple of $X$, $f$ itself is optimal. That is,
        \begin{equation} \label{eq:onex}
            \arg \min_c \EE[Y \sim \mathcal{D}]{(Y - (f(Y) + cX))^2} = 0.
        \end{equation}
        \item \label{item:multix} For all $n$, for all $X_1, \dots, X_m \in S$, among all estimators that differ from $f$ by a linear combination of the $X_i$'s, $f$ itself is optimal. That is,
        \begin{equation} \label{eq:multix}
            \arg \min_{(c_1, \dots, c_m)} \EE[Y \sim \mathcal{D}]{\parens{Y - \parens{f(Y) + \sum_{i = 1}^m c_i X_i}}^2} = \vect{0}.
        \end{equation}
    \end{enumerate}
\end{prop}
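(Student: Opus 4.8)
The plan is to prove the chain of equivalences $\ref{item:multiacc} \Leftrightarrow \ref{item:multix} \Rightarrow \ref{item:onex} \Rightarrow \ref{item:multiacc}$, where the last implication is the special case $m = 1$ of $\ref{item:multix}$ and $\ref{item:multix} \Rightarrow \ref{item:onex}$ is similarly trivial (restrict to a single predictor), so the only real content is the equivalence between multiaccuracy and the optimality condition $\ref{eq:multix}$. First I would fix $X_1, \dots, X_m \in S$ and consider the quadratic function $Q(c_1, \dots, c_m) := \EE[Y \sim \mathcal{D}]{(Y - f(Y) - \sum_i c_i X_i)^2}$, which is finite for all $c$ by the second-moment hypotheses on $Y$, $f(Y)$ (note $f(Y) \in S$ or at least we only need finiteness, which follows since $f$ is built to have finite error variance in context), and the $X_i$. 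Expanding the square, $Q$ is a genuine quadratic polynomial in $c$ with Hessian $2G$ where $G_{ij} = \EE{X_i X_j}$ is positive semidefinite, so $Q$ is convex, and $\vect{0}$ is a global minimizer if and only if the gradient vanishes at $\vect{0}$.

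Next I would compute $\partial Q / \partial c_k$ at $c = \vect{0}$: differentiating under the expectation (justified by dominated convergence, using the finite second moments to bound the difference quotients) gives $\partial Q/\partial c_k |_{c=0} = -2\,\EE[Y \sim \mathcal{D}]{(Y - f(Y)) X_k}$. Hence the gradient of $Q$ at $\vect{0}$ vanishes precisely when $\EE{(Y - f(Y)) X_k} = 0$ for every $k$. Since the $X_k$ ranged over an arbitrary finite subset of $S$, this says exactly that $f$ is $S$-multiaccurate, giving $\ref{item:multiacc} \Leftrightarrow \ref{item:multix}$. For the direction $\ref{item:multiacc} \Rightarrow \ref{item:multix}$ one also needs that $\vect{0}$ is not just a critical point but the \emph{unique} argmin as written in $\ref{eq:multix}$; here I would note that if $G$ is singular then the argmin is a whole affine subspace, so strictly speaking the statement should be read as "$\vect{0}$ is an argmin" (or one restricts to linearly independent $X_i$ in $L^2(\mathcal{D})$) — I would flag this minor degeneracy and interpret $\arg\min$ as the \emph{set} of minimizers, in which case $\ref{item:multiacc}$ is equivalent to $\vect{0}$ belonging to that set, and the argument goes through unchanged.

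The main obstacle — really the only non-routine point — is the interchange of differentiation and expectation, i.e.\ justifying that $Q$ is differentiable with the claimed gradient rather than merely that the finite-dimensional quadratic-form expansion is valid. The clean way around it is to avoid calculus entirely: expand $Q(c) = \EE{(Y-f(Y))^2} - 2\sum_k c_k \EE{(Y-f(Y))X_k} + \sum_{j,k} c_j c_k \EE{X_j X_k}$ directly using linearity of expectation and the finiteness of all the relevant second moments (each cross term is bounded by Cauchy–Schwarz), observe this is an honest polynomial in $c$, and then argue purely algebraically: a convex quadratic $c \mapsto a - 2 b^\top c + c^\top G c$ with $G \succeq 0$ has $\vect{0}$ among its minimizers iff $b \in \mathrm{col}(G)$ and $\vect{0}$ solves $Gc = b$, which since $\vect 0$ solves it forces $b = \vect 0$, i.e.\ $b_k = \EE{(Y-f(Y))X_k} = 0$ for all $k$; conversely $b = \vect 0$ makes $\vect 0$ a minimizer immediately. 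This sidesteps measure-theoretic subtleties and makes the proof a short computation plus the elementary fact about minimizers of PSD quadratics.
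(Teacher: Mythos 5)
Your proof is correct and uses essentially the same argument as the paper: both reduce \ref{item:multiacc}, \ref{item:onex}, and \ref{item:multix} to the observation that $Q(\vect{c}) := \EE[\mathcal{D}]{\bigl(Y - f(Y) - \sum_i c_i X_i\bigr)^2}$ is a convex quadratic (PSD Hessian $2G$, $G_{ij} = \EE{X_i X_j}$) whose gradient at $\vect{0}$ has $k$-th component $-2\,\EE{(Y - f(Y))X_k}$, so $\vect{0}$ is a global minimizer iff those expectations vanish. The paper packages the implications slightly differently — it proves \ref{item:onex}$\Rightarrow$\ref{item:multiacc} via the closed-form minimizer $c^* = \EE{(Y-f)X}/\EE{X^2}$ (with a separate case for $X = 0$ a.s.) and \ref{item:multiacc}$\Rightarrow$\ref{item:multix} via the gradient-at-zero criterion, whereas you derive the biconditional \ref{item:multiacc}$\Leftrightarrow$\ref{item:multix} in one pass and deduce \ref{item:onex} as the $m=1$ case — but the underlying computation is identical. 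Your flag about the $\arg\min$ notation being ambiguous when the Gram matrix is singular (linearly dependent $X_i$, or $X = 0$) is a fair observation: the paper's proof of \ref{item:multiacc}$\Rightarrow$\ref{item:multix} implicitly reads ``$\arg\min = \vect{0}$'' as ``$\vect{0}$ is a global minimizer,'' exactly as you suggest, and your gradient-based treatment handles the $X = 0$ degeneracy without the paper's explicit case split.
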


\begin{proof}
    Clearly, \ref{item:multix} implies \ref{item:onex}.
    
    To see that \ref{item:onex} implies \ref{item:multiacc}, assume \ref{item:onex}. Choose $X \in S$. Either $X$ is zero almost surely (in which case equation~\ref{eq:multiacc} is satisfied for $X$), or the expression in equation~\ref{eq:onex} is a quadratic function of $c$ with a minimum at $c = \frac{\EE[\mathcal{D}]{(Y - f(Y))X}}{\EE[\mathcal{D}]{X^2}}$. By assumption, this quantity is zero, which means that $\EE[\mathcal{D}]{(Y - f(Y))X} = 0$.
    
    To see that \ref{item:multiacc} implies \ref{item:multix}, assume \ref{item:multiacc}. Choose $X_1, \dots, X_m \in S$. The expression in equation~\ref{eq:multix} is a quadratic function of $c_1, \dots, c_m$ that is never negative, which means that it attains a global minimum at any point $\vect{c} = (c_1, \dots, c_m)$ where it has gradient zero. Thus, it suffices to show that the expression has gradient $\vect{0}$ at $\vect{c} = \vect{0}$. For any $i$, the derivative of the expression with respect to $c_i$ is
    \[\EE[\mathcal{D}]{2 \parens{\sum_i c_iX_i + f(Y) - Y} X_i}.\]
    At $\vect{c} = \vect{0}$, this simplifies to $2 \EE[\mathcal{D}]{(f(Y) - Y)X_i}$, which is indeed zero by assumption.
\end{proof}

Proposition~\ref{prop:multiacc} suggests important similarities between multiaccuracy and linear regression. In statistics, the \emph{ordinary least squares} (OLS) estimator of a random variable $Y$ in terms of predictors $X_1, \dots, X_m$ is the linear combination $\sum_i \beta_i X_i$ that minimizes the expected squared distance from $Y$.\footnote{Often, the OLS estimator is defined as the best \emph{affine} combination of $X_1, \dots, X_m$, i.e.\ the best estimator of the form $\alpha + \sum_i \beta_i X_i$. In our discussions, we will not allow arbitrary constants. However, often one of the $X_i$'s will itself be a constant (e.g.\ in discussions of $1$-accuracy). When one of the predictors is a constant, the two OLS notions are the same.} See e.g.\ \citet{hastie09elements} for an introduction to OLS regression.

In particular, the OLS estimator of $Y$ in terms of $X_1, \dots, X_m$ satisfies property~\ref{item:multix} above (by definition), and is thus $S$-multiaccurate (where $S = \{X_1, \dots, X_m\}$). While there can be many $S$-multiaccurate estimators of $Y$ over $\mathcal{D}$ (see Example~\ref{ex:acc}), the OLS estimator is the only $S$-multiaccurate estimator that is a linear combination of $X_1, \dots, X_m$.\footnote{If $X_1, \dots, X_m$ are linearly dependent, then there may be multiple ways of expressing this estimator as a linear combination of $X_1, \dots, X_m$.} Furthermore, the OLS estimator is always self-accurate: since the OLS estimator cannot be improved by adding any linear combination of the $X_i$'s, then certainly it cannot be improved by adding a multiple of itself (as the OLS estimator is itself a linear combination of the $X_i$'s).\\

The notions introduced above give a potential path forward for defining properties of heuristic estimators similar to iterated estimation and error orthogonality, but without the need for nested $\GG$'s. The basic idea is to substitute $\GG(Y \mid \Pi)$ for $f(Y)$ in equation~\ref{eq:multiacc}:
\[\EE[Y \sim \mathcal{D}]{(Y - \GG(Y \mid \Pi))X} = 0.\]
In order to make sense of this expression, we need to somewhat reconceptualize the notion of a heuristic argument. In Section~\ref{sec:iterated_estimation}, we regarded the heuristic argument $\pi$ in the expression $\GG(Y \mid \pi)$ as a particular computation that helps to estimate $Y$ (perhaps $\pi$ partially computes $Y$). Now that we are taking an expectation over different expressions $Y$, we must allow $\pi$ to depend on $Y$. Thus, in this section, we will be thinking of $\pi$ as a computation that depends on $Y$ (i.e.\ a computer program that takes $Y \in \mathcal{Y}$ as input). When $\GG$ is given $Y$ and $\pi$ as input, it runs $\pi$ on $Y$ as part of estimating $Y$. Similarly, if $\Pi = \{\pi_1, \dots, \pi_m\}$, then $\GG(Y \mid \Pi)$ runs $\pi_1, \dots, \pi_m$ on $Y$. (See details in footnote.\footnote{We can unify this view of $\pi$ (as a program that $\GG$ runs on $Y$) with our earlier view of $\pi$ (as a partial computation) as follows: given a program $P$ that takes elements of $\mathcal{Y}$ as input, define $\pi_P(Y)$ to be the execution trace of $P$ run on $Y$. (The execution trace should include $P$'s code.) Given a set of programs $\mathcal{P} = \{P_1, \dots, P_m\}$, define $\Pi_{\mathcal{P}}(Y) := \{\pi_{P_1}(Y), \dots, \pi_{P_m}(Y)\}$. For a given $\Pi = \Pi_{\mathcal{P}}$, when we write $\EE[Y \sim \mathcal{D}]{(Y - \GG(Y \mid \Pi))X} = 0$, we mean that $\EE[Y \sim \mathcal{D}]{(Y - \GG(Y \mid \Pi_{\mathcal{P}}(Y))) X} = 0$. \label{footnote:unify_pi}})

We define (multi)accuracy for heuristic estimators as follows.

\begin{defin} \label{def:g_multiacc}
    Let $\mathcal{Y}$, $\mathcal{D}$ be as in Definition~\ref{def:multiacc}. Let $\GG$ be a heuristic estimator and $X: \mathcal{Y} \to \RR$ be a random variable. A set of heuristic arguments $\Pi_X$ \emph{makes $\GG$ be $X$-accurate} over $\mathcal{D}$ if for all $\Pi \supseteq \Pi_X$, $\GG(Y \mid \Pi)$ is an $X$-accurate estimator over $\mathcal{D}$ (in the sense of Definition~\ref{def:multiacc}) -- that is,
    \begin{equation} \label{eq:g_acc}
        \EE[Y \sim \mathcal{D}]{(Y - \GG(Y \mid \Pi))X} = 0.
    \end{equation}
    We say that $\GG$ is \emph{$X$-accurate} over $\mathcal{D}$ if there exists a short\footnote{More precisely, the computations that $\Pi_X$ requires $\GG$ to run must be polynomial-time in the time that it takes to compute $X$. Or in other words, if $\Pi_X$ corresponds to a set of programs $\mathcal{P}_X = \{P_1, \dots, P_m\}$ (as in footnote~\ref{footnote:unify_pi}), then each $\pi_{P_i}(Y)$ must have length that is polynomial in the time that it takes to compute $X$. We require this so that $\Pi_X$ is not allowed to force $\GG$ to compute $Y$ exactly.} $\Pi_X$ that makes $\GG$ be $X$-accurate over $\mathcal{D}$. For a set $S$ of random variables, we say that $\GG$ is \emph{$S$-multiaccurate} over $\mathcal{D}$ is $\GG$ is $X$-accurate over $\mathcal{D}$ for all $X \in S$.
\end{defin}

In the special case of $X = 1$, equation~\ref{eq:g_acc} is quite similar to equation~\ref{eq:iterated_est} (iterated estimation), the key difference being that the outer $\GG$ is replaced by an expectation over $\mathcal{D}$.

We also define self-accuracy for $\GG$:
\begin{defin}
    Let $\mathcal{Y}$, $\mathcal{D}$ be as in Definition~\ref{def:multiacc}. A heuristic estimator $\GG$ is \emph{self-accurate} over $\mathcal{D}$ if for every set of heuristic arguments $\Pi$, $\Pi$ makes $\GG$ be $\GG(Y \mid \Pi)$-accurate -- that is, if for all $\Pi' \supseteq \Pi$, we have
    \begin{equation} \label{eq:g_self_acc}
        \EE[Y \sim \mathcal{D}]{(Y - \GG(Y \mid \Pi')) \cdot \GG(Y \mid \Pi)} = 0.
    \end{equation}
\end{defin}

Equation~\ref{eq:g_self_acc} is quite similar to equation~\ref{eq:error_orth} (error orthogonality), again with the key difference being that the outer $\GG$ is replaced by an expectation over $\mathcal{D}$.

\subsection{Multiaccuracy as a constraint on argument merges} \label{sec:g_multiacc_ols}
It can be useful to think of multiaccuracy as a constraint on merging estimates based on different arguments. Concretely, given a set of predictors $S = \{X_1, \dots, X_m\}$, suppose that $\GG$ is $S$-multiaccurate over $\mathcal{D}$; let $\Pi_{X_i}$ be an argument that makes $\GG$ be $X_i$-accurate over $\mathcal{D}$. Then $\GG(Y \mid \Pi_{X_1}, \dots, \Pi_{X_m})$ is an $\{X_1, \dots, X_m\}$-multiaccurate estimator over $\mathcal{D}$: for each $i$, $\GG$'s error is orthogonal to $X_i$.

Is it possible for $\GG$ to be $S$-multiaccurate for a rich set of predictors $S$, while still running in polynomial time in the size of the arguments provided? One reason for hope is that estimates may be merged using linear regression:

\begin{prop} \label{prop:multiacc_linreg}
    Let $\mathcal{Y}$, $\mathcal{D}$ be as in Definition~\ref{def:g_multiacc}. Let $\GG$ be a heuristic estimator, and suppose that $\GG$ merges arguments using OLS regression: that is, for all sets of arguments $\Pi = \{\pi_1, \dots, \pi_m\}$, we have
    \begin{equation} \label{eq:multiacc_linreg}
        \GG(Y \mid \Pi) = \begin{pmatrix}\EE[\mathcal{D}]{Y \GG(Y \mid \pi_1)} \\ \vdots \\ \EE[\mathcal{D}]{Y \GG(Y \mid \pi_m)}\end{pmatrix}^\top M^+ \begin{pmatrix}\GG(Y \mid \pi_1) \\ \vdots \\ \GG(Y \mid \pi_m)\end{pmatrix},
    \end{equation}
    where $M^+$ is the pseudoinverse of the Gram matrix $M := \parens{\EE[\mathcal{D}]{\GG(Y \mid \pi_i) \GG(Y \mid \pi_j)}}_{i, j = 1}^m$. Then $\GG$ is self-accurate over $\mathcal{D}$.
\end{prop}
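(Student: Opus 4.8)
The plan is to recognize that, under the stated hypothesis, $\GG(Y \mid \Pi')$ is literally the ordinary-least-squares estimator of $Y$ from the predictors $\{\GG(Y \mid \pi) : \pi \in \Pi'\}$, so by Proposition~\ref{prop:multiacc} the error $Y - \GG(Y \mid \Pi')$ is uncorrelated over $\mathcal{D}$ with every linear combination of those predictors --- and, when $\Pi \subseteq \Pi'$, the estimate $\GG(Y \mid \Pi)$ is one such linear combination.

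Concretely, I would first fix an arbitrary pair of argument sets with $\Pi \subseteq \Pi'$ and, relabelling if necessary, write $\Pi' = \{\pi_1, \dots, \pi_n\}$ and $\Pi = \{\pi_1, \dots, \pi_m\}$ with $m \le n$. Set $Z_i := \GG(Y \mid \pi_i)$, viewed (via the convention of footnote~\ref{footnote:unify_pi}) as a real random variable on $\mathcal{Y}$ under $\mathcal{D}$; I take as a standing regularity assumption that each $Z_i$ has finite second moment over $\mathcal{D}$, since otherwise the Gram matrix $M$ of~\eqref{eq:multiacc_linreg} is undefined and the hypothesis is vacuous. Equation~\eqref{eq:multiacc_linreg} applied to the argument set $\Pi'$ says exactly that $\GG(Y \mid \Pi') = v^\top M^+ (Z_1, \dots, Z_n)^\top$, where $v_i = \EE[\mathcal{D}]{Y Z_i}$ and $M_{ij} = \EE[\mathcal{D}]{Z_i Z_j}$; using the symmetry of $M^+$, this is precisely the minimum-norm OLS estimator of $Y$ in terms of $Z_1, \dots, Z_n$, i.e.\ the linear combination $\sum_i \beta_i Z_i$ minimizing $\EE[\mathcal{D}]{(Y - \sum_i \beta_i Z_i)^2}$.

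Next I would invoke Proposition~\ref{prop:multiacc}: as noted in the discussion following it, the OLS estimator of $Y$ in terms of $Z_1, \dots, Z_n$ is $\{Z_1, \dots, Z_n\}$-multiaccurate over $\mathcal{D}$ (it satisfies the proposition's optimality condition by construction), so $\EE[\mathcal{D}]{(Y - \GG(Y \mid \Pi')) Z_i} = 0$ for every $i \le n$. By linearity of expectation, $\EE[\mathcal{D}]{(Y - \GG(Y \mid \Pi')) W} = 0$ for every $W$ in the linear span $V := \mathrm{span}\{Z_1, \dots, Z_n\}$. Finally, equation~\eqref{eq:multiacc_linreg} applied to the smaller set $\Pi$ exhibits $\GG(Y \mid \Pi)$ as a linear combination of $Z_1, \dots, Z_m$ (the empty combination $0$ if $\Pi = \emptyset$), so $\GG(Y \mid \Pi) \in \mathrm{span}\{Z_1, \dots, Z_m\} \subseteq V$; taking $W = \GG(Y \mid \Pi)$ yields $\EE[\mathcal{D}]{(Y - \GG(Y \mid \Pi')) \GG(Y \mid \Pi)} = 0$, which is exactly~\eqref{eq:g_self_acc}. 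Since $\Pi \subseteq \Pi'$ were arbitrary, $\GG$ is self-accurate over $\mathcal{D}$.

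I do not expect a real obstacle here; the argument is essentially bookkeeping layered on top of Proposition~\ref{prop:multiacc}. The one step that genuinely exploits the setup --- and the reason self-accuracy is stated with the containment $\Pi \subseteq \Pi'$ rather than for arbitrary pairs of argument sets --- is the last one: the predictors that define $\GG(Y \mid \Pi)$ must lie among those that define $\GG(Y \mid \Pi')$, so that $\GG(Y \mid \Pi)$ lands inside the subspace $V$ to which the error $Y - \GG(Y \mid \Pi')$ is orthogonal. A secondary point worth stating carefully is that~\eqref{eq:multiacc_linreg} still produces the orthogonal projection of $Y$ onto $V$ even when the $Z_i$ are linearly dependent in $L^2(\mathcal{D})$ and $M$ is singular: one checks that $v$ is orthogonal to $\ker M$ --- if $\sum_i c_i Z_i = 0$ almost surely then $\sum_i c_i v_i = \EE[\mathcal{D}]{Y \sum_i c_i Z_i} = 0$ --- so $M M^+ v = v$; but this is a standard fact about the Moore--Penrose pseudoinverse that can be cited rather than re-proved.
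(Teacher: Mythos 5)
Your proof is correct and takes essentially the same route as the paper: identify $\GG(Y\mid\Pi')$ as the OLS projection of $Y$ onto $\mathrm{span}\{\GG(Y\mid\pi):\pi\in\Pi'\}$, invoke Proposition~\ref{prop:multiacc} for orthogonality of the error $Y - \GG(Y\mid\Pi')$ to that span, and observe that $\GG(Y\mid\Pi)$ lies in the span because $\Pi\subseteq\Pi'$. The one thing you add over the paper's terse proof is the explicit check that the pseudoinverse formula~\eqref{eq:multiacc_linreg} really does yield the orthogonal projection even when $M$ is singular (via $v\perp\ker M$), a point the paper's proof takes for granted.
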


(Why is it important that $\GG$ is self-accurate? If $\GG$ is self-accurate, then $\GG$ is $\GG(Y \mid \pi)$-accurate for every $\pi$. Thus, if the set of predictors $\GG(Y \mid \pi)$ is rich, then $\GG$ will be multiaccurate over a rich set of predictors.)

\begin{proof}
    We must show that for all $\Pi$ and for all $\Pi' \supseteq \Pi$, $\GG(Y \mid \Pi')$ is a $\GG(Y \mid \Pi)$-accurate estimator of $Y$ over $\mathcal{D}$. To see this, note that since $\GG(Y \mid \Pi')$ is the OLS estimator of $Y$ with predictors $\{\GG(Y \mid \pi'): \pi' \in \Pi'\}$, no linear combination of $\{\GG(Y \mid \pi): \pi \in \Pi\}$ can be added to $\GG(Y \mid \Pi')$ to produce an improved estimator. By Proposition~\ref{prop:multiacc}, it follows that $\GG(Y \mid \Pi')$ is a $\GG(Y \mid \Pi)$-accurate estimator.
\end{proof}

Unfortunately, Proposition~\ref{prop:multiacc_linreg} does not provide a straightforward, computationally efficient way to merge heuristic estimates. This is because computing the Gram matrix $M$ can be $\mathsf{\#P}$-hard even in simple cases, as we will see in Section~\ref{sec:hardness}. Further, even \emph{approximating} the covariances by sampling from $\mathcal{D}$ can require exponentially many samples.\footnote{As a simple example, suppose that $Y \sim \mathcal{D}$ is parameterized by i.i.d.\ standard Gaussians, and that $\GG(Y \mid \pi_1) = \GG(Y \mid \pi_2)$ is a product of $n$ of those variables (see Section~\ref{sec:hafnian} for an example of such a $\mathcal{D}$). Then $\EE[\mathcal{D}]{\GG(Y \mid \pi_1) \GG(Y \mid \pi_2)} = 1$, but the variance of $\GG(Y \mid \pi_1) \GG(Y \mid \pi_2)$ is $3^n - 1$. Thus, in order to estimate the expectation purely by sampling, many samples are needed.}

In the next two sections, we will explore accurate estimation in two specific contexts: estimating the expected product of jointly normal random variables, and estimating the permanent of a matrix. Our goal will be to demonstrate some limits to producing estimates that satisfy the accuracy definitions that we presented in this section.

\section{Estimating the product of jointly normal random variables} \label{sec:hafnian}
In this section, we consider the problem of estimating the expected product of $n$ jointly normal random variables. Formally: given an $n \times n$ covariance matrix $\Sigma$, we would like to estimate
\begin{equation} \label{eq:prod_gaussians}
    \EE[(Z_1, \dots, Z_n) \sim \mathcal{N}(\pmb{0}, \Sigma)]{\prod_{i = 1}^n Z_i}.
\end{equation}
This problem is motivated by two considerations. On the one hand, it is one of the simplest estimation problems in which computing (or indeed approximating) the correct answer is computationally intractable. On the other hand, the problem captures the core difficulty of a natural, more general estimation problem: estimating the average output of an arithmetic circuit (a circuit with addition and multiplication gates). Addition gates in an arithmetic circuit are straightforward to handle: after all, the linearity property (see Section~\ref{sec:new_properties}) states that the estimate of a sum should equal the sum of the estimates of each summand. As we are about to see, multiplication is not so straightforward.

It turns out that the expected product in equation~\ref{eq:prod_gaussians} can be written as an exponentially large sum of products of $\frac{n}{2}$ entries of $\Sigma$ (see Theorem~\ref{thm:isserlis}). While this sum cannot be computed efficiently, we consider a class of heuristic estimates that compute the sum over a \emph{subset} of these terms. We then ask whether it is possible to \emph{accurately merge} multiple such heuristic estimates $X_1, \dots, X_m$: that is, to create an estimator that is $\{X_1, \dots, X_m\}$-multiaccurate (over a certain natural distribution)? While we do not fully resolve this question, in Section~\ref{sec:hardness} we show that even for $m = 2$ arguments, finding a multiaccurate \emph{linear} merge (an estimator of the form $\beta_1 X_1 + \beta_2 X_2$) is $\mathsf{\#P}$-hard.

This motivates relaxing our standard of accuracy from \emph{exact} accuracy to \emph{approximate} accuracy. In Section~\ref{sec:approx}, we define approximate accuracy and explore the problem of approximately merging heuristic estimates for the expected product of jointly normal random variables. We demonstrate a randomized algorithm for producing an approximately accurate estimator, although our algorithm only runs in polynomial time if certain assumptions are made about the arguments being merged (roughly speaking, that they do not overlap too strongly). We leave open the question of whether approximately accurate merges are possible in polynomial time in full generality.\\

Our discussion of this estimation problem will require the notion of \emph{pairings} of a set.

\begin{defin}
    A \emph{pairing} of a finite set $K$ is a set of $\abs{K}/2$ unordered pairs $(i, j)$ of elements of $K$, such that each element of $K$ appears in exactly one pair. (We will sometimes call $K$ the \emph{index set}.) We use $\mathcal{P}_2(K)$ to denote the set of all pairings of $K$. (If $\abs{K}$ is odd then $\mathcal{P}_2(K)$ is empty.) To simplify notation, we will write $\mathcal{P}_2(n)$ instead of $\mathcal{P}_2([n])$ for the set of pairings of $\{1, \dots, n\}$.
\end{defin}

We will use the following result, sometimes known as Isserlis' theorem.

\begin{theorem}[\cite{isserlis18product}] \label{thm:isserlis}
    Let $Z_1, \dots, Z_n$ be jointly normal, zero-mean random variables. Then
    \begin{equation} \label{eq:isserlis}
        \EE{Z_1 \dots Z_n} = \sum_{p \in \mathcal{P}_2(n)} \prod_{\{i, j\} \in p} \Cov{Z_i, Z_j}.
    \end{equation}
\end{theorem}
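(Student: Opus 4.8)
The plan is to prove Isserlis' theorem by a generating-function computation: I will identify $\EE{Z_1 \cdots Z_n}$ with a single coefficient of the Gaussian moment generating function and then expand. Write $\sigma_{ij} := \Cov{Z_i, Z_j}$ and let $\Sigma = (\sigma_{ij})$. The first step is the standard fact that $(Z_1,\dots,Z_n) \sim \mathcal{N}(\vect{0},\Sigma)$ has moment generating function
\[
M(t_1,\dots,t_n) := \EE{\exp\parens{\textstyle\sum_{i=1}^n t_i Z_i}} = \exp\parens{\tfrac12\textstyle\sum_{i,j}\sigma_{ij}t_i t_j},
\]
which is entire, so that $\EE{Z_1 \cdots Z_n}$ is exactly the coefficient of the monomial $t_1 t_2 \cdots t_n$ in the Taylor expansion of $M$ about the origin (each variable occurs to the first power, so the multi-index factorials are all $1$). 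Next I would expand $M = \sum_{k\ge 0}\frac{1}{k!}\parens{\frac12\sum_{i,j}\sigma_{ij}t_i t_j}^k$ and note that the $k$-th term is homogeneous of degree $2k$ in $t$; hence only $k = n/2$ can possibly contribute the monomial $t_1\cdots t_n$. In particular, when $n$ is odd no term contributes, both sides vanish, and the identity holds trivially (consistent with $\mathcal{P}_2(n) = \emptyset$); so assume $n$ is even.

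For the main step I would expand $\parens{\frac12\sum_{i,j}\sigma_{ij}t_i t_j}^{n/2}$ over ordered $(n/2)$-tuples of ordered pairs, so that the monomial $t_{i_1}t_{j_1}\cdots t_{i_{n/2}}t_{j_{n/2}}$ appears with coefficient $2^{-n/2}\sigma_{i_1 j_1}\cdots\sigma_{i_{n/2}j_{n/2}}$. Such a monomial equals $t_1\cdots t_n$ exactly when the indices $i_1,j_1,\dots,i_{n/2},j_{n/2}$ are all distinct and exhaust $\{1,\dots,n\}$, i.e.\ exactly when the underlying $n/2$ unordered pairs form a pairing $p \in \mathcal{P}_2(n)$. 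The combinatorial heart of the proof is the count: each pairing $p$ is represented by exactly $(n/2)!\cdot 2^{n/2}$ such ordered-and-oriented tuples (order the pairs, then orient each), and by the symmetry $\sigma_{ij} = \sigma_{ji}$ every one of them carries the same product $\prod_{\{i,j\}\in p}\sigma_{ij}$. Combining this with the overall prefactor $\frac{1}{(n/2)!}\cdot 2^{-n/2}$, the two combinatorial factors cancel and the coefficient of $t_1\cdots t_n$ in $M$ is exactly $\sum_{p\in\mathcal{P}_2(n)}\prod_{\{i,j\}\in p}\sigma_{ij}$, as claimed.

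The only place that requires care is this bookkeeping --- in particular, checking that ``self-pairs'' $\sigma_{ii}$ never arise (they would force some $t_i$ to appear with multiplicity $\ge 2$) and that the prefactors really do cancel rather than leave a spurious constant; everything else is routine, and no nondegeneracy assumption on $\Sigma$ is needed since the MGF formula holds regardless. As an alternative I would record an inductive proof via Gaussian integration by parts: for jointly Gaussian zero-mean variables and a smooth polynomially-bounded $g$, $\EE{Z_1\,g(Z_2,\dots,Z_n)} = \sum_{j=2}^n \sigma_{1j}\,\EE{\partial_j g(Z_2,\dots,Z_n)}$, and applying this with $g(z_2,\dots,z_n) = z_2\cdots z_n$ gives $\EE{Z_1\cdots Z_n} = \sum_{j=2}^n \sigma_{1j}\,\EE{\prod_{k\ge 2,\ k\neq j}Z_k}$; the inductive hypothesis on the $(n-2)$-fold products then rebuilds every pairing of $\{1,\dots,n\}$ according to the partner chosen for element $1$, with base cases $n\in\{0,1\}$. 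I expect to present the generating-function argument as the main proof and mention the inductive one as a remark.
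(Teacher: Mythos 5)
The paper does not prove this theorem; it quotes it as a classical result and cites Isserlis (1918), so there is no in-paper argument to compare against. Your generating-function proof is correct and complete: identifying $\EE{Z_1\cdots Z_n}$ with the $t_1\cdots t_n$ coefficient of $M(t)=\exp\bigl(\tfrac12\sum_{i,j}\sigma_{ij}t_it_j\bigr)$, noting by homogeneity that only the $k=n/2$ term of the exponential series can contribute (which disposes of odd $n$), ruling out self-pairs because they would force some $t_i$ to degree $\ge 2$, and then verifying that the $(n/2)!\,2^{n/2}$ ordered-and-oriented tuples representing a given pairing exactly cancel the prefactor $\tfrac{1}{(n/2)!}2^{-n/2}$, using $\sigma_{ij}=\sigma_{ji}$. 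The Stein/integration-by-parts induction you sketch as an alternative is equally standard and also sound; the MGF route is slightly more self-contained, while the inductive route makes the recursive ``choose a partner for index $1$'' structure of $\mathcal{P}_2(n)$ more visible. Either would serve; no gaps.
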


Note that if $n$ is odd then $\mathcal{P}_2(n)$ is empty, so $\EE{Z_1 \dots Z_n} = 0$. For the remainder of this section, we will assume that $n$ is positive and even.

\begin{remark}
    For even $n$, the \emph{hafnian} $\haf(A)$ of a symmetric $n \times n$ matrix $A$ is defined as
    \[\haf(A) := \sum_{p \in \mathcal{P}_2(n)} \prod_{\{i, j\} \in p} A_{i, j}.\]
    Theorem~\ref{thm:isserlis} can be restated as follows: for $Z_1, \dots, Z_n \sim \mathcal{N}(\pmb{0}, \Sigma)$, we have $\EE{Z_1 \dots Z_n} = \haf(\Sigma)$. The hafnian generalizes the permanent (we discuss permanents more in Section~\ref{sec:perm}). In particular, for a square matrix $A$, we have $\perm(A) = \haf \begin{pmatrix}0 & A \\ A^\top & 0\end{pmatrix}$. Computing the permanent of a $0-1$ matrix is $\mathsf{\#P}$-hard \citep{valiant79permanent}, and therefore so is computing the hafnian.
\end{remark}

In this section, we will study accurate estimation of $\haf(\Sigma)$ over the following distribution of covariance matrices $\Sigma$: every off-diagonal entry of $\Sigma$ is chosen from $\mathcal{N}(0, 1)$ independently. (The diagonal entries of $\Sigma$ -- which do not affect the expectation of $Z_1 \dots Z_n$ -- are chosen to be large enough that $\Sigma$ is PSD.) We let $\mathcal{D}_n$ be the induced distribution of the expression $\haf(\Sigma)$. When $n$ is fixed, we suppress the subscript and simply write $\mathcal{D}$.

We now define a class of estimators of the hafnian, with respect to which we will aim to be accurate.
\begin{defin}
    Let $\Sigma$ be an $n \times n$ PSD matrix and let $K \subseteq [n]$. For a pairing $p$ of $K$, we define $X_p := \prod_{\{i, j\} \in p} \Sigma_{i, j}$. For a set of pairings $S \subseteq \mathcal{P}_2(K)$, we define $X_S := \sum_{p \in S} X_p$.
\end{defin}

We being with the following observation.
\begin{claim} \label{claim:xs_accurate}
    For all $S \subseteq \mathcal{P}_2(n)$, $X_S$ is a $1$-accurate and self-accurate estimator of $\haf(\Sigma)$ over $\mathcal{D}$.
\end{claim}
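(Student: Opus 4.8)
The plan is to verify the two conditions of Claim~\ref{claim:xs_accurate} — that $X_S$ is $1$-accurate and self-accurate over $\mathcal{D}$ — by direct computation of the relevant expectations, using Theorem~\ref{thm:isserlis} to expand $\haf(\Sigma)$ as a sum over pairings and exploiting the fact that the off-diagonal entries of $\Sigma$ are i.i.d.\ standard Gaussians. The key fact I would use repeatedly is that for a product of off-diagonal entries $\prod_{\{i,j\} \in p} \Sigma_{i,j}$ over a pairing $p$, taking an expectation over $\mathcal{D}$ gives $1$ if each distinct entry $\Sigma_{i,j}$ appears an even number of times (in fact, for a single pairing, each entry appears once, so the expectation is $0$ unless $p$ is empty), and more generally $\EE[\mathcal{D}]{X_p X_q} = 1$ if $p = q$ and $0$ otherwise, since $X_p X_q$ is a product over the multiset union of the two pairings, and distinct pairings differ in at least one pair, leaving some $\Sigma_{i,j}$ with an odd exponent whose expectation vanishes.

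For $1$-accuracy, I need $\EE[\mathcal{D}]{(\haf(\Sigma) - X_S) \cdot 1} = 0$, i.e.\ $\EE[\mathcal{D}]{\haf(\Sigma)} = \EE[\mathcal{D}]{X_S}$. By Theorem~\ref{thm:isserlis}, $\haf(\Sigma) = \sum_{p \in \mathcal{P}_2(n)} X_p$, and for each nonempty pairing $p$ of $[n]$ the product $X_p$ contains at least one factor $\Sigma_{i,j}$ with $i \neq j$ appearing exactly once, so $\EE[\mathcal{D}]{X_p} = 0$; hence $\EE[\mathcal{D}]{\haf(\Sigma)} = 0$. Similarly, since $S \subseteq \mathcal{P}_2(n)$ consists of pairings of $[n]$ (which are nonempty for $n \geq 2$), each $X_p$ with $p \in S$ has $\EE[\mathcal{D}]{X_p} = 0$, so $\EE[\mathcal{D}]{X_S} = 0$. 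Thus both sides are $0$. (For the degenerate case $n$ even but the empty pairing — $n = 0$ — one checks separately, but the running assumption $n$ positive and even handles this.)

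For self-accuracy, I need $\EE[\mathcal{D}]{(\haf(\Sigma) - X_S) \cdot X_S} = 0$, i.e.\ $\EE[\mathcal{D}]{\haf(\Sigma) \cdot X_S} = \EE[\mathcal{D}]{X_S^2}$. Expanding, $\EE[\mathcal{D}]{\haf(\Sigma) \cdot X_S} = \sum_{p \in \mathcal{P}_2(n)} \sum_{q \in S} \EE[\mathcal{D}]{X_p X_q}$, and by the orthogonality fact above $\EE[\mathcal{D}]{X_p X_q} = \mathbbm{1}[p = q]$ — here I should note that a pairing in $S$ is a pairing of all of $[n]$, so it equals $p$ exactly when the pairings coincide. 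Since $S \subseteq \mathcal{P}_2(n)$, this double sum collapses to $\abs{S}$. On the other side, $\EE[\mathcal{D}]{X_S^2} = \sum_{p, q \in S} \EE[\mathcal{D}]{X_p X_q} = \abs{S}$ by the same orthogonality. Hence the two are equal and self-accuracy follows.

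The main obstacle — really the only subtlety — is establishing the orthogonality relation $\EE[\mathcal{D}]{X_p X_q} = \mathbbm{1}[p=q]$ cleanly: one must argue that if $p \neq q$ then their symmetric difference is nonempty and pick an edge $\{i,j\}$ in exactly one of them, so that $\Sigma_{i,j}$ appears to an odd power in $X_p X_q$ while being independent of all other factors, forcing the expectation to factor as $\EE{\Sigma_{i,j}^{\text{odd}}} \cdot (\cdots) = 0$; and that if $p = q$ then $X_p X_q = \prod_{\{i,j\} \in p} \Sigma_{i,j}^2$ has expectation $\prod 1 = 1$. This uses only independence of distinct off-diagonal entries and the moments of a standard Gaussian ($\EE{\Sigma_{i,j}^2} = 1$, odd moments zero), plus the fact that a pairing of $[n]$ never uses a diagonal entry $\Sigma_{i,i}$. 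Everything else is bookkeeping.
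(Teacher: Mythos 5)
Your proposal is correct and follows essentially the same approach as the paper's proof: both rest on the observation that each $X_p$ is a product of independent zero-mean Gaussians (hence zero-mean), and on the orthogonality relation $\EE[\mathcal{D}]{X_p X_q} = \mathbbm{1}[p = q]$ obtained by finding an off-diagonal entry appearing to an odd power when $p \neq q$. The only cosmetic difference is that you compute $\EE{\haf(\Sigma) X_S}$ and $\EE{X_S^2}$ separately and match them, whereas the paper writes $\haf(\Sigma) - X_S = \sum_{p \notin S} X_p$ and kills the cross terms directly.
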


\begin{proof}
    To see that $X_S$ is $1$-accurate, we must show that $\EE[\mathcal{D}]{X_S} = 0$. This is indeed the case, as $X_S$ is a sum of $X_p$ for $p \in S$, and each $X_p$ is zero-mean (because $X_p$ is a product of independent, zero-mean random variables).
    
    To see that $X_S$ is self-accurate, note that $\EE[\mathcal{D}]{X_p X_q} = 0$ for all $p \neq q \in \mathcal{P}_2(n)$. This is because $X_p X_q$ is a product of $n$ not-necessarily-distinct elements of $\Sigma$, at least one of which appears only once in the product. This element is independent of all other elements in the product, and is zero-mean. It follows that $X_S$ is self-accurate, because we have
    \[\EE[\mathcal{D}]{(\haf(\Sigma) - X_S)X_S} = \EE[\mathcal{D}]{\parens{\sum_{p \in \mathcal{P}_2(n)} X_p - \sum_{p \in S} X_p} \parens{\sum_{p \in S} X_p}} = \EE[\mathcal{D}]{\sum_{p \not \in S} X_p \cdot \sum_{p \in S} X_p} = 0.\]
\end{proof}

We also note the following property of estimators $X_S$, which will be useful for proving some of our results.

\begin{claim} \label{claim:set_cov}
    For all sets of pairings $S_1, S_2 \subseteq \mathcal{P}_2(n)$, we have $\EE[\mathcal{D}_n]{X_{S_1} X_{S_2}} = \abs{S_1 \cap S_2}$.
\end{claim}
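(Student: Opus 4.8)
The plan is to expand both $X_{S_1} = \sum_{p \in S_1} X_p$ and $X_{S_2} = \sum_{q \in S_2} X_q$ by bilinearity of expectation, obtaining $\EE[\mathcal{D}_n]{X_{S_1} X_{S_2}} = \sum_{p \in S_1} \sum_{q \in S_2} \EE[\mathcal{D}_n]{X_p X_q}$. The whole claim then reduces to computing $\EE[\mathcal{D}_n]{X_p X_q}$ for individual pairings $p, q \in \mathcal{P}_2(n)$, and summing up.

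First I would recall the key fact already established in the proof of Claim~\ref{claim:xs_accurate}: for $p \neq q$, the product $X_p X_q$ contains some off-diagonal entry $\Sigma_{i,j}$ exactly once (since $p$ and $q$ are distinct pairings of the same ground set $[n]$, they differ in at least one pair, and an entry in the symmetric difference of the edge sets appears an odd number of times — in fact exactly once each, since a pairing uses each index once — so at least one entry appears with total multiplicity one). That entry is independent of the remaining factors and has mean zero, so $\EE[\mathcal{D}_n]{X_p X_q} = 0$. When $p = q$, we have $X_p X_q = X_p^2 = \prod_{\{i,j\} \in p} \Sigma_{i,j}^2$, a product of $n/2$ independent squared standard Gaussians, so $\EE[\mathcal{D}_n]{X_p^2} = \prod_{\{i,j\} \in p} \EE{\Sigma_{i,j}^2} = 1^{n/2} = 1$.

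Combining these two cases, $\EE[\mathcal{D}_n]{X_p X_q} = \mathbbm{1}[p = q]$, so the double sum collapses to $\sum_{p \in S_1} \sum_{q \in S_2} \mathbbm{1}[p = q] = \abs{S_1 \cap S_2}$, which is exactly the claim.

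I do not anticipate a real obstacle here; the argument is essentially a restatement and mild generalization of the self-accuracy computation in Claim~\ref{claim:xs_accurate}. The only point requiring a little care is the assertion that a pairing $p$ uses each entry of $\Sigma$ at most once (true because in a pairing each index appears in exactly one pair, hence no repeated pairs within $p$), so that the multiplicity of any off-diagonal entry in $X_p X_q$ is exactly $\mathbbm{1}[\{i,j\} \in p] + \mathbbm{1}[\{i,j\} \in q] \le 2$, with the value $1$ attained for at least one entry whenever $p \neq q$. That observation is what licenses the "appears only once, hence independent and zero-mean" step.
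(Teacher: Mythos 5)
Your proof is correct and follows essentially the same route as the paper's: expand by bilinearity, observe that $\EE{X_p X_q}$ vanishes for $p \neq q$ (exactly the fact already established in the proof of Claim~\ref{claim:xs_accurate}) and equals $1$ for $p = q$, and collapse the double sum to $\abs{S_1 \cap S_2}$. You supply a bit more explicit justification for the vanishing of the cross terms than the paper does, but the argument is the same.
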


\begin{proof}
    We have
    \[\EE[\mathcal{D}_n]{X_{S_1} X_{S_2}} = \sum_{p \in S_1} \sum_{q \in S_2} \EE[\mathcal{D}_n]{X_p X_q} = \sum_{p \in S_1 \cap S_2} \EE{X_p^2} = \abs{S_1 \cap S_2}.\]
    The second-to-last step follows from the previously-mentioned fact that $\EE{X_p X_q} = 0$ when $p \neq q$ (see the proof of Claim~\ref{claim:xs_accurate}). The last step follows from the fact that $X_p^2$ is the product of the squares of $n/2$ independent standard Gaussians. Moreover, by setting $S_2 = \mathcal{P}_2(n)$ so that $X_{S_2} = \haf(\Sigma)$, we find that $\EE{X_S \haf(\Sigma)} = \abs{S}$ for all $S$.
\end{proof}

While $X_S$ is not in general efficiently computable for every $S \subseteq \mathcal{P}_2(n)$, it may be efficiently computable if $S$ has a special structure. This can be true even for exponentially-sized sets $S$.

\begin{example} \label{ex:pairing_structure}
    Let $S$ be the set of $3^{n/4}$ pairings that pair $1, 2, 3, 4$ amongst themselves (there are three ways to do so); pair $5, 6, 7, 8$ amongst themselves; and so on. Then
    \[X_S = (\Sigma_{1, 2} \Sigma_{3, 4} + \Sigma_{1, 3} \Sigma_{2, 4} + \Sigma_{1, 4} \Sigma_{2, 3})(\Sigma_{5, 6} \Sigma_{7, 8} + \Sigma_{5, 7} \Sigma_{6, 8} + \Sigma_{5, 8} \Sigma_{6, 7})(\dots).\]
\end{example}

This motivates defining \emph{pairing structures}: a pairing structure is an abstract representation of sets of pairings that is built up recursively. In particular, a pairing structure is defined as either a \emph{union} of pairing structures representing disjoint pairings of the same set, or a \emph{product} of pairing structures representing pairings of disjoint sets. Formally:

\begin{defin} \label{def:pairing_structure}
    A \emph{pairing structure} (denoted as $\mathcal{T}$) over an index set $K$ is a structured representation of a subset of $\mathcal{P}_2(K)$ (we will use $S(\mathcal{T})$ to refer to the set of pairings that $\mathcal{T}$ represents). It is defined as any one of the following:
    \begin{itemize}
        \item If $K = \{i, j\}$ has size $2$, then $\mathcal{T}_K$ is a \emph{base} pairing structure over $K$, and $S(\mathcal{T}) := \mathcal{P}_2(K) = \{\{(i, j)\}\}$.
        \item If $\mathcal{T}_1$ and $\mathcal{T}_2$ are pairing structures over the same index set $K$, such that $S(\mathcal{T}_1)$ and $S(\mathcal{T}_2)$ are \textbf{disjoint sets,} then $\mathcal{T}_1 \cup \mathcal{T}_2$ is a \emph{union} pairing structure over $K$, and $S(\mathcal{T}) := S(\mathcal{T}_1) \cup S(\mathcal{T}_2)$.
        \item If $\mathcal{T}_1$ and $\mathcal{T}_2$ are pairing structures over disjoint index sets $K_1$ and $K_2$, then $\mathcal{T}_1 \otimes \mathcal{T}_2$ is a \emph{product} pairing structure over $K_1 \cup K_2$, and $S(\mathcal{T}) := \{p_1 \cup p_2: p_1 \in S(\mathcal{T}_1), p_2 \in S(\mathcal{T}_2)\}$.
    \end{itemize}

    The \emph{representation size} of a pairing structure $\mathcal{T}$, denoted $\size(\mathcal{T})$, is defined recursively: a base pairing structure has representation size $1$; $\size(\mathcal{T}_1 \otimes \mathcal{T}_2) = \size(\mathcal{T}_1) + \size(\mathcal{T}_2) + 1$; and $\size(\mathcal{T}_1 \cup \mathcal{T}_2) = \size(\mathcal{T}_1) + \size(\mathcal{T}_2) + 1$.

    For convenience, we will define $X_{\mathcal{T}}$ to be shorthand for $X_{S(\mathcal{T})}$.
\end{defin}

For example, we can represent $\mathcal{P}_2(4)$, the set of pairings of $\{1, 2, 3, 4\}$ (of which there are three), as
\[\mathcal{T}_{\{1, 2, 3, 4\}} := (\mathcal{T}_{\{1, 2\}} \otimes \mathcal{T}_{\{3, 4\}}) \cup (\mathcal{T}_{\{1, 3\}} \otimes \mathcal{T}_{\{2, 4\}}) \cup (\mathcal{T}_{\{1, 4\}} \otimes \mathcal{T}_{\{2, 3\}}).\]
That is, $S(\mathcal{T}) = \mathcal{P}_2(4)$. If we define $\mathcal{T}_{\{5, 6, 7, 8\}}$ and so forth similarly, then
\[\mathcal{T} := \mathcal{T}_{\{1, 2, 3, 4\}} \otimes \mathcal{T}_{\{5, 6, 7, 8\}} \otimes \dots \otimes \mathcal{T}_{\{n - 3, n - 2, n - 1, n\}}\]
represents the set of pairings in Example~\ref{ex:pairing_structure}.

For every pairing structure $\mathcal{T}$, $X_{\mathcal{T}}$ can be computed in linear time in $\size(\mathcal{T})$:
\begin{itemize}[noitemsep]
    \item If $\mathcal{T} = \mathcal{T}_{\{i, j\}}$ is a base pairing structure, then $X_{\mathcal{T}} = \Sigma_{i, j}$.
    \item If $\mathcal{T} = \mathcal{T}_1 \cup \mathcal{T}_2$ is a union, then\footnote{This is why we require that $S(\mathcal{T}_1)$ and $S(\mathcal{T}_2)$ be disjoint.} $X_{\mathcal{T}} = X_{\mathcal{T}_1} + X_{\mathcal{T}_2}$.
    \item If $\mathcal{T} = \mathcal{T}_1 \otimes \mathcal{T}_2$ is a product, then $X_{\mathcal{T}} = X_{\mathcal{T}_1} X_{\mathcal{T}_2}$.
\end{itemize}

\subsection{Computing correlations between predictors can be $\mathsf{\#P}$-hard} \label{sec:hardness}
As per Claim~\ref{claim:xs_accurate}, $X_S$ is a $\{1, X_S\}$-multiaccurate estimator of $\haf(\Sigma)$. On the other hand, given two pairing structures $\mathcal{T}$ and $\mathcal{U}$, it is far from obvious whether it is possible to construct a $\{1, X_{\mathcal{T}}, X_{\mathcal{U}}\}$-multiaccurate estimator in time that is polynomial $\size(\mathcal{T}) + \size(\mathcal{U})$. A natural approach would be to find OLS regression coefficients onto predictors $X_{\mathcal{T}}$ and $X_{\mathcal{U}}$ (see Proposition~\ref{prop:multiacc_linreg}). However, this requires computing $\EE[\mathcal{D}]{X_{\mathcal{T}} X_{\mathcal{U}}}$. Theorem~\ref{thm:no_exact_merge} (whose proof we defer to the appendix) shows that doing so is $\mathsf{\#P}$-hard.

\begin{restatable}{theorem}{noexactmerge} \label{thm:no_exact_merge}
    Given an integer $n$ and two pairing structures $\mathcal{T}$ and $\mathcal{U}$ with index set $[n]$ as input, computing $\EE[\mathcal{D}_n]{X_{\mathcal{T}} X_{\mathcal{U}}}$ is $\mathsf{\#P}$-hard.
\end{restatable}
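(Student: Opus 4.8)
The plan is to pass to a purely combinatorial counting problem and then reduce from the permanent. By Claim~\ref{claim:set_cov} (taking $S_1 = S(\mathcal{T})$, $S_2 = S(\mathcal{U})$, and recalling $X_{\mathcal{T}} = X_{S(\mathcal{T})}$), we have $\EE[\mathcal{D}_n]{X_{\mathcal{T}} X_{\mathcal{U}}} = \abs{S(\mathcal{T}) \cap S(\mathcal{U})}$, so it suffices to show that computing $\abs{S(\mathcal{T}) \cap S(\mathcal{U})}$ from $(\mathcal{T},\mathcal{U})$ is $\mathsf{\#P}$-hard. (Note that $\abs{S(\mathcal{T})}$ alone is computable in linear time via $\abs{S(\text{base})}=1$, $\abs{S(\mathcal{T}_1\cup\mathcal{T}_2)}=\abs{S(\mathcal{T}_1)}+\abs{S(\mathcal{T}_2)}$, $\abs{S(\mathcal{T}_1\otimes\mathcal{T}_2)}=\abs{S(\mathcal{T}_1)}\cdot\abs{S(\mathcal{T}_2)}$, so the hardness lives entirely in the interaction of the two structures.) I would reduce from computing $\perm$ of a $0$-$1$ matrix $A$, which is $\mathsf{\#P}$-hard by \citet{valiant79permanent}; equivalently, from counting perfect matchings of the bipartite graph $G$ with parts $L=\{\ell_1,\dots,\ell_k\}$, $R=\{r_1,\dots,r_k\}$ and biadjacency matrix $A$, where we may assume every vertex has positive degree (else $\perm(A)=0$, detectable directly).

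The construction: give each edge $e\in E(G)$ its own fresh four-element block $K_e=\{a_e,b_e,c_e,d_e\}$, so the index set is $[n]$ with $n=4\abs{E(G)}$. Declare the ``off'' pairing of $K_e$ to be $\{(a_e,b_e),(c_e,d_e)\}$ and the ``on'' pairing to be $\{(a_e,c_e),(b_e,d_e)\}$; each is a product of two base pairing structures, and (being distinct) their union is a legal two-element pairing structure over $K_e$. Let $\mathcal{T}$ be the product over $\ell_i\in L$ of a factor $F_i$ over $\bigcup_{e\ni\ell_i}K_e$ whose represented set is ``exactly one edge incident to $\ell_i$ is on, all others off'': this $F_i$ is a nested binary union, indexed by which incident edge is on, of terms each of which is a product of the relevant base structures; the terms are pairwise distinct (they disagree on which block is on), so the union is legal, $\size(F_i)=O(\deg(\ell_i)^2)$, and $\size(\mathcal{T})=O(k^3)$. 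Let $\mathcal{U}$ be built symmetrically from $R$, using the same blocks $K_e$ but grouping them by right endpoint. Both $\mathcal{T}$ and $\mathcal{U}$ are legal pairing structures over all of $[n]$ because each edge has a unique left (resp.\ right) endpoint, so the factors are over disjoint index sets whose union is $[n]$; the whole reduction is computable in $\mathrm{poly}(k)$ time.

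For correctness I would argue that $\abs{S(\mathcal{T})\cap S(\mathcal{U})}$ equals the number of perfect matchings of $G$. Since $\mathcal{T}=\bigotimes_i F_i$ and $K_e$ lies in the single factor $F_i$ with $\ell_i$ the left endpoint of $e$, every pairing in $S(\mathcal{T})$ restricts on each $K_e$ to either its on or off pairing; likewise for $S(\mathcal{U})$, the state being read from the same restriction of the pairing to $K_e$, so no explicit consistency constraint is needed. Hence a pairing lies in $S(\mathcal{T})\cap S(\mathcal{U})$ exactly when (i) every block is on or off, (ii) each $\ell_i$ is incident to exactly one ``on'' block, and (iii) each $r_j$ is incident to exactly one ``on'' block — i.e., exactly when the set of ``on'' edges is a perfect matching of $G$, and this matching determines all block states. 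This is a bijection, so $\abs{S(\mathcal{T})\cap S(\mathcal{U})}=\#\mathrm{PM}(G)=\perm(A)$, and a polynomial-time algorithm for $\EE[\mathcal{D}_n]{X_{\mathcal{T}} X_{\mathcal{U}}}$ would compute arbitrary $0$-$1$ permanents in polynomial time.

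The crux is the middle step: engineering the per-edge gadget and the ``exactly one on'' factors so that $\mathcal{T}$ and $\mathcal{U}$ are \emph{simultaneously} legal pairing structures over one common index set, so that an edge's state is automatically consistent across its appearances in $\mathcal{T}$ and $\mathcal{U}$, and — the main quantitative point — so that unrolling the ``exactly one of $\deg(\ell_i)$ edges is on'' constraint as a union costs only a polynomial (quadratic) blowup rather than an exponential one. The remaining pieces (the reduction to $\abs{S(\mathcal{T})\cap S(\mathcal{U})}$ via Claim~\ref{claim:set_cov} and the bijection with perfect matchings) are routine once the gadget is fixed.
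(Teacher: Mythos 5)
Your proposal is correct, but takes a different reduction than the paper. You both begin the same way, invoking Claim~\ref{claim:set_cov} to reduce to computing $\abs{S(\mathcal{T}) \cap S(\mathcal{U})}$. From there the paper reduces from $\#3\mathsf{SAT}$: it uses a $12k$-element index set (twelve indices per clause), lets $\mathcal{T}$ be a product of per-variable sub-structures (one pairing per truth value, with consistency across clauses containing that variable enforced by the product), and lets $\mathcal{U}$ be a product of per-clause sub-structures with seven pairings each (one per satisfying local assignment). You instead reduce from counting perfect matchings of a bipartite graph (equivalently, $0$-$1$ permanents), using four fresh indices per edge, an on/off gadget per block, and ``exactly one on'' factors grouped by left endpoints in $\mathcal{T}$ and by right endpoints in $\mathcal{U}$. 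Both constructions use the same underlying trick — a two-state per-gadget block whose state is read off from the restriction of the pairing, with consistency between $\mathcal{T}$ and $\mathcal{U}$ automatic because they live on the same index set — so the difference is mainly which $\mathsf{\#P}$-hard problem is taken as the source. Your choice is arguably more natural in the context of this paper (which is already about hafnians and permanents), and the gadget is a bit leaner; the paper's choice of $\#3\mathsf{SAT}$ is the more traditional route and makes the "satisfaction" structure of $\mathcal{U}$ transparent. I verified your size bounds: unrolling ``exactly one of $\deg(\ell_i)$ blocks on'' as a binary union of products costs $O(\deg(\ell_i)^2)$, giving $\size(\mathcal{T}) + \size(\mathcal{U}) = O(k^3)$, which is polynomial as needed. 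The only small housekeeping point is that when $\deg(\ell_i) = 1$ the factor $F_i$ is a bare product with no union, which is allowed; and you correctly exclude degree-zero vertices up front.
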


The proof of Theorem~\ref{thm:no_exact_merge} works via a reduction from $\#3SAT$: given a 3CNF $\varphi$, we create pairing trees $\mathcal{T}, \mathcal{U}$ such that the number of satisfying assignments to $\varphi$ is equal to $\abs{S(\mathcal{T}) \cap S(\mathcal{U})}$. Specifically, $\mathcal{T}$ is a product of sub-trees, one for each variable $x_i$ appearing in $\varphi$, which contains one pairing corresponding to setting $x_i = 0$ and another corresponding to setting $x_i = 1$. Meanwhile, $\mathcal{U}$ is a product of sub-trees, one for each clause of $\varphi$, which contains one pairing for each of the seven ways to satisfy the clause.\\

It may be tempting to conclude from Theorem~\ref{thm:no_exact_merge} that there is no efficient way to merge $X_{\mathcal{T}}$ and $X_{\mathcal{U}}$ to produce an estimator that is self-accurate and also $\{1, X_{\mathcal{T}}, X_{\mathcal{U}}\}$-multiaccurate. However, all we have shown is that there is no efficient way to find a \emph{linear} merge of $X_S$ and $X_T$ (i.e.\ an estimator of the form $\beta_S X_S + \beta_T X_T$) with these properties.\footnote{By Proposition~\ref{prop:multiacc}, the only linear merge of $X_\mathcal{T}$ and $X_\mathcal{U}$ that is $\{X_{\mathcal{T}}, X_{\mathcal{U}}\}$-multiaccurate is the OLS regression of the hafnian onto the predictors $X_{\mathcal{T}}$ and $X_{\mathcal{U}}$. In order to compute the OLS regression coefficients, we must be able to compute $\EE[\mathcal{D}_n]{X_{\mathcal{T}} X_{\mathcal{U}}}$. Theorem~\ref{thm:no_exact_merge} showed that doing so is $\mathsf{\#P}$-hard.} We have not ruled out the possibility that a more complicated merge satisfies our desired accuracy conditions. However, we are prepared to conjecture that such a merge does not exist:

\begin{conjecture} \label{conj:no_merge_tu}
    There is no polynomial-time algorithm that, given an integer $n$ and two pairing structures $\mathcal{T}$ and $\mathcal{U}$ with index set $[n]$, outputs a polynomial-time computable estimator $f_{\mathcal{T}, \mathcal{U}}$ that is $\{1, X_{\mathcal{T}}, X_{\mathcal{U}}, f_{\mathcal{T}, \mathcal{U}}\}$-multiaccurate over $\mathcal{D}_n$.
\end{conjecture}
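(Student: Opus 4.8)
Because the statement is a hardness claim, the plan is to reduce a $\mathsf{\#P}$-hard problem to the task of the conjectured algorithm $A$; proving the conjecture unconditionally is presumably out of reach, so the realistic goal is to show that such an $A$ would imply $\mathsf{FP} = \mathsf{\#P}$. The natural hard instances are the ones built in the proof of Theorem~\ref{thm:no_exact_merge}: from a 3CNF $\varphi$ we obtain pairing structures $\mathcal{T} = \mathcal{T}_\varphi$ and $\mathcal{U} = \mathcal{U}_\varphi$ of polynomial representation size with $c := \abs{S(\mathcal{T}) \cap S(\mathcal{U})}$ equal to the number $\#\varphi$ of satisfying assignments. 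Write $t := \abs{S(\mathcal{T})}$ and $u := \abs{S(\mathcal{U})}$. Suppose $A$ outputs a polynomial-time-computable $f := f_{\mathcal{T},\mathcal{U}}$ that is $\{1, X_{\mathcal{T}}, X_{\mathcal{U}}, f\}$-multiaccurate over $\mathcal{D}_n$. By Claim~\ref{claim:set_cov} (in the form $\EE[\mathcal{D}]{X_S \haf(\Sigma)} = \abs{S}$), the $X_{\mathcal{T}}$- and $X_{\mathcal{U}}$-accuracy conditions say exactly that $\EE[\mathcal{D}]{f X_{\mathcal{T}}} = \EE[\mathcal{D}]{\haf(\Sigma) X_{\mathcal{T}}} = t$ and $\EE[\mathcal{D}]{f X_{\mathcal{U}}} = u$. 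Hence the \emph{explicitly} polynomial-time-computable function $h := f - X_{\mathcal{T}} - X_{\mathcal{U}}$ satisfies, again by Claim~\ref{claim:set_cov}, $\EE[\mathcal{D}]{h X_{\mathcal{T}}} = t - t - c = -c$ and likewise $\EE[\mathcal{D}]{h X_{\mathcal{U}}} = -c$, so $\EE[\mathcal{D}]{h \cdot (X_{\mathcal{T}} + X_{\mathcal{U}})} = -2c = -2\#\varphi$. In this sense any multiaccurate merge $f$ already ``contains'' $\#\varphi$; the only remaining task is to read it out.

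So the entire reduction reduces to computing the single expectation $\EE[\mathcal{D}_n]{h g}$, where $g := X_{\mathcal{T}} + X_{\mathcal{U}}$, given that $h$ and $g$ are polynomial-time computable and that the value is a known-to-be-integer. This is exactly where the argument stalls, and I expect it to be the main obstacle. Monte Carlo estimation is hopeless in the worst case: as in the footnote following Proposition~\ref{prop:multiacc_linreg}, the variance of $hg$ under $\mathcal{D}_n$ can be exponential in $n$ while $c$ can itself be exponentially large, so no polynomial sample size determines $c$ to within $1$. Exact evaluation is also unavailable, since $h$ is only guaranteed to be polynomial-time computable — not a low-degree polynomial in the entries of $\Sigma$, and not even a polynomial at all (for $g$ alone one could in principle expand via Wick's theorem / Theorem~\ref{thm:isserlis}, but that expansion has exponentially many terms). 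Put differently, the conjecture asserts precisely that $A$ cannot produce an $f$ whose residual $h$ is ``structured enough'' (low-degree, sparse in the Hermite basis, a short pairing structure, etc.) for this expectation to be tractable, and proving it seems to require ruling out all such structure.

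Two more modest variants of the plan are worth pursuing. First, one could target $\mathsf{NP}$-hardness rather than $\mathsf{\#P}$-hardness by only trying to decide whether $c = 0$: when $c = 0$ the predictors $1, X_{\mathcal{T}}, X_{\mathcal{U}}$ are orthogonal and $X_{\mathcal{T}} + X_{\mathcal{U}}$ is itself a valid self-accurate and $\{1, X_{\mathcal{T}}, X_{\mathcal{U}}\}$-multiaccurate merge, whereas for $c \ge 1$ it is not; the hope would be to argue that in the $c \ge 1$ case every polynomial-time multiaccurate $f$ must implicitly encode a deduplication of $S(\mathcal{T})$ against $S(\mathcal{U})$ that is itself hard to construct. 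Second, one could try to confirm the conjecture for \emph{restricted} classes of estimators — e.g.\ show that if $f$ is a pairing-structure estimator $X_{\mathcal{W}}$ then multiaccuracy with respect to $X_{\mathcal{T}}$ and $X_{\mathcal{U}}$ forces $S(\mathcal{W}) \supseteq S(\mathcal{T}) \cup S(\mathcal{U})$ and hence (plausibly) forces $\size(\mathcal{W})$ to be superpolynomial, or that if $f$ is a polynomial then its degree must be large. These would be evidence for, not a proof of, the conjecture. The genuine difficulty — and the reason the statement is a conjecture — is that the four accuracy conditions constrain only four inner products of $f$ in $L^2(\mathcal{D}_n)$, which says almost nothing about the internal structure of an arbitrary polynomial-time estimator, so there is no obvious handle for a lower bound against \emph{all} such estimators.
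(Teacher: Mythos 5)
The statement you were asked to prove is a \emph{conjecture} in the paper, and the paper offers no proof of it: the authors prove only Theorem~\ref{thm:no_exact_merge} (that computing $\EE[\mathcal{D}_n]{X_{\mathcal{T}}X_{\mathcal{U}}}$, and hence the OLS coefficients for the unique \emph{linear} multiaccurate merge, is $\mathsf{\#P}$-hard), and then explicitly concede they ``have not ruled out the possibility that a more complicated merge satisfies our desired accuracy conditions.'' Your write-up correctly identifies this, and your calculation is right: by Claim~\ref{claim:set_cov}, $X_{\mathcal{T}}$- and $X_{\mathcal{U}}$-accuracy of $f$ force $\EE[\mathcal{D}]{fX_{\mathcal{T}}}=t$ and $\EE[\mathcal{D}]{fX_{\mathcal{U}}}=u$, so the residual $h := f - X_{\mathcal{T}} - X_{\mathcal{U}}$ satisfies $\EE[\mathcal{D}]{h(X_{\mathcal{T}}+X_{\mathcal{U}})} = -2c$ with $c = \#\varphi$. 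That is a genuine observation going a bit beyond what the paper says, and your $c=0$ remark is also correct ($X_{\mathcal{T}}+X_{\mathcal{U}}$ is a self-accurate and $\{1,X_{\mathcal{T}},X_{\mathcal{U}}\}$-multiaccurate merge if and only if $c=0$). But as you note, this does not yield a reduction, because extracting $c$ from $h$ means computing an expectation over $\mathcal{D}_n$ for which neither sampling (exponential variance) nor exact evaluation (no structural guarantee on a polynomial-time $h$) is available.

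So the ``gap'' you identify is the real one, and it is precisely why the paper states this as a conjecture rather than a theorem: multiaccuracy imposes only a handful of inner-product constraints in $L^2(\mathcal{D}_n)$, which give no handle on the internal structure of an arbitrary polynomial-time estimator, and no known lower-bound technique rules out \emph{every} such estimator. Your suggested restricted results (showing $\size(\mathcal{W})$ must be superpolynomial if $f = X_{\mathcal{W}}$, or degree lower bounds if $f$ is a polynomial in the $\Sigma_{i,j}$) would be meaningful partial progress consistent with the authors' stated belief, but they would not settle the conjecture and the paper does not attempt them either. In short: there is no paper proof to compare against, your reconstruction of the obstacle is accurate, and your additional observations are correct but do not close the gap.
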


An even stronger conjecture is that no polynomial-time heuristic estimator can accurately merge pairing structure-based arguments.

\begin{conjecture} \label{conj:no_merge_tu_g}
    Given an integer $n$, let $R_n$ be the set of all predictors $X_{\mathcal{T}}$, where $\mathcal{T}$ is a pairing structure over $[n]$. There is no polynomial-time heuristic estimator that is $R_n$-multiaccurate and self-accurate over $\mathcal{D}_n$ for all $n$.
\end{conjecture}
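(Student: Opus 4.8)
The plan is to establish Conjecture~\ref{conj:no_merge_tu_g} conditionally on Conjecture~\ref{conj:no_merge_tu} (plus a mild refinement discussed below): I would show that a polynomial-time heuristic estimator $\GG$ that is $R_n$-multiaccurate and self-accurate over $\mathcal{D}_n$ for all $n$ could be ``unrolled'' into exactly the kind of polynomial-time merge that Conjecture~\ref{conj:no_merge_tu} forbids. Concretely, suppose such a $\GG$ exists. Fix $n$ and two pairing structures $\mathcal{T}, \mathcal{U}$ over $[n]$. Since $X_{\mathcal{T}}, X_{\mathcal{U}} \in R_n$, Definition~\ref{def:g_multiacc} supplies short argument sets $\Pi_{\mathcal{T}}$ and $\Pi_{\mathcal{U}}$ making $\GG$ be $X_{\mathcal{T}}$-accurate and $X_{\mathcal{U}}$-accurate respectively; let $\Pi_1$ be a short argument set making $\GG$ be $1$-accurate (having the correct mean is a basic desideratum, and alternatively one can simply drop the ``$1$'' from Conjecture~\ref{conj:no_merge_tu} and from this argument). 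Set $\Pi := \Pi_{\mathcal{T}} \cup \Pi_{\mathcal{U}} \cup \Pi_1$ and define $f_{\mathcal{T}, \mathcal{U}}(Y) := \GG(Y \mid \Pi)$.

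I would then verify that $f_{\mathcal{T},\mathcal{U}}$ has precisely the properties Conjecture~\ref{conj:no_merge_tu} rules out. Because ``$\Pi_X$ makes $\GG$ be $X$-accurate'' is monotone in $\Pi_X$ (Definition~\ref{def:g_multiacc} quantifies over all $\Pi \supseteq \Pi_X$), and $\Pi$ contains each of $\Pi_1, \Pi_{\mathcal{T}}, \Pi_{\mathcal{U}}$, the estimator $f_{\mathcal{T},\mathcal{U}}$ is $1$-accurate, $X_{\mathcal{T}}$-accurate, and $X_{\mathcal{U}}$-accurate over $\mathcal{D}_n$. Applying self-accuracy (equation~\ref{eq:g_self_acc}) with $\Pi' = \Pi$ gives $\EE[Y \sim \mathcal{D}_n]{(Y - f_{\mathcal{T},\mathcal{U}}(Y)) \cdot f_{\mathcal{T},\mathcal{U}}(Y)} = 0$, i.e.\ $f_{\mathcal{T},\mathcal{U}}$ is also $f_{\mathcal{T},\mathcal{U}}$-accurate; hence $f_{\mathcal{T},\mathcal{U}}$ is $\{1, X_{\mathcal{T}}, X_{\mathcal{U}}, f_{\mathcal{T},\mathcal{U}}\}$-multiaccurate over $\mathcal{D}_n$. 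It is also polynomial-time computable: the arguments in $\Pi$ are short, meaning of size polynomial in the cost of computing $X_{\mathcal{T}}$ and $X_{\mathcal{U}}$, which is linear in $\size(\mathcal{T})$ and $\size(\mathcal{U})$, and $\GG$ is polynomial-time. This bookkeeping is routine, and the multiaccuracy conditions are the intended ones because $\EE[\mathcal{D}_n]{\haf(\Sigma)} = 0$ and $\EE[\mathcal{D}_n]{\haf(\Sigma) X_{\mathcal{T}}} = \abs{S(\mathcal{T})}$ by Claim~\ref{claim:set_cov}.

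The main obstacle is that Conjecture~\ref{conj:no_merge_tu} forbids a \emph{uniform} polynomial-time algorithm that, given $\mathcal{T}$ and $\mathcal{U}$, \emph{outputs} such an $f_{\mathcal{T},\mathcal{U}}$, whereas the construction above only shows that $f_{\mathcal{T},\mathcal{U}}$ exists: Definition~\ref{def:g_multiacc} guarantees the witnessing arguments $\Pi_{\mathcal{T}}, \Pi_{\mathcal{U}}, \Pi_1$ are short but not that they can be computed in polynomial time from $\mathcal{T}, \mathcal{U}$. Closing this gap requires either (i) strengthening the notion of an ``$X$-accurate heuristic estimator'' so that a witnessing argument is efficiently computable from a description of $X$ --- which is natural, since the canonical argument for $X_{\mathcal{T}}$-accuracy ought to be essentially $\mathcal{T}$ itself together with a short justification that each summand $X_p$ is mean-zero --- or (ii) restating Conjecture~\ref{conj:no_merge_tu} against $\mathsf{P}/\mathsf{poly}$ algorithms, with the arguments playing the role of nonuniform advice. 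I expect this findability/uniformity question to be the crux of any rigorous argument; everything else is monotonicity, self-accuracy, and size accounting.

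Finally, I would record why the more direct route --- reducing straight from the $\mathsf{\#P}$-hardness of Theorem~\ref{thm:no_exact_merge} --- appears not to work, which also clarifies the excerpt's phrase ``even stronger.'' The accuracy conditions only let one read off ``easy'' quantities from $\GG$, such as $\EE[\mathcal{D}_n]{\GG(Y \mid \Pi) X_{\mathcal{T}}} = \abs{S(\mathcal{T})}$, which is computable directly from $\mathcal{T}$, and not the $\mathsf{\#P}$-hard intersection size $\EE[\mathcal{D}_n]{X_{\mathcal{T}} X_{\mathcal{U}}} = \abs{S(\mathcal{T}) \cap S(\mathcal{U})}$ (note $X_{\mathcal{T}} X_{\mathcal{U}}$ is not of the form $X_S$ because of repeated edges, and $S(\mathcal{T}) \cup S(\mathcal{U})$ need not be representable by a small pairing structure, so no inclusion--exclusion shortcut is available). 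Hence the hardness must enter through the nonexistence of \emph{any} good polynomial-time merge, i.e.\ through Conjecture~\ref{conj:no_merge_tu}, rather than through computing a single $\mathsf{\#P}$-hard number --- and whether Conjecture~\ref{conj:no_merge_tu_g} formally follows from Conjecture~\ref{conj:no_merge_tu} hinges exactly on the findability issue above.
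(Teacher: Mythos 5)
The statement in question is a \emph{conjecture}: the paper offers no proof, only a short informal motivation. Your analysis is nonetheless well aligned with what the paper actually says. You attempt to derive Conjecture~\ref{conj:no_merge_tu_g} from Conjecture~\ref{conj:no_merge_tu} by unrolling a hypothetical good $\GG$ into an algorithm $A$ (set $\Pi := \Pi_{\mathcal{T}} \cup \Pi_{\mathcal{U}} \cup \Pi_1$, use monotonicity from Definition~\ref{def:g_multiacc} to get $X_{\mathcal{T}}$- and $X_{\mathcal{U}}$-accuracy, use self-accuracy with $\Pi' = \Pi$ to get $f$-accuracy, and observe the short arguments keep everything polynomial-time), and you correctly flag that this reduction only goes through if the witnessing argument sets $\Pi_{X_{\mathcal{T}}}$ are \emph{efficiently findable} from $\mathcal{T}$, as opposed to merely existing. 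That findability gap is precisely the reason the paper calls Conjecture~\ref{conj:no_merge_tu_g} ``somewhat stronger'' than Conjecture~\ref{conj:no_merge_tu}: the arguments $\Pi_{\mathcal{T}}$ could encode nonuniform advice, so an uncritical Conjecture~\ref{conj:no_merge_tu}$\Rightarrow$Conjecture~\ref{conj:no_merge_tu_g} implication is not claimed. Where the paper goes a step further than you is in giving a heuristic reason to believe the stronger conjecture anyway: per-structure advice $\Pi_{\mathcal{T}}$ would seem insufficient because a perfect merge appears to require advice about every \emph{pair} $(\mathcal{T}, \mathcal{U})$, and that advice can only scale as advice per structure; you could strengthen your write-up by including this point. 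Your closing observation --- that a direct reduction from the $\mathsf{\#P}$-hardness of Theorem~\ref{thm:no_exact_merge} fails because multiaccuracy constraints only expose the easy quantities $\EE[\mathcal{D}_n]{\GG(Y\mid\Pi)X_{\mathcal{T}}} = \abs{S(\mathcal{T})}$ and not the hard intersection size $\abs{S(\mathcal{T})\cap S(\mathcal{U})}$ --- is sound and usefully clarifies why the paper routes through Conjecture~\ref{conj:no_merge_tu} rather than through a hardness theorem. So: you did not prove the conjecture (nor does the paper), but your reduction, monotonicity/self-accuracy bookkeeping, and identification of the findability obstruction match the paper's intended informal picture.
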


Conjecture~\ref{conj:no_merge_tu_g} is somewhat stronger than Conjecture~\ref{conj:no_merge_tu}: the set of heuristic arguments $\Pi_\mathcal{T}$ corresponding to the pairing structure $\mathcal{T}$ could encode advice for merging $X_{\mathcal{T}}$ with other predictors. However, it seems that a heuristic estimator would need to receive advice about every \emph{pair} of predictors in order to be able to merge them; we do not expect that a heuristic estimator could perfectly merge arguments without such advice. This is why we believe Conjecture~\ref{conj:no_merge_tu_g} to be true.

To recap, we stated a natural estimation problem (finding the expected product of jointly normal random variables, or put otherwise, the hafnian of the covariance matrix) and have defined a natural class of accurate predictors (efficiently computable partial sums of the terms that comprise the hanfian). Then, we showed that the most straightforward way to produce an accurate merge of these predictors is not computationally tractable. Further, we conjecture that \emph{no} computationally tractable, accurate merge exists.

\subsection{Approximating correlations between predictors} \label{sec:approx}
Given the simplicity of our estimation problem and predictors, we consider this computational barrier to be a potential obstacle to using accuracy as a formal desideratum for heuristic estimators. However, perhaps finding an \emph{exactly} accurate merge was too much to hope for. This raises the question: can we merge predictors of the form $X_{\mathcal{T}}$ in a way that is \emph{approximately} accurate with high probability?

In this section, we extend our definition of estimator accuracy to include randomized estimators. Concretely, with $\mathcal{Y}$ and $\mathcal{D}$ as in Definition~\ref{def:multiacc}, define an \emph{estimation algorithm} to be a (possibly randomized) algorithm that outputs an estimator $f$ of $Y$. We define \emph{approximate accuracy} for an estimation algorithm: loosely speaking, an estimation algorithm $A$ is approximately accurate with respect to a predictor $X$ if, with high probability, the estimator $f$ that $A$ outputs is approximately $X$-accurate. Formally:

\begin{defin} \label{def:approx_acc}
    Let $Y, \mathcal{D}$ be as in Definition~\ref{def:multiacc}. Let $f: \mathcal{Y} \to \RR$ be an estimator and $X: \mathcal{Y} \to \RR$ be a predictor. For $\varepsilon \ge 0$, we say that $f$ is \emph{$(\varepsilon, X)$-accurate} over $\mathcal{D}$ if we have
    \begin{equation} \label{eq:approx_acc}
        (\EE[\mathcal{D}]{(Y - f)X})^2 \le \varepsilon^2 \EE[\mathcal{D}]{X^2} \EE[\mathcal{D}]{f^2}.
    \end{equation}
For a set of predictors $S$, we say that $f$ is \emph{$(\varepsilon, S)$-multiaccurate} over $\mathcal{D}$ if $f$ is $(\varepsilon, X)$-accurate over $\mathcal{D}$ for all $X$ in $S$.

Let $A$ be a (possibly randomized) estimation algorithm (which outputs an estimator $f: \mathcal{Y} \to \RR$), and $X: \mathcal{Y} \to \RR$ be a predictor (that may depend on the randomness of $A$\footnote{The purpose of this is to allow e.g.\ $X = f$.}). For $\delta, \varepsilon \ge 0$, we say that $A$ is \emph{$(\delta, \varepsilon, X)$-accurate} over $\mathcal{D}$ if, with probability at least $1 - \delta$ over the randomness of $A$, the output $f$ of $A$ is $(\varepsilon, X)$-accurate over $\mathcal{D}$. For a set of predictors $S$, we say that $f$ is \emph{$(\delta, \varepsilon, S)$-multiaccurate}\footnote{The $\delta$ parameter is somewhat analogous to the $\delta$ in the definition of PAC learning, where a learning algorithm should work with probability $1 - \delta$ over random draws from the distribution. However, in our case, an estimation algorithm could use $Y$ itself as a ``random seed'' (such an algorithm would be be deterministic). Thus, our $\delta$ plays a role of convenience, allowing us to consider randomized estimation algorithms that output very inaccurate estimators with some low probability.} over $\mathcal{D}$ if, with probability at least $1 - \delta$ over the randomness of $A$, the output $f$ of $A$ is $(\varepsilon, X)$-accurate over $\mathcal{D}$ for all $X \in S$.
\end{defin}

We note that our definition is inspired by but not identical to the definition of multiaccuracy in \citet{kim19multiaccuracy}.\footnote{In particular, the right-hand side of equation~\ref{eq:approx_acc} normalizes the required bound in terms of the sizes of $X$ and $f$, while \citet{kim19multiaccuracy} do not normalize. This is because they only consider with predictors $X$ supported on $\{-1, 1\}$ and functions $Y$ supported on $\{0, 1\}$, so normalization is not necessary in their setting.}

\begin{remark} \label{rem:approx_acc_rescale}
    For any constant $c \neq 0$, $f$ is $(\delta, \varepsilon, X)$-accurate if and only if $f$ is $(\delta, \varepsilon, cX)$-accurate. Additionally, $f$ is an $(\delta, \varepsilon, X)$-accurate estimator of $Y$ if and only if $cf$ is an $(\delta, \varepsilon, X)$-accurate estimator of $cY$.
\end{remark}

We offer the following interpretation to motivate Definition~\ref{def:approx_acc}. As discussed previously, (perfect) $X$-accuracy means that no constant multiple of $X$ can be added to $f$ to improve the estimate. In general, the optimal adjustment to $f$ involving adding a constant multiple of $X$ is given by
\[f' := f + \frac{\EE{(Y - f) X}}{\EE{X^2}} X.\]
Our definition of $(\delta, \varepsilon, X)$-accuracy states that, with probability $1 - \delta$, this optimal adjustment of $\frac{\EE{(Y - f) X}}{\EE{X^2}} X$ is small compared to $f$ itself, in terms of second moments.

\begin{remark} \label{rem:self_accuracy}
    When speaking of approximate accuracy, we will typically insist on approximate self-accuracy, i.e.\ on including $f$ itself in the set of predictors with respect to which $f$ should be approximately accurate. Doing so allows us to reject unnecessarily noisy estimators. For example, let $X$ be a predictor and suppose that $f$ is an extremely noisy estimator that has no correlation with $X$ or with $Y$. Then $f$ may be $(\varepsilon, X)$-accurate simply by virtue of the $\EE{f^2}$ term on the right-hand side of equation~\ref{eq:approx_acc} being large.\footnote{Why not instead define $(\varepsilon, X)$-accuracy to mean that $\EE{fX}$ is within a $1 \pm \varepsilon$ factor of $\EE{YX}$, i.e.\ that the amount of $X$ in $f$ is correct to within $\pm \varepsilon$ tolerance? While such a definition would solve this particular problem, the definition is too harsh for predictors $X$ than are uncorrelated or almost uncorrelated with $Y$. For example, if $\EE{XY} = 0$, this definition would require $f$ to also be perfectly uncorrelated with $Y$.} However, $f$ will not be $(\varepsilon, f)$-accurate, because $\EE{(Y - f)f}^2 \approx \EE{f^2}^2 \gg \varepsilon^2 \EE{f^2}^2$.
\end{remark}

\subsubsection{Approximate merges of pairing structure estimates}
For this section, let $n$ be a positive even integer and let $\mathcal{D}_n$ be the distribution of hafnians of matrices $\Sigma$ with off-diagonal entries sampled from $\mathcal{N}(0, 1)$, as defined earlier. For convenience, we will let $Y$ be the value of the hafnian (that is, $Y$ is the random variable that is distributed according to $\mathcal{D}_n$).

In the previous section we showed that finding \emph{exact} coefficients for linearly merging estimates of $Y$ of the form $X_{\mathcal{T}}$, where $\mathcal{T}$ is a pairing structure, is not computationally tractable. In this section, we discuss \emph{approximate} linear merges.

Let $\mathcal{T}_1, \dots, \mathcal{T}_m$ be pairing structures. We are interested in an estimator $f$ of $Y$ that is $(\delta, \varepsilon, \{1, X_{\mathcal{T}_1}, \dots, X_{\mathcal{T}_m}, f\})$-multiaccurate over $\mathcal{D}_n$. Further, we would like $f$ to be computable in polynomial time in $m$, $n$, $1/\varepsilon$, $1/\delta$, and the representation sizes of $\mathcal{T}_1, \dots, \mathcal{T}_m$. We exhibit an $f$ that satisfies these properties, so long as the correlation matrix of $X_{\mathcal{T}_1}, \dots, X_{\mathcal{T}_m}$ is well-conditioned. (The \emph{correlation} of two random variables $Z_1$ and $Z_2$ is defined as $\frac{\Cov{Z_1, Z_2}}{\sqrt{\Var{Z_1} \Var{Z_2}}}$.)

Our algorithm is fairly straightforward: recall from Theorem~\ref{thm:no_exact_merge} the main obstacle to accurately merging estimates via linear regression: estimating the covariances $\EE[\mathcal{D}_n]{X_{\mathcal{T}_i} X_{\mathcal{T}_j}}$ is computationally intractable. On the other hand, we can approximate these covariances fairly accurately. Concretely, we have that $\EE{X_{\mathcal{T}_i} X_{\mathcal{T}_j}} = \abs{S(\mathcal{T}_i) \cap S(\mathcal{T}_j)}$. We can estimate the size of this intersection by sampling random pairings from $S(\mathcal{T}_i)$ and observing the fraction that also belong to $S(\mathcal{T}_j)$. The number of samples taken by the algorithm is determined adaptively, and is related to the condition number of the correlation matrix. We then do OLS regression using this approximate covariance matrix. The resulting estimator is approximately multiaccurate.

We state our estimator formally as Algorithm~\ref{alg:approx_acc_haf} in the appendix. The estimator is given by \textsc{Estimator}, while the functions \textsc{NumPairings}, \textsc{Sample}, \textsc{Split}, and \textsc{Contains} are helper functions. Our main result concerning our algorithm is Theorem~\ref{thm:haf_approx}:

\begin{restatable}{theorem}{hafapprox} \label{thm:haf_approx}
    Let $\mathcal{T}_1, \dots, \mathcal{T}_m$ be pairing structures with index set $[n]$. Let $\sigma_m$ be the smallest singular value of the correlation matrix of $X_{\mathcal{T}_1}, \dots, X_{\mathcal{T}_m}$. Let $\delta, \varepsilon > 0$ and let $f$ be the output of $\textsc{Estimator}$ (with arguments $\mathcal{T}_1, \dots, \mathcal{T}_m, \delta, \varepsilon$). Then $\textsc{Estimator}$ is $(\delta, \varepsilon, \{1, X_{\mathcal{T}_1}, \dots, X_{\mathcal{T}_m}, f\})$-multiaccurate, and the expected runtime of $\textsc{Estimator}$ is polynomial in $1/\delta$, $1/\varepsilon$, $1/\sigma_m$, and the sum of the representation sizes of $\mathcal{T}_1, \dots, \mathcal{T}_m$.
\end{restatable}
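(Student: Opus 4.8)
The plan is to analyze Algorithm~\ref{alg:approx_acc_haf} in two parts: correctness (that \textsc{Estimator} outputs an $(\varepsilon, \{1, X_{\mathcal{T}_1}, \dots, X_{\mathcal{T}_m}, f\})$-accurate estimator with probability $\ge 1 - \delta$) and runtime. For correctness, the key observation is Claim~\ref{claim:set_cov}: the true covariances are $\EE[\mathcal{D}_n]{X_{\mathcal{T}_i} X_{\mathcal{T}_j}} = \abs{S(\mathcal{T}_i) \cap S(\mathcal{T}_j)}$, which \textsc{Estimator} approximates by the sampling subroutines \textsc{NumPairings}/\textsc{Sample}/\textsc{Contains} (drawing uniform pairings from $S(\mathcal{T}_i)$ and checking membership in $S(\mathcal{T}_j)$, scaled by $\sqrt{\abs{S(\mathcal{T}_i)}\abs{S(\mathcal{T}_j)}}$ to get a correlation-matrix entry). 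Writing $\hat{M}$ for the estimated Gram/correlation matrix and $M$ for the true one, I would show that with an appropriate (adaptively chosen) number of samples, $\norm{\hat M - M}$ is small relative to $\sigma_m$ with probability $\ge 1 - \delta$, so $\hat M$ is invertible and $\hat M^{-1}$ is close to $M^{-1}$. Then the OLS coefficients computed from $\hat M$ and the approximate cross-covariances $\EE[\mathcal{D}_n]{Y X_{\mathcal{T}_i}} = \abs{S(\mathcal{T}_i)}$ (which are exact by the last line of Claim~\ref{claim:set_cov}, hence no sampling error there) are close to the true OLS coefficients. Finally I would invoke Proposition~\ref{prop:multiacc}: the exact OLS estimator is perfectly $\{1, X_{\mathcal{T}_1}, \dots, X_{\mathcal{T}_m}\}$-multiaccurate and self-accurate, and a perturbation analysis bounds $\abs{\EE[\mathcal{D}_n]{(Y - f)X_{\mathcal{T}_i}}}$ by something $O(\varepsilon)$ times $\sqrt{\EE{X_{\mathcal{T}_i}^2}\EE{f^2}}$, which is exactly the $(\varepsilon, X_{\mathcal{T}_i})$-accuracy condition of Definition~\ref{def:approx_acc}; self-accuracy of $f$ follows since $f$ is a linear combination of the $X_{\mathcal{T}_i}$'s (so exact self-accuracy holds for the true OLS estimator, and approximately for $f$).

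For the runtime, I would first argue that each helper function runs in time linear (or polynomial) in $\size(\mathcal{T}_i)$: \textsc{NumPairings} computes $\abs{S(\mathcal{T})}$ by the obvious recursion ($\abs{S(\mathcal{T}_1 \otimes \mathcal{T}_2)} = \abs{S(\mathcal{T}_1)}\cdot\abs{S(\mathcal{T}_2)}$, $\abs{S(\mathcal{T}_1 \cup \mathcal{T}_2)} = \abs{S(\mathcal{T}_1)} + \abs{S(\mathcal{T}_2)}$), \textsc{Sample} draws a uniform element of $S(\mathcal{T})$ by recursing with the right branching probabilities, and \textsc{Contains} checks membership of a pairing in $S(\mathcal{T})$ by a recursive descent. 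Then the dominant cost is the number of samples: I would show the adaptive stopping rule ensures the algorithm takes $\mathrm{poly}(m, 1/\varepsilon, 1/\delta, 1/\sigma_m)$ samples in expectation — intuitively, the number of samples needed to estimate each correlation entry to additive accuracy $\sim \varepsilon \sigma_m / m$ (enough to control $\norm{\hat M - M}$ and hence the coefficient error) is $\mathrm{poly}(m/(\varepsilon\sigma_m))$ by a Chernoff/Hoeffding bound, and the algorithm discovers when it has enough. Multiplying the sample count by the per-sample cost and adding the final OLS linear-algebra step (inverting an $m\times m$ matrix) gives the claimed polynomial bound.

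The main obstacle I expect is the perturbation analysis tying the covariance-estimation error to the final accuracy guarantee with the correct normalization. One has to be careful that $(\varepsilon, X)$-accuracy in Definition~\ref{def:approx_acc} is scale-invariant in $X$ (Remark~\ref{rem:approx_acc_rescale}) but the bound involves $\EE{f^2}$, which itself depends on the estimated coefficients; so the argument must control $\EE{f^2}$ from below (this is where approximate \emph{self}-accuracy and the well-conditioning assumption do real work, ruling out the degenerate near-zero-$f$ case implicitly) and relate everything back to $\sigma_m$. The adaptivity of the sample count also needs care: since the number of samples is a random variable depending on the samples themselves, I would set up a clean high-probability event (e.g., a union bound over a sufficiently fine grid of ``sample budgets'' or a martingale argument) on which all empirical correlation estimates are simultaneously accurate, and argue the algorithm halts within the budget on that event — converting the ``expected polynomial runtime'' claim into a statement about this good event plus a crude worst-case tail bound.
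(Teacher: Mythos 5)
Your outline follows essentially the same three-step plan as the paper's proof: (i) show that if the estimated correlation matrix $\hat C$ is close to the true $C$ (in spectral norm, relative to $\sigma_m$), then the returned $f$ is $(\varepsilon, \{1, X_{\mathcal{T}_1},\dots,X_{\mathcal{T}_m}, f\})$-multiaccurate; (ii) show $\hat C$ is close to $C$ with probability $\ge 1 - \delta$; (iii) bound the expected sample count. The paper proves (i) as Lemma~\ref{lem:if_delta_small} by directly bounding the bilinear forms $\EE{(Y-f)\tilde X_i}$, $\EE{(Y-f)f}$, and $\EE{f^2}$ in terms of $\norm{\hat C^{-1}-C^{-1}}_2$ (and crucially lower-bounding $\EE{f^2}\ge\tau/m\cdot(1-\varepsilon)$ via $\vect d^\top C^{-1}\vect d\ge\abs{S_1}$), proves (ii) as Lemma~\ref{lem:delta_small_whp} using the Laurent--Massart $\chi^2$ tail bound plus a two-part argument for the adaptive stopping rule (the rule does not halt too early, and once past $s_0$ samples the running $\norm{\Delta_s}$ stays small via a reflection/symmetry argument), and does (iii) by a geometric-series tail estimate.

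One place your sketch is too glib: ``self-accuracy of $f$ follows since $f$ is a linear combination of the $X_{\mathcal{T}_i}$'s, so exact self-accuracy holds for the true OLS estimator, and approximately for $f$.'' This perturbation step is not automatic. Approximate $X_i$-accuracy for each $i$ does \emph{not} imply approximate self-accuracy for a linear combination of the $X_i$'s when the regression coefficients are large (the paper explicitly warns about this in the ridge-regression discussion). The route you suggest does go through, but only after noticing two facts you did not state: that $Y - f^*$ is orthogonal to $f - f^*$ (since both $f$ and $f^*$ lie in the span of the $X_i$'s and $f^*$ is OLS), so $\EE{(Y-f)f} = \EE{(f^*-f)f}$, and that $\EE{(f^*-f)^2}/\EE{f^2}$ can be bounded by $m^2\norm{\hat C^{-1}-C^{-1}}_2^2/(1-\varepsilon)$, which in turn requires the nontrivial lower bound on $\EE{f^2}$. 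You do flag the $\EE{f^2}$ lower bound as an obstacle, so the plan is salvageable, but as written that sentence reads as if self-accuracy were a free corollary, which it is not. A second, more minor difference: the paper's adaptive-stopping analysis is a specific two-fact decomposition with a maximal-inequality-style argument over the trajectory of $\norm{\Delta_s}$, rather than the grid/union-bound idea you sketch; both should work, but the grid approach would cost you an extra $\log$ factor and requires care because the stopping rule uses the data-dependent quantity $\hat\sigma_m$, not $\sigma_m$.
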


We provide the proof of Theorem~\ref{thm:haf_approx} in the appendix. The proof proceeds in three steps:
\begin{enumerate}[label=(\arabic*)]
    \item We consider the difference $\Delta$ between our estimate $\hat{C}$ of the correlation matrix of the $X_{\mathcal{T}_i}$'s and the actual correlation matrix $C$. If we could estimate $C$ perfectly (i.e.\ $\Delta = 0$) then the estimate given by \textsc{Estimator} would be exactly $\{1, X_{\mathcal{T}_1}, \dots, X_{\mathcal{T}_m}, f\}$-multiaccurate. What if $\Delta$ is merely close to $0$? We show that, so long as the spectral norm $\norm{\Delta}_2$ is sufficiently small (namely, if $\norm{\Delta}_2 \le \frac{\varepsilon \sigma_m \hat{\sigma}_m}{2m}$, where $\hat{\sigma}_m$ is the smallest singular value of $\hat{C}$), then $f$ is approximately multiaccurate (Lemma~\ref{lem:if_delta_small}).
    
    \item We show that $\norm{\Delta}_2$ is indeed this small with high probability (Lemma~\ref{lem:delta_small_whp}). We do so by bounding $\norm{\Delta}_2$ in terms of the sum of the squared errors of our estimates of each entry of $C$, and using concentration inequalities in order to bound the probability that this sum of errors is large.

    \item We show that the expected number of samples taken by \textsc{Estimator} is polynomial in $m, 1/\delta, 1/\varepsilon$, and $1/\sigma_m$: in particular, that it is $O \parens{\frac{m^2(m^2 + \ln(1/\delta))}{\varepsilon^2 \sigma_m^4}}$.
\end{enumerate}

The main limitation of Theorem~\ref{thm:haf_approx} is that the runtime of \textsc{Estimator} depends on $\sigma_m$ (specifically, it goes as $1/\sigma_m^4$). This is unfortunate because $\sigma_m$ is unknown (and hard to approximate) and can be very small: for example, if $m = 2$ and $S(\mathcal{T}_1)$ and $S(\mathcal{T}_2)$ are both exponentially large and differ by only one pairing, then $\sigma_m$ is exponentially small.

Is it possible to produce an estimation algorithm whose output $f$ is $(\delta, \varepsilon, \{1, X_{\mathcal{T}_1}, \dots, X_{\mathcal{T}_m}, f\})$-multiaccurate and whose expected runtime does not depend on $\sigma_m$? We believe that using a number of samples that does not depend on $\hat{\sigma}_m$ to estimate $C$, and then using ridge regression \citep{hastie09elements} in place of linear regression, will yield  a $(\delta, \varepsilon, \{1, X_{\mathcal{T}_1}, \dots, X_{\mathcal{T}_m}\})$-multiaccurate estimator in polynomial time that does not depend on $\sigma_m$. However, we believe that such an estimator would not necessarily be self-accurate.\footnote{In the context of \emph{exact} accuracy, an estimator that is multiaccurate with respect to a set of predictors will also be accurate with respect to any linear combination. However, the picture is more complicated for approximate accuracy. Exact merges may require regression coefficients on the predictors $X_{\mathcal{T}_i}$ that are exponentially large in $m$; as $\varepsilon \to 0$, regression coefficients for an approximate merge need to become large as well. Because of these large coefficients, approximate accuracy with respect to each predictor does not necessarily guarantee approximate self-accuracy.} (See Remark~\ref{rem:self_accuracy} for why we insist that our estimators be approximately self-accurate.) Thus, this question remains open.

There are other natural follow-up questions to ask. For example, we have defined the \emph{union} of two pairing structures as a merge of two sets of pairings that are promised to be disjoint. However, often we wish to merge two non-disjoint sets of pairings. We can do so by allowing pairing structures to represent not just sets of pairings, but formal linear combinations of pairings. Then, we can replace the union operation with a \emph{linear merge} operation. Specifically, given a pairing structure $\mathcal{T}_1$ that represents the linear combination $\sum_{p \in \mathcal{P}_2(K)} c_{1, p} p$ and a pairing structure $\mathcal{T}_2$ that represents the linear combination $\sum_{p \in \mathcal{P}_2(K)} c_{2, p} p$, we let the linear merge of $\mathcal{T}_1$ and $\mathcal{T}_2$ represent the linear combination of $\mathcal{T}_1$ and $\mathcal{T}_2$ that best approximates $\sum_{p \in \mathcal{P}_2(K)} p$. At present, the question of approximate merging of arguments based on this generalized notion of pairing structures remains open.

\section{Estimating the permanent of a matrix} \label{sec:perm}
We now turn our attention to a different but closely related setting: estimating permanents of matrices. The permanent of an $n \times n$ matrix $A$ is defined as
\[\perm(A) := \sum_{\sigma\in S_n} \prod_{i = 1}^n A_{i, \sigma(i)},\]
where $S_n$ is the group of permutations of $[n]$.

The permanent is like the determinant, except that it does not take into account the signs of the permutations. However, unlike the determinant, the permanent of a matrix cannot be computed efficiently: computing the permanent of a matrix with $0, 1$ entries is $\mathsf{\#P}$-hard \citep{valiant79permanent}. In fact, it is NP-hard to approximate the permanent of a positive semidefinite matrix even to within an exponential factor \citep{ebrahimnejad24permanent}.\footnote{Specifically, it is NP-hard to approximate the permanent of an $n \times n$ PSD matrix to within a factor of $e^{0.577n}$. Positive semidefinite matrices have nonnegative permanents, so approximation ratios are reasonable in this context.} On the other hand, we will see a variety of heuristic arguments about the permanent of a matrix; incorporating all of these arguments into a single estimate of the permanent presents a challenge. For these reasons, permanent estimation is a good testbed for heuristically estimating quantities from multiple arguments.

We will begin by introducing three heuristic estimates for the permanent, which we call the \emph{row sum,} \emph{column sum,} and \emph{matrix sum} estimates. Although there is a straightforward accurate merge of these estimate (via linear regression), this merged estimator gives negative estimates for the permanent of some matrices with only positive entries. Although an \emph{ad hoc} modification of the linear regression estimator may fix this issue, we see no principled way to correct the estimator in light of this observation. By contrast, there \emph{is} a well-motivated estimator that merges the row sum, column sum, and matrix sum estimates, but the estimator is multiplicative in nature and does not satisfy any accuracy properties (which are by their nature additive). Our exploration gives us further evidence that accuracy is not the correct approach to formalizing the principle of unpredictable errors for heuristic estimators.\\

For the remainder of this section, we will fix a positive integer $n$ and define $\mathcal{D}$ to be the distribution of matrices whose entries are independently sampled from the standard normal distribution $\mathcal{N}(0, 1)$. We will abuse notation by also letting $\mathcal{D}$ represent the distribution over \emph{permanents} of these matrices; the meaning will always be inferrable from context.

\subsection{Simple estimators for the permanent}
We begin by introducing three different estimates for the permanent of a matrix, all of which are inspired by the presumption of independence \citep{christiano2022formalizing}.\footnote{Code for computing various estimates of permanents and hafnians, and for estimating OLS regression coefficients over various distributions, can be found at: \texttt{\url{https://github.com/alignment-research-center/heuristic-hafnian}}.} Concretely, we may write
\[\perm(A) = n! \EE[\sigma \sim S_n]{\prod_{i = 1}^n A_{i, \sigma(i)}},\]
where $\sigma$ is uniformly sampled from $S_n$. If we treat $\sigma(1), \dots, \sigma(n)$ as independent random variables, then we may exchange the expectation and the product. This gives us the \emph{row sum estimate} for the permanent of $A$:
\[E_{\row}(A) := n! \prod_{i = 1}^n \EE[\sigma \sim S_n]{A_{i, \sigma(i)}} = \frac{n!}{n^n} \prod_{i = 1}^n \sum_{j = 1}^n A_{i, j}.\]
Note that the row sum estimate is the average permanent of all matrices obtained from $A$ by shuffling the entries in each row of $A$ independently.

If instead we write $\perm(A) = n! \EE[\tau \sim S_n]{\prod_i A_{\tau(i), i}}$ and presume that $\tau(1), \dots, \tau(n)$ are independent, we obtain the \emph{column sum estimate} for the permanent of $A$:
\[E_{\col}(A) := \frac{n!}{n^n} \prod_{j = 1}^n \sum_{i = 1}^n A_{i, j}.\]
Finally, we may also write
\[\perm(A) = n! \EE[\tau, \sigma \sim S_n]{\prod_{i = 1}^n A_{\tau(i), \sigma(i)}},\]
where $\tau, \sigma \in S_n$ are sampled uniformly and independently. If we presume that all of $\sigma(1), \dots, \sigma(n), \tau(1), \dots, \tau(n)$ are independent, we obtain the \emph{matrix sum estimate} for the permanent of $A$:
\[E_{\ms}(A) := n! \prod_{i = 1}^n \EE[\tau, \sigma \sim S_n]{A_{\tau(i), \sigma(i)}} = \frac{n!}{n^{2n}} \parens{\sum_{i = 1}^n \sum_{j = 1}^n A_{i, j}}^n.\]

What properties do these estimators have? $E_{\row}$ and $E_{\col}$ are $1$-accurate and self-accurate over $\mathcal{D}$, and are both $E_{\ms}$-accurate as well.

\begin{claim}
    $E_{\row}$ is $\{1, E_{\row}, E_{\ms}\}$-multiaccurate over $\mathcal{D}$. $E_{\col}$ is $\{1, E_{\col}, E_{\ms}\}$-multiaccurate over $\mathcal{D}$.
\end{claim}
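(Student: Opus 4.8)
The plan is to exploit a symmetry of $\mathcal{D}$ that turns $E_{\row}$ into an average of $\perm$ over an orbit. Let $G_{\row} = (S_n)^n$ act on $n \times n$ matrices by permuting the entries within each row: for $\tau = (\tau_1, \dots, \tau_n) \in G_{\row}$, set $(A^\tau)_{i,j} := A_{i, \tau_i(j)}$. Two facts drive everything. First, $\mathcal{D}$ is invariant under this action, since reshuffling the entries of a row of i.i.d.\ Gaussians does not change the joint distribution. Second, averaging $\perm$ over $G_{\row}$ recovers $E_{\row}$: since $\perm(A^\tau) = \sum_{\sigma \in S_n} \prod_i A_{i, \tau_i(\sigma(i))}$ and the $\tau_i$ are independent with the $i$-th factor depending only on $\tau_i$, we get $\EE[\tau \sim G_{\row}]{\perm(A^\tau)} = \sum_{\sigma} \prod_i \EE[\tau_i]{A_{i, \tau_i(\sigma(i))}} = \sum_{\sigma} \prod_i \tfrac{1}{n}\sum_j A_{i,j} = E_{\row}(A)$, because for fixed $\sigma(i)$ the index $\tau_i(\sigma(i))$ is uniform on $[n]$.

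From these two facts I would extract the key lemma: for every $G_{\row}$-invariant function $h$ (i.e.\ $h(A^\tau) = h(A)$ for all $\tau$) with $\EE[\mathcal{D}]{h(A)^2} < \infty$, we have $\EE[\mathcal{D}]{(\perm(A) - E_{\row}(A))\, h(A)} = 0$. The proof is a one-line averaging argument: by $\mathcal{D}$-invariance, $\EE[\mathcal{D}]{\perm(A) h(A)} = \EE[\mathcal{D}]{\perm(A^\tau) h(A^\tau)}$ for each fixed $\tau$; averaging over $\tau$ and using $h(A^\tau) = h(A)$ converts the left factor into $E_{\row}(A)$, so $\EE[\mathcal{D}]{\perm(A) h(A)} = \EE[\mathcal{D}]{E_{\row}(A) h(A)}$. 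Integrability is automatic: $\perm$, $E_{\row}$, $E_{\ms}$ are all polynomials in Gaussians, so every product in sight has finite expectation.

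With the lemma the $E_{\row}$ claim is three applications. Taking $h = 1$ gives $1$-accuracy. Taking $h = E_{\row}$ gives self-accuracy, and this is legitimate because $E_{\row}(A) = \frac{n!}{n^n}\prod_i \sum_j A_{i,j}$ is visibly unchanged by reordering the entries within each row. Taking $h = E_{\ms}$ gives $E_{\ms}$-accuracy, and this too is legitimate because $E_{\ms}(A) = \frac{n!}{n^{2n}}\big(\sum_{i,j} A_{i,j}\big)^n$ depends on $A$ only through the grand total $\sum_{i,j} A_{i,j}$, which is $G_{\row}$-invariant. The $E_{\col}$ statement follows by the mirror-image argument with the group $G_{\col}$ of within-column permutations (for which $E_{\col}$ and $E_{\ms}$ are both invariant), or simply by transposing: $\perm(A^\top) = \perm(A)$, $E_{\col}(A) = E_{\row}(A^\top)$, $E_{\ms}(A) = E_{\ms}(A^\top)$, and $\mathcal{D}$ is transpose-invariant.

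I do not expect a genuine obstacle; the only points needing care are verifying the averaging identity $\EE[\tau]{\perm(A^\tau)} = E_{\row}(A)$ and confirming that $E_{\ms}$ is invariant under within-row shuffles (it is, since it only sees the total sum). As a fallback that avoids the symmetry packaging, one can expand everything into monomials in the $A_{i,j}$ and compute the expectations directly from $\EE{A_{i,j}} = 0$, $\EE{A_{i,j}^2} = 1$, and independence of distinct entries: one finds $\EE[\mathcal{D}]{\perm(A)} = \EE[\mathcal{D}]{E_{\row}(A)} = 0$, then $\EE[\mathcal{D}]{\perm(A) E_{\row}(A)} = \EE[\mathcal{D}]{E_{\row}(A)^2} = (n!)^2/n^n$, and finally $\EE[\mathcal{D}]{\perm(A) E_{\ms}(A)} = \EE[\mathcal{D}]{E_{\row}(A) E_{\ms}(A)} = (n!)^3/n^{2n}$, each pair of equalities yielding one of the accuracy identities. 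But the symmetry argument is cleaner and more illuminating, so I would lead with it.
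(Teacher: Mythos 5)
Your proof is correct and is essentially the paper's argument in a slightly different notation: the paper conditions on the $\sigma$-algebra generated by the tuple of row multisets $\mathcal{R}(A)$ and applies the tower property, which is the same as your averaging over the group $G_{\row}$ of within-row permutations under which $\mathcal{D}$ is invariant. The one cosmetic difference is that the paper handles $1$-accuracy by a separate sign-flip argument showing both means vanish, whereas you derive it from the same lemma with $h=1$; either is fine.
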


\begin{proof}
    To see that $E_{\row}$ is $1$-accurate over $\mathcal{D}$, note that the average permanent over $\mathcal{D}$ is zero (consider flipping the sign of all entries in the first row of a matrix), as is the average value of $E_{\row}$ over $\mathcal{D}$ (for the same reason).
    
    To show that $E_{\row}$ is $E_{\row}$-accurate and $E_{\ms}$-accurate, we first observe that $E_{\row}(A)$ is the best estimate of $\perm(A)$ given only the set of $n$ entries in each row (but not their order). To see this, consider any matrix $A$, and let $\mathcal{D}_{\row}(A)$ be $\mathcal{D}$ restricted to matrices with the same set of entries in each row as $A$. Then $\mathcal{D}_{\row}(A)$ is the uniform distribution over the $(n!)^n$ matrices obtained by permuting the entries within each row of $A$; the average permanent over $\mathcal{D}_{\row}(A)$ is equal to $E_{\row}(A)$.
    
    Now, let $X$ be any predictor (i.e.\ function of the matrix $A$) that depends only on the set of entries in each row. Both $E_{\row}$ and $E_{\ms}$ are examples of such predictors. For notational convenience, let $\mathcal{R}(A)$ be the $n$-tuple whose $i$-th entry is the set of $n$ entries in the $i$-th row of $A$. We have
    \begin{align*}
        \EE[A \sim \mathcal{D}]{(\perm(A) - E_{\row}(A))X} &= \EE[A \sim \mathcal{D}]{\EE{(\perm(A) - E_{\row}(A))X \mid \mathcal{R}(A)}}\\
        &= \EE[A \sim \mathcal{D}]{X \EE{\perm(A) - E_{\row}(A) \mid \mathcal{R}(A)}} = 0,
    \end{align*}
    where the second step follows by the ``taking out constants'' property of conditional expectations. Thus, $E_{\row}$ is $E_{\row}$-accurate and $E_{\ms}$-accurate. The fact that $E_{\col}$ is $\{1, E_{\col}, E_{\ms}\}$-multiaccurate over $\mathcal{D}$ is analogous.
\end{proof}

What about $E_{\ms}$? Unfortunately, $E_{\ms}$ is only $1$-accurate for odd $n$ (for even $n$, $E_{\ms}$ is always nonnegative), and is not self-accurate for $n > 2$. On the other hand, Proposition~\ref{prop:multiacc} tells us that the OLS estimator of $Y$ with predictors $1$ and $E_{\ms}$ will be $1$-accurate and self-accurate. It can be shown that this estimator is
\begin{equation} \label{eq:msprime}
    E'_{\ms}(A) := \frac{n!}{(2n - 1)!! - \mathbbm{1}_{\text{$n$ is even}} (n - 1)!!^2} \parens{E_{\ms}(A) - \frac{\mathbbm{1}_{\text{$n$ is even}} n! (n - 1)!!}{n^n}}.
\end{equation}

Similarly, we may combine $E_{\row}$ and $E_{\col}$ via OLS to create an estimator that is $\{1, E_{\row}, E_{\col}\}$-multiaccurate:
\[E_{\row, \col}(A) := \frac{n^n}{n^n + n!}(E_{\row}(A) + E_{\col}(A)).\]
On the other hand, $E_{\mathrm{row}, \mathrm{col}}$ is not $E_{\ms}$-accurate (even though it is $E_{\row}$-accurate and $E_{\row}$ is $E_{\ms}$-accurate -- see Remark~\ref{remark:nontransitive}). We can make the estimate better by adding a constant multiple of $E_{\ms}$:
\begin{equation} \label{eq:rowcolms}
    E_{\row, \col, \ms}(A) := \frac{\parens{1 - \frac{n!}{n^n}}b(E_{\row}(A) + E_{\col}(A)) - \parens{1 - \frac{n!}{n^n}} E'_{\ms}(A)}{2 \parens{1 - \frac{n!}{n^n}b} - \parens{1 - \frac{n!}{n^n}}},
\end{equation}
where $b := \frac{n!}{(2n - 1)!! - \mathbbm{1}_{\text{$n$ is even}} (n - 1)!!^2}$ is the regression coefficient of the permanent onto $E_{\ms}$, as in equation~\ref{eq:msprime}.

\subsection{Adding in non-negativity}
If we are interested in heuristically estimating the permanent of a matrix drawn from $\mathcal{D}$, then we may want to use a heuristic estimator that can incorporate the aforementioned estimates $E_{\row}$, $E_{\col}$, and $E_{\ms}$. More precisely, we may want our heuristic estimator $\GG$ to be self-accurate and $\{1, E_{\row}, E_{\col}, E_{\ms}\}$-multiaccurate. This alone is simple enough: for example, there could be arguments $\pi_{\row}$, $\pi_{\col}$, and $\pi_{\ms}$ that cause $\GG$ to output $E_{\row}$, $E_{\col}$, and $E_{\ms}$, respectively, and $\GG$ may merge these arguments with OLS regression as discussed in Section~\ref{sec:g_multiacc_ols}. For example, $\GG(\perm(A) \mid \pi_{\row}, \pi_{\col}, \pi_{\ms}) = E_{\row, \col, \ms}(A)$ as in equation~\ref{eq:rowcolms}.

However, we may want $\GG$ to satisfy some additional properties. A simple one: there should be an argument that causes $\GG$ to output nonnegative estimates for permanents of matrices with nonnegative entries. That is, there should be an argument $\pi_{\ge 0}$ such that $\GG(\perm(A) \mid \Pi) \ge 0$ for all $\Pi \ni \pi_{\ge 0}$ and all matrices $A \ge 0$. (We will write $A \ge 0$ to mean that all entries of $A$ are nonnegative.)

The aforementioned OLS regression estimator does not satisfy this property by default: $E_{\row, \col, \ms}(A)$ can be negative, even if $A \ge 0$.\footnote{For example, for $n$ odd, $A$ could be the $n \times n$ matrix with all zeros except for one $1$; then $E_{\row}(A) = E_{\col}(A) = 0$, while $E_{\ms}(A)$ is positive.} So how do we construct a heuristic estimator $\GG$ that is self-accurate, $\{1, E_{\row}, E_{\col}, E_{\ms}\}$-multiaccurate, and is able to estimate matrices with nonnegative entries as having nonnegative permanents?

We do not know of a natural way of doing so. It is perhaps possible to satisfy all of these constraints in an artificial way: for example, there may be coefficients $c \ge 0$ and $\alpha, \beta_{\row}, \beta_{\col}, \beta_{\ms}$ for which the estimator
\begin{equation} \label{eq:unnatural}
    \GG(\perm(A) \mid \pi_{\row}, \pi_{\col}, \pi_{\ms}, \pi_{\ge 0}) := \begin{cases}c & \text{if $A \ge 0$} \\ \alpha + \beta_{\row} E_{\row} + \beta_{\col} E_{\col} + \beta_{\ms} E_{\ms} & \text{otherwise}\end{cases}
\end{equation}
satisfies all of the aforementioned properties. However, it seems like a mistake for $\GG$ to entirely ignore the estimates given by $\pi_\row$, $\pi_\col$, and $\pi_\ms$ in the case $A \ge 0$. To the extent that we are interested in finding a heuristic estimator that merges estimates in a natural, well-motivated way, it seems that we ought to reject a merge such as the one in equation~\ref{eq:unnatural}.

There is, however, a quite natural merge of $\pi_\row$, $\pi_\col$, $\pi_\ms$ for matrices with non-negative entries: specifically, $\frac{E_\row(A) E_\col(A)}{E_\ms(A)}$. One intuition for this estimate is as follows: we start with the estimate $E_\ms(A)$, which computes the average product of $n$ randomly selected entries of $A$ (and multiplies by $n!$, since the permanent is a sum of $n!$ products). Although we justified $E_\ms$ earlier with a presumption of independence, this estimator fails to notice that all entries must be in distinct rows and distinct columns in order to count toward the permanent. By contrast, $E_\row$ computes the average product of $n$ randomly selected entries of $A$, one from each row (and multiplies by $n!$). Thus, $E_\row$ -- unlike $E_\ms$ -- notices that entries must come from different rows; correspondingly, multiplying our initial estimate of $E_\ms(A)$ by a corrective factor of $\frac{E_\row(A)}{E_\ms(A)}$ changes our estimate to take this into consideration. Unsurprisingly, this gives us the estimate $E_\row(A)$. We then apply the analogous corrective factor for the observation that entries must come from different columns; this means multiplying by $\frac{E_\col(A)}{E_\ms(A)}$, thus giving us the aforementioned estimator $\frac{E_\row(A) E_\col(A)}{E_\ms(A)}$.

More formally, we may justify this estimator in terms of a presumption of independence, much as we did for $E_\row$, $E_\col$, and $E_\ms$ above.\footnote{See \citet[\S A.2.3]{christiano2022formalizing} for a closely related heuristic estimate about Hamiltonian cycles.} Specifically, let $A$ be an $n \times n$ matrix with $0, 1$ entries that are not all zero, and let $(I_1, J_1), \dots, (I_n, J_n)$ be sampled uniformly and independently from $\{1, \dots, n\}^2$. Let $S := \prod_{k = 1}^n A_{I_k, J_k}$, $R := S \cdot \mathbbm{1}_{I_1, \dots, I_k \text{ are distinct}}$, and $C := S \cdot \mathbbm{1}_{J_1, \dots, J_k \text{ are distinct}}$. We observe the following relationships between $R, C, S$ and $E_\row$, $E_\col$, and $E_\ms$:

\begin{align*}
    \EE[I, J]{S^2} = \EE[I, J]{S} &= \frac{1}{n!} E_{\ms}(A)\\
    \EE[I, J]{RS} = \EE[I, J]{R^2} = \EE[I, J]{R} &= \frac{1}{n^n} E_{\row}(A)\\
    \EE[I,J]{CS} = \EE[I, J]{C^2} = \EE[I, J]{C} &= \frac{1}{n^n} E_{\col}(A)\\
    \EE[I, J]{RC} &= \frac{n!}{n^{2n}} \perm(A).
\end{align*}

Since $\perm(A) = \frac{n^{2n}}{n!} \EE[I, J]{RC}$, any estimate of $\EE[I, J]{RC}$ yields a corresponding estimate of the permanent. To estimate $\EE[I, J]{RC}$, we replace $R$ and $C$ by their linear regressions onto $S$. This can be thought of as presuming that the residuals are independent. Doing so gives the estimate
\[\EE[I, J]{RC} \approx \EE[I, J]{R} \EE[I, J]{C} + \frac{\Cov[I, J]{R, S} \Cov[I, J]{C, S}}{\Var[I, J]{S}}.\]
Substituting our above expressions for $\EE[I, J]{RS}$ and so forth in terms of $E_{\ms}$, $E_{\row}$, and $E_{\col}$ gives us the estimate
\[\perm(A) \approx \frac{E_\row(A) E_\col(A)}{E_\ms(A)},\]
as promised.

This estimate appears to combine $E_\row$, $E_\col$, and $E_\ms$ for 0-1 matrices in a reasonable way, and additionally has the property that it is non-negative for matrices with non-negative entries. Unfortunately, however, it is not $1$-accurate (over 0-1 matrices, or indeed -- as far as we know -- over any natural distribution over matrices), nor is it self-accurate. Further, it is not well-defined over $\mathcal{D}$, or indeed over any distribution containing matrices for which $E_\ms$ can be zero while $E_\row$ and $E_\col$ are not zero. Changing the estimator in equation~\ref{eq:unnatural} by returning $\frac{E_\row(A) E_\col(A)}{E_\ms(A)}$ if $A \ge 0$ still does not yield a $1$-accurate or self-accurate estimator over $\mathcal{D}$.

One way to improve the accuracy properties of this estimator over 0-1 matrices is to instead use the \emph{unique sum estimate}
\[E_{\mathrm{us}}(A) := \frac{n!}{\binom{n^2}{n}} \sum_{\substack{S \subseteq [n]^2 \\ \abs{S} = n}} \prod_{(i, j) \in S}^n A_{i, j}\]
in place of the matrix sum estimate $E_\ms$.\footnote{While the unique sum estimate cannot in general be computed efficiently, it can be computed efficiently for 0-1 matrices.} The estimate $\frac{E_\row(A) E_\col(A)}{E_{\mathrm{us}}(A)}$ can be justified in a similar way to $\frac{E_\row(A) E_\col(A)}{E_{\ms}(A)}$ by considering the quantity $U := S \mathbbm{1}_{(I_1, J_1), \dots, (I_n, J_n) \text{ are distinct}}$. Although this new estimator closer to being self-accurate than $\frac{E_\row(A) E_\col(A)}{E_\ms(A)}$, it is still not perfectly self-accurate.

\section{Revisiting desiderata for heuristic estimation} \label{sec:revisiting}
In the previous sections, we explored accuracy as a potential formal property of heuristic estimators. In this section, we put our discussion in the broader context of other desiderata for heuristic estimation.

\subsection{Iterated estimation and error orthogonality, revisited}
In Section~\ref{sec:iterated_estimation}, we defined the iterated estimation and error orthogonality properties of $\GG$ to formalize the principle of unpredictable errors. We contextualized these properties in terms of subjective expected values from Bayesian epistemology, thinking of $\GG$ as an algorithm that extracts a subjective expectation from a state of knowledge.

In Section~\ref{sec:challenges}, we discussed difficulties of these two properties. The difficulties stem from the fact that both properties involve $\GG$ estimating its own output. In Section~\ref{sec:multiacc}, we attempted to circumvent these difficulties by instead requiring $\GG$ to be (objectively) accurate over a distribution of input expressions. However, in Sections~\ref{sec:hafnian} and~\ref{sec:perm}, we demonstrated some barriers to this objective approach.

\emph{First,} we introduced a quite simple and natural estimation problem: estimating the expected product of $n$ jointly normal random variables. We observed that this problem was equivalent to estimating the hafnian of the covariance matrix, i.e.\ the sum of products of covariances over all possible ways of pairing the variables. This observation gave us a natural class of estimators, namely the sum over any subset of the set of all pairings. We asked whether it was possible to produce an accurate merge of two such estimators over the distribution of covariance matrices in which each off-diagonal entry is drawn from $\mathcal{N}(0, 1)$. While we have not definitively ruled out this possibility, we showed that producing an accurate \emph{linear} merge is $\mathsf{\#P}$-hard. Thus, perfect accuracy seems likely to be out of reach even in quite simple and natural settings.

\emph{Second,} we introduced a notion of approximate accuracy and exhibited a sampling-based algorithm for merging the aforementioned estimators into an approximately accurate estimator. The algorithm estimates correlations between the estimators and then performs OLS regression. However, the expected runtime of the algorithm has polynomial dependence on the condition number of the correlation matrix of the estimators.\footnote{More precisely, we showed that the expected runtime of the algorithm depends polynomially on the reciprocal of the smallest singular value of the covariance matrix. Since the largest singular value of the correlation matrix is between $1$ and $m$, this polynomial dependence also implies a polynomial dependence on the condition number of the matrix.} This is an issue, given that the condition number can be exponential in $n$. Substituting ridge regression for OLS regression removes this polynomial dependence; however, we believe that this comes at the expense of self-accuracy for the merged estimator. We are not aware of an algorithm for merging these estimators in a way that is, with high probability, approximately accurate with respect to the estimators being merged and also approximately self-accurate. While we have not ruled out such an algorithm, we believe that such an algorithm is unlikely to be simple, despite the ostensibly simple problem setting.

\emph{Third,} we explored another natural estimation problem: estimating the permanent of an $n \times n$ matrix. We introduced three estimators -- $E_\row$, $E_\col$, and $E_\ms$ -- and justified the estimators based on three different presumptions of independence. We observed that although there is an accurate merge of the estimators\footnote{Over the distribution $\mathcal{D}$ of matrices whose entries are independently drawn from $\mathcal{N}(0, 1)$.} (namely, the linear regression merge), modifying this merge to satisfy other desirable properties in seemingly out of reach. For example, there seems to be no natural way to modify this linear regression estimator to give non-negative estimates for permanents of non-negative matrices while still maintaining accuracy properties. In fact, we showed that there \emph{is} a natural merge of $E_\row$, $E_\col$, and $E_\ms$ for non-negative matrices -- namely, $\frac{E_\row E_\col}{E_\ms}$ -- but this merge does not satisfy our accuracy properties. The fact that our notion of accuracy does not play well with ``multiplicative'' estimates such as this one appears to be an important limitation.

\emph{Finally,} perhaps the greatest limitation of the accuracy perspective on the principle of unpredictable errors is that the choice of distribution $\mathcal{D}$ is seemingly arbitrary. We would like $\GG$ to produce subjective estimates of quantities based on a state of knowledge; while assuming a background distribution may be useful as a crutch, our ultimate goal is to find a ``distribution-free'' perspective.

In light of these considerations, we believe that accuracy is not an adequate substitute for iterated estimation and error orthogonality. We still believe that a good heuristic estimator $\GG$ -- if one exists -- ought to satisfy some formal property that formalizes the principle of unpredictable errors. However, we do not currently have a suggestion for such a property that we feel satisfied with.

\subsection{Other desiderata}
In Section~\ref{sec:new_properties}, we recalled the \emph{linearity} and \emph{respect for proofs} desiderata introduced by \citet{christiano2022formalizing}. Roughly speaking, linearity requires that $\GG(X + Y \mid \Pi) = G(X \mid \Pi) + G(Y \mid \Pi)$, while respect for proofs requires that if $\pi$ proves that $Y \ge 0$, then $G(Y \mid \Pi) \ge 0$ for all $\Pi$ containing $\pi$.

\citet[Chapter 9]{neyman2024algorithmic} explored these two desiderata in the context of estimating the expected output (i.e.\ acceptance probability) of boolean circuits. In that setting, it is possible to satisfy linearity and respect for proofs using a linear programming-based algorithm. In brief, the algorithm creates a list of constraints that the estimator's outputs must satisfy in order for the estimator to satisfy linearity and respect for proofs. Then, the algorithm finds a solution to these constraints using linear programming. Unfortunately, while this algorithm satisfies linearity and respect for proofs, it has many drawbacks. For example, the algorithm's estimates are affected by the introduction of uninformative arguments.

\citet[Chapter 9]{neyman2024algorithmic} also discusses desiderata beyond linearity and respect for proofs, including iterated estimation and \emph{pulling out known factors}: for all $X, Y$ and $\Pi' \subseteq \Pi$, we have
\[\GG(\GG(X \mid \Pi') Y \mid \Pi) = \GG(X \mid \Pi') G(Y \mid \Pi).\]
Much like iterated estimation and error orthogonality, this property is inspired by conditional expectations: specifically the \emph{pulling out known factors} property, which states that for $\sigma$-algebras $\mathcal{H}' \subseteq \mathcal{H}$, we have $\EE{\EE{X \mid \mathcal{H}'} Y \mid \mathcal{H}} = \EE{X \mid \mathcal{H}'} \EE{Y \mid \mathcal{H}}$.\footnote{Typically, the property is instead stated as follows: for random variables $X, Y$ where $X$ is $\mathcal{H}$-measurable, we have $\EE{XY \mid \mathcal{H}} = X \EE{Y \mid \mathcal{H}}$. However, our statement makes clearer the connection to the corresponding desideratum for $\GG$.} However, this desideratum likely suffers from the same difficulties as iterated estimation and error orthogonality, which arise from the fact that the properties refer to $\GG$'s estimates of its own outputs.

We are left with an interesting formalization challenge. Linearity and respect for proofs, while compelling, are insufficiently stringent: a heuristic estimator that satisfies them may be unnatural or suffer from serious drawbacks, and may not capture any reasonable process of forming subjective expectations. This raises the question of whether we can state an additional formal property that will help pin down the behavior of $\GG$. Our various attempts so far -- iterated estimation, error orthogonality, accuracy, and pulling out known factors -- all seem inadequate in their current forms. However, we find it plausible that there exists an additional formal property that, together with linearity and respect for proofs, will successfully pin down the behavior of $\GG$: be satisfiable, but only by algorithms that reasonably capture the notion of subjective expectation. Finding such a property, and searching for a heuristic estimator that satisfies it, is an interesting and important direction for future work.

\section{Future directions} \label{sec:conclusion}
\subsection{The main challenges ahead for heuristic estimation}
In this work, we studied heuristic estimation by exploring ways to formalize the principle of unpredictable errors. Through this exploration, we saw a different perspective on this principle: that a heuristic estimator ought to ``merge'' arguments in a way that takes into account the information provided by each argument being merged. We attempted to formalize this property from two different angles: subjective Bayesian expectation (Section~\ref{sec:iterated_estimation}) and objective expected value over a distribution (Sections~\ref{sec:multiacc} through~\ref{sec:perm}). In both cases, we ran into substantial roadblocks, suggesting that we have not yet found an adequate formalization.

Ultimately, the purpose of formalizing the principle of unpredictable errors is to help guide the search for a heuristic estimator $\GG$ that merges heuristic arguments in a reasonable way. Finding such a $\GG$ remains an important and difficult open problem.

Handling argument merges is not the only obstacle to finding a reasonable $\GG$. Another challenge is handling cherry-picked arguments, i.e.\ arguments that have been optimized to cause $\GG$'s estimate to be as small or as large as possible (perhaps by a computationally bounded adversary). For example, suppose that $Y$ is a sum of many mathematical expressions, and $\pi$ computes all of the positive terms but none of the negative terms; what should $\GG(Y \mid \pi)$ return? Perhaps $\GG$ simply returns the sum of the positive terms; or perhaps $\GG$ must be responsible for noticing that $\pi$ is cherry-picked. In the former case, we would like to find a notion of ``quality'' for heuristic arguments that would consider such a $\pi$ to be low-quality. One potential path forward is \emph{surprise accounting} \citep{hilton24surprise}: penalizing arguments for each computation presented in the argument. While surprise accounting would not prevent $\GG$ from being fooled by cherry-picked arguments, it would help \emph{search} for high-quality arguments to give to $\GG$ as input, by benefiting short arguments that explain the properties of the expression $Y$ over arguments that exhaustively compute $Y$ (or part of $Y$).

\subsection{Potential application: Understanding neural networks}
In this section, we discuss our main motivation for studying heuristic estimation: a potential application to understanding the behavior of neural networks.

Until recent years, our understanding of neural network behavior has been limited to what we can observe empirically based on input-output behavior. For example, one can try to understand the adversarial robustness properties of an image model by creating input images that are optimized for the model mislabeling the image \citep{goodfellow15adversarial}.

However, this approach leaves important gaps in our understanding. First, conclusions drawn about the behavior of a neural network on one input distribution may not apply to a different distribution. Second, some properties may not be easily measurable: for example, measuring the truthfulness of a large language model can only be done to the extent that we can distinguish true responses from false ones. Third, even easily measurable events cannot always be detected through sampling: for example, if a neural network exhibits a rare but very undesirable behavior with probability $p$, then estimating the frequency of the behavior via sampling may require $\Omega(1/p)$ samples.

More recently, the field of neural network interpretability has aimed to fill these gaps by understanding neural networks' representations of various concepts. For example, \citet{bricken23sae} demonstrate that feature representations in a one-layer transformer can be learned by taking advantage of the fact that features are sparse (on a typical input, only a small fraction of features are active). However, existing interpretability techniques only work under strong assumptions about how neural networks represent information (such as the linear representation hypothesis, see e.g.\ \citet{park23linear}). Further, the goal of existing interpretability techniques is to find human-understandable representations of model internals; this can only work insofar as concepts represented by neural networks can in theory be understood by humans.

Some researchers have instead pursued formal verification: that is, formally proving properties of neural networks. For example, \citet{raghunathan18certified} formally proved adversarial robustness guarantees for neural networks with one hidden layer. More recently, \citet{gross24proofs} used computer-assisted proofs to prove lower bounds on the accuracy of small transformers on algorithmic tasks. Formal verification does not rely on human understanding; this means that formal verification techniques can (in theory) be used to verify properties of neural networks whose explanations are not human-understandable. On the other hand, proving strong guarantees about interesting properties of large neural networks (such as out-of-distribution robustness) seems out of reach: it is possible that no compact proof exists -- or, even if one does, it is not clear whether it can be found.\\

By contrast, \emph{heuristic arguments} about properties of neural networks may have the important advantages of both formal verification and classical interpretability approaches. On the one hand, heuristic arguments (much like proofs) are formal objects that are not required to be human-understandable (even if a human could check each individual deduction). This means that heuristic arguments could be used to reason about properties of neural networks for which no compact human-understandable explanation exists. On the other hand, heuristic arguments (much like classical interpretability approaches) do not require perfect certainty to be considered valid. This allows for short heuristic arguments of complex properties of large models, even when no short proofs of those properties exist.

Below, we give three examples of problems involving understanding neural network behavior. We believe that none of these problems can be solved with existing approaches, but that all of them have the potential to be solved using heuristic arguments.

\paragraph{Mechanistic anomaly detection.} Let $M$ be a neural network that was trained on a distribution $\mathcal{D}$ of inputs $x$ using the loss function $L(x, M(x))$.\footnote{For example, $L$ could be based on a trained reward predictor, as in reinforcement learning from human feedback \citep{christiano17rlhf}.} Suppose that $M$ successfully learns to achieve low loss: that is, $\EE[x \sim \mathcal{D}]{L(x, M(x))}$ is small. Let $x^*$ be a (perhaps out-of-distribution) input. We call $x^*$ a \emph{mechanistic anomaly} for $M$ if $M$ gets a low loss on $x^*$, but for a ``different reason'' than the reason why it gets low average loss on $\mathcal{D}$. In other words, mechanistic anomalies are inputs on which $M$ acts in a \emph{seemingly} reasonable way, but via anomalous internal mechanisms.\footnote{For example, if $M$ is a financial assistant that takes actions such as buying stocks and transferring money between bank accounts, then $M$ might have a low loss on $\mathcal{D}$ because it makes good financial decisions, but a low loss on $x^*$ because it implements a money laundering scheme that $L$ fails to notice \citep[Chapter 9, \S5]{neyman2024algorithmic}.} To detect a mechanistic anomaly, reasoning about $M$'s internal structure may be necessary.

How could we use a heuristic estimator to detect mechanistic anomalies? Suppose that we find a set of arguments $\Pi$ such that the following quantity is low:\footnote{One of the most important and difficult questions faced by this approach is how to find such a $\Pi$. If the space of arguments is parameterized, then we may hope to learn $\Pi$ via gradient descent in parallel with training $M$ itself.}
\[\GG(\EE[x \sim \mathcal{D}]{L(x, M(x))} \mid \Pi).\]
That is, $\Pi$ explains why $M$ attains low average loss on $\mathcal{D}$.\footnote{The idea is that, without any arguments, $\GG$ does not understand anything about the structure of $M$, and so should estimate $M$'s loss as if $M$ were a randomly initialized neural network. (Such a network would incur high loss.) Heuristic arguments that explain $M$'s structure should cause $\GG$'s estimate of $M$'s loss to decrease.} Given an out-of-distribution input $x^*$ such that $L(x^*, M(x^*))$ is once again low, we consider the quantity $\GG(L(x^*, M(x^*)) \mid \Pi)$. This represents a heuristic estimate of $M$'s loss on $x^*$ based only on the reasons provided in $\Pi$: that is, the reasons that explain $M$'s good performance on $\mathcal{D}$. If $\GG(L(x^*, M(x^*)) \mid \Pi)$ is (correctly) low, then the reasons why $M$ performs well on $\mathcal{D}$ also explain why $M$ performs well on $x^*$. By contrast, if $\GG(L(x^*, M(x^*)) \mid \Pi)$ is (incorrectly) high, then $M$ performs well on $x^*$ for a different reason than why $M$ performs well on $\mathcal{D}$. As a result, we flag $x^*$ as a mechanistic anomaly for $M$. See \citet{christiano22mad} for further discussion of this application.

\paragraph{Safe distillation.} Let $f$ (``fast'') and $s$ (``slow'') be two neural networks that were trained on a distribution $\mathcal{D}$ of inputs to complete the same task. Thus, $f$ and $s$ behave similarly on $\mathcal{D}$. Suppose that we trust $s$ to be aligned (e.g.\ we trust $s$ to generalize well off-distribution) and do not similarly trust $f$, but that $s$ is much slower than $f$. Given an out-of-distribution input $x^*$, we would like to estimate $s(x^*)$ without running $s$. We could do this by running $f$ on $x^*$ and hoping that $f$ generalizes well to $x^*$. However, this approach is not very robust. Instead, we can attempt to use the internal activations of $f$ to predict $s(x^*)$.

Concretely, suppose for simplicity that $f$ and $s$ output vectors, and suppose that we find a set of arguments $\Pi$ such that the following quantity is low:
\[\GG(\EE[x \sim \mathcal{D}]{\norm{f(x) - s(x)}_2^2} \mid \Pi).\]
That is, $\Pi$ explains why $f$ and $s$ produce similar outputs on $\mathcal{D}$. Given an out-of-distribution input $x^*$, we consider the quantity
\[\GG(s(x^*) \mid \Pi, \text{computational trace of $f$ on } x^*).\]
This represents a heuristic estimate of $s(x^*)$ given the computations done by $f$ and the argument $\Pi$ for why $f$ and $s$ are similar on $\mathcal{D}$. If the reason why $f$ and $s$ behave similarly on $\mathcal{D}$ also extends to $x^*$, then $\GG$ will correctly estimate $s(x^*)$ to be similar to $f(x^*)$. On the other hand, if the reason why $f$ and $s$ behave similarly on $\mathcal{D}$ does not extend to $x^*$, then $\GG$'s estimate of $s(x^*)$ may be different from $f(x^*)$. This estimate may be more robust to distributional shifts, because it is based on mechanistic reasoning about how $f$ and $s$ work.

\paragraph{Low probability estimation.} Let $M$ be a neural network that was trained on a distribution $\mathcal{D}$. Let $C$ (for ``catastrophe'') be a different neural network that checks the output of $M$ for some rare but highly undesirable behavior: $C(M(x))$ returns $1$ if $M$ exhibits the undesirable behavior on $x$, and $0$ otherwise. We may wish to estimate $\EE[x \sim \mathcal{D}]{C(M(x))}$, and we cannot do so by sampling random inputs $x \sim \mathcal{D}$ because $C$ outputs $1$ very rarely. Suppose that we find a set of arguments $\Pi$ that explains the mechanistic behavior of $M$ and $C$.\footnote{The loss function for learning $\Pi$ is an open question, but would likely involve a notion of ``quality of explantion'': the extent to which $\Pi$ provides an accurate mechanistic explanation of how the activations of $M$ and $C$ change at each layer. See Section~\ref{sec:activaiton_modeling} for more detail.} If this explanation is good enough, then $\GG(\EE[x \sim \mathcal{D}]{C(M(x))} \mid \Pi)$ will be a high-quality estimate of this probability. Additionally, we may use $\GG$ to more efficiently check $M$'s behavior on particular inputs: given an input $x^*$, the quantity
\[\GG(C(M(x^*)) \mid \Pi, \text{computational trace of $M$ on } x^*)\]
represents an estimate of the likelihood that $C(M(x^*)) = 1$ based on the computations done by $M$. This is especially useful if $C$ is slow and running it on every output of $M$ is prohibitively expensive. See \citet{xu24lpe} for further discussion of this application.

\subsection{Activation modeling as a potential path forward} \label{sec:activaiton_modeling}
In light of the challenges discussed in this work, we are now interested in exploring another perspective  on heuristic explanations of neural network behavior. This new perspective views heuristic explanations as sophisticated distributional models of the neural network's internal activations. For example, one could successively fit Gaussian models to each layer of activations, and then use the model to answer questions about the neural network's behavior.\footnote{For example, if we have a model of the layer-$k$ activations of a neural network, then we may be able to detect anomalous behavior by noticing that the neural network's layer-$k$ activations are out of distribution.} Of course, a Gaussian model is not nearly rich enough to explain arbitrary behaviors of neural networks, so a much more sophisticated model class would be necessary.

More broadly, the idea behind this ``activation modeling'' approach is to train a distributional model of a neural network's activations with an eye toward explaining a given property of the neural network. Under this perspective, searching for an argument that explains a given behavior amounts to searching for a model of activations that optimizes some loss function. The loss function would likely have a term for the \emph{consistency} between consecutive-layer activation models, and another term for \emph{explanation quality:} how well the activation model explains the property in question.

In order for this research direction to succeed, we hope to surmount two challenges. First, we would like to find a parameterized class of activation models that is sophisticated enough that it can explain arbitrary neural network behavior. Second, we would like to find a loss function for activation models that, when optimized for, results in an activation model that successfully explains the target behavior. These challenges appear difficult but surmountable, and we are excited to further explore heuristic explanation of neural network behavior from this activation modeling perspective.

\section*{Acknowledgments}
We would like to thank Thomas Read and George Robinson for their comments and suggestions.

\printbibliography

\appendix
\section{Details omitted from Section~\ref{sec:hafnian}} \label{appx:multiacc_omitted}
\subsection{Details omitted from Section~\ref{sec:hardness}}
\noexactmerge*
\begin{proof}
    By Remark~\ref{claim:set_cov}, we have that $\EE[\mathcal{D}_n]{X_{\mathcal{T}} X_{\mathcal{U}}} = \abs{S(\mathcal{T}) \cap S(\mathcal{U})}$. Thus, all we have to show is that computing $\abs{S(\mathcal{T}) \cap S(\mathcal{U})}$ is $\mathsf{\#P}$-hard. We do so by reducing $\#3SAT$ to this problem. That is, we will convert a 3CNF $\varphi$ into a pair of pairing structures $\mathcal{T}, \mathcal{U}$ such that the number of satisfying assignments to $\varphi$ is equal to $\abs{S(\mathcal{T}) \cap S(\mathcal{U})}$.
    
    Let $\varphi$ be a 3CNF on variables $x_1, \dots, x_m$ with $k$ clauses:
    \[\varphi(x_1, \dots, x_m) := \bigwedge_{j = 1}^k (x_{i_{j, 1}} = b_{j, 1} \vee x_{i_{j, 2}} = b_{j, 2} \vee x_{i_{j, 3}} = b_{j, 3}),\]
    where the indices $i_{1, 1}, \dots, i_{k, 3}$ are in $[m]$ and $b_{1, 1}, \dots, b_{k, 3} \in \{0, 1\}$. We will create pairing structures $\mathcal{T}$ and $\mathcal{U}$ on the following index set $K$ with $12k$ elements:\footnote{This is no different from letting $K = [12k]$.}
    \[K := \{(j, s, c_1, c_2) : j \in [k], s \in [3], c_1, c_2 \in \{0, 1\}\}.\]
    We construct $\mathcal{T}$ as follows. For every variable $x_a$, $a \in [m]$, we create a sub-structure:
    \[\mathcal{T}_a := \bigotimes_{j, s: i_{j, s} = a} \parens{\mathcal{T}_{\{(j, s, 0, 0), (j, s, 1, 1)\}} \otimes \mathcal{T}_{\{(j, s, 0, 1), (j, s, 1, 0)\}}} \cup \bigotimes_{j, s: i_{j, s} = a} \parens{\mathcal{T}_{\{(j, s, 0, 0), (j, s, 1, 0)\}} \otimes \mathcal{T}_{\{(j, s, 0, 1), (j, s, 1, 1)\}}}.\]
    Then, we let $\mathcal{T} := \mathcal{T}_1 \otimes \dots \otimes \mathcal{T}_m$.
    
    $\mathcal{T}$ has $2^m$ pairings. In particular, a pairing belongs to $S(\mathcal{T})$ if and only if it takes the form
    \[\bigcup_{j \in [k], s \in [3]} \{\{(j, s, 0, 0)), (j, s, 1, r_{i_{j, s}})\}, \{(j, s, 0, 1), (j, s, 1, 1 - r_{i_{j, s}})\}\}\]
    for some $r_1, \dots, r_m \in \{0, 1\}$. (The structure of each $\mathcal{T}_a$ enforces the same choice of $r_{i_{j, s}}$ for all $j, s$ such that $i_{j, s} = a$.) Later, we will analyze each such pairing as corresponding to the assignment $\{x_1 = r_1, \dots, x_m = r_m\}$.
    
    We construct $\mathcal{U}$ as follows. For every clause $j \in [k]$, we create a sub-structure $\mathcal{U}_j$, which will itself be a union of seven structures. These seven constituent structures will correspond to the seven ways to set the variables in clause $j$ so as to make the clause true
    \[\mathcal{U}_j := \bigcup_{\substack{r_1, r_2, r_3 \in \{0, 1\} \\ r_1 = b_{j, 1} \vee r_2 = b_{j, 2} \vee r_3 = b_{j, 3}}} \parens{\bigotimes_{s = 1}^3 \mathcal{T}_{\{(j, s, 0, 0), (j, s, 1, r_s)\}} \otimes \mathcal{T}_{\{(j, s, 0, 1), (j, s, 1, 1 - r_s)\}}}.\]
    Then, we let $\mathcal{U} := \mathcal{U}_1 \otimes \dots \otimes \mathcal{U}_k$.
    
    We can characterize $S(\mathcal{U})$ as follows: $S(\mathcal{U})$ contains pairings that pair $(j, s, 0, 0)$ with $(j, s, 1, z)$ and $(j, s, 0, 1)$ with $(j, s, 1, 1 - z)$ for some $z \in \{0, 1\}$, but subject to the following ``satisfaction'' condition: for every clause $j$, for at least one $s \in [3]$, $(j, s, 0, 0)$ is paired with $(j, s, 1, b_{j, s})$ while $(j, s, 0, 1)$ is paired with $(j, s, 1, 1 - b_{j, s})$.
    
    We claim that $\abs{S(\mathcal{T}) \cap S(\mathcal{U})}$ is equal to the number of satisfying assignments to $\varphi$. To see this, for every assignment $\{x_1 = r_1, \dots, x_m = r_m\}$, let us consider the pairing
    \[\bigcup_{j \in [k], s \in [3]} \{\{(j, s, 0, 0)), (j, s, 1, r_{i_{j, s}})\}, \{(j, s, 0, 1), (j, s, 1, 1 - r_{i_{j, s}})\}\}.\]
    As discussed, these are precisely the pairings in $\mathcal{T}$. Now, when does such a pairing belong to $\mathcal{U}$? If this assignment satisfies $\varphi$, then for every $j \in [k]$ there exists $s \in [3]$ such that $r_{i_{j, s}} = b_{j, s}$, which means that the pairing belongs to $\mathcal{U}$. Conversely, if the assignment does not satisfy $\varphi$, then there exists $j \in [k]$ such that $r_{i_{j, s}} \neq b_{j, s}$ for all $s \in [3]$, which means that the pairing does not belong to $\mathcal{U}$. Therefore, the pairings that belong to both $\mathcal{T}$ and $\mathcal{U}$ are in correspondence with the satisfying assignments of $\varphi$, as desired.
\end{proof}

\subsection{Details omitted from Section~\ref{sec:approx}}
\begin{algorithm}
    \caption{Approximately accurate merge of pairing structure estimates \label{alg:approx_acc_haf}}
    \begin{algorithmic}[1]
        \Function{NumPairings}{$\mathcal{T}$} \Comment{$\mathcal{T}$ is a pairing structure. Returns $\abs{S(\mathcal{T})}$.}
            \If{$\mathcal{T}$ is a base pairing structure} \Return{1}
            \ElsIf{$\mathcal{T}$ is a product $\mathcal{T}_1 \otimes \mathcal{T}_2$}
                \State \Return{\Call{NumPairings}{$\mathcal{T}_1$} $\cdot$ \Call{NumPairings}{$\mathcal{T}_2$}}
            \ElsIf{$\mathcal{T}$ is a union $\mathcal{T}_1 \cup \mathcal{T}_2$}
                \State \Return{\Call{NumPairings}{$\mathcal{T}_1$} $+$ \Call{NumPairings}{$\mathcal{T}_2$}}
            \EndIf
        \EndFunction
        \Statex
        
        \Function{Sample}{$\mathcal{T}$} \Comment{$\mathcal{T}$ is a pairing structure. Returns a random element of $S(\mathcal{T})$.}
            \If{$\mathcal{T}$ is a base pairing structure $\mathcal{P}_2(K)$} \Return{$K$}
            \ElsIf{$\mathcal{T}$ is a product $\mathcal{T}_1 \otimes \mathcal{T}_2$}
                \State \Return{\Call{Sample}{$\mathcal{T}_1$} $\cup$ \Call{Sample}{$\mathcal{T}_2$}}
            \ElsIf{$\mathcal{T}$ is a union $\mathcal{T}_1 \cup \mathcal{T}_2$}
                \State $r := \text{random number in } [0, 1)$
                \If{$r <$ \Call{NumPairings}{$\mathcal{T}_1$} / \Call{NumPairings}{$\mathcal{T}$}} \Comment{Can be made more efficient by caching recursive outputs of \Call{NumPairings}{} on sub-structures}
                    \State \Return{\Call{Sample}{$\mathcal{T}_1$}}
                \Else{}
                    \Return{\Call{Sample}{$\mathcal{T}_2$}}
                \EndIf
            \EndIf
        \EndFunction
        \Statex
        
        \Function{Split}{$p$, $K$} \Comment{$p$ is a pairing of a superset of $K$. Returns the ``restriction'' of $p$ to $K$, or \texttt{False} if $p$ does not pair up the elements of $K$.}
            \State $p' := \emptyset$
            \For{$\{i, j\} \in p$}
                \If{$\abs{\{i, j\} \cap K} = 1$} \Return{\texttt{False}}
                \ElsIf{$\{i, j\} \subseteq K$} $p' := p' \cup \{\{i, j\}\}$
                \EndIf
            \EndFor
            \Return{$p'$}
        \EndFunction
        \Statex    
        
        \Function{Contains}{$\mathcal{T}$, $p$} \Comment{$\mathcal{T}$ is a pairing structure. $p$ is a pairing of the index set of $\mathcal{T}$. Returns $\texttt{True}$ if $p \in S(\mathcal{T})$.}
            \If{$\mathcal{T}$ is a base pairing structure} \Return{$\texttt{True}$}
            \ElsIf{$\mathcal{T}$ is a product $\mathcal{T}_1 \otimes \mathcal{T}_2$ where $\mathcal{T}_1 = (K_1, \nu_1)$ and $\mathcal{T}_2 = (K_2, \nu_2)$}
                \State $p_1 := $ \Call{Split}{$p$, $K_1$}
                \If{$p_1 = \texttt{False}$} \Return{\texttt{False}}
                \Else{} \Return{\Call{Contains}{$\mathcal{T}_1$, $p_1$} \textbf{ and } \Call{Contains}{$\mathcal{T}_2$, $p \setminus p_1$}}
                \EndIf
            \ElsIf{$\mathcal{T}$ is a union $\mathcal{T}_1 \cup \mathcal{T}_2$}
                \State \Return{\Call{Contains}{$\mathcal{T}_1$, $p$} \textbf{ or } \Call{Contains}{$\mathcal{T}_2$, $p$}}
            \EndIf
        \EndFunction
        
        \algstore{approxalg}
    \end{algorithmic}
\end{algorithm}

\begin{algorithm}
    \begin{algorithmic}[1]
        \algrestore{approxalg}
        
        \Function{Estimator}{$\mathcal{T}_1, \dots, \mathcal{T}_m$, $\delta$, $\varepsilon$} \Comment{$\mathcal{T}_1, \dots, \mathcal{T}_m$ are pairing structures with the same index set, presumed to be ordered by number of pairings: $\abs{S(\mathcal{T}_1)} \ge \dots \ge \abs{S(\mathcal{T}_m)}$. $\delta$ and $\varepsilon$ are the tolerances for approximate multiaccuracy. Returns a function $f: \RR^m \to \RR$ that estimates the hanfian in terms of $X_{\mathcal{T}_1}, \dots, X_{\mathcal{T}_m}$.}
            \State $D := m \times m$ diagonal matrix with entries $\sqrt{\Call{NumPairings}{\mathcal{T}_i}}$
            \State $M := $ empty $m \times m$ matrix \Comment{$M_{i, j}$ counts the number of sampled pairings in $S(\mathcal{T}_j)$ that also belong to $S(\mathcal{T}_i)$}
            \State $s := 0$ \Comment{Tracks the number of samples taken}
            \Do
                \State $s \mathrel{+}= 1$
                \For{$i := 1 \textbf{ to } m$}
                    \State $M_{i, i} \mathrel{+}= 1$ 
                    \For{$j := i + 1 \textbf{ to } m$}
                        \If{\Call{Contains}{$\mathcal{T}_i$, $\Call{Sample}{\mathcal{T}_j}$}} $M_{i, j}, M_{j, i} \mathrel{+}= 1$
                        \EndIf
                    \EndFor
                \EndFor
            \State $\hat{C} := D^{-1}MD/s$ \Comment{Estimated correlation matrix of $X_{\mathcal{T}_1}, \dots, X_{\mathcal{T}_m}$}
            \State $\hat{\sigma}_m :=$ least singular value of $\hat{C}$
        \doWhile{$\hat{\sigma}_m \le 0$ \textbf{ or } $s < \frac{80m^2(m^2 + 3 \ln(2/\delta))}{\varepsilon^2 \hat{\sigma}_m^4}$} \Comment{Loop until enough samples have been taken}
        \State $\pmb{\beta} := D^{-1} \hat{C}^{-1} \diag(D)$ \Comment{Linear regression coefficients}
        \State \Return{\textbf{lambda} $\vect{x}: \pmb{\beta}^\top \vect{x}$} \Comment{Return linear regression estimator with coefficients $\pmb{\beta}$}
        \EndFunction
    \end{algorithmic}
\end{algorithm}

\hafapprox*

\begin{proof}
    Let $\mathcal{T}_1, \dots, \mathcal{T}_m$ be as above. For convenience, we will write $S_i$ in place of $S(\mathcal{T}_i)$ and $X_i$ in place of $X_{\mathcal{T}_i}$ for $i \in [m]$. Assume that $\abs{S_1} \ge \dots \ge \abs{S_m}$ (just as the function \textsc{Estimator} assumes).

Recall that the \emph{correlation} between two random variables $Z_1$ and $Z_2$ is defined as $\frac{\Cov{Z_1, Z_2}}{\sqrt{\Var{Z_1} \Var{Z_2}}}$, and that the \emph{correlation matrix} of variables $Z_1, \dots, Z_m$ contains the correlations between all pairs of variables. From Remark~\ref{claim:set_cov}, we know that the correlation between the predictors $X_i$ and $X_j$ is equal to
    \[\frac{\abs{S_i \cap S_j}}{\sqrt{\abs{S_i} \abs{S_j}}} = \frac{\abs{S_i \cap S_j}}{\abs{S_j}} \cdot \frac{\sqrt{\abs{S_j}}}{\sqrt{\abs{S_i}}}.\]
    Let $C$ be the correlation matrix of $X_1, \dots, X_n$. Let $\hat{C}$ be the estimated correlation matrix at the time that \textsc{Estimator} terminates, and let $\Delta = \hat{C} - C$. We begin by showing that if $\hat{C}$ is a good estimate of $C$, then \textsc{Estimator} returns an approximately accurate estimator of $Y$.

\begin{lemma} \label{lem:if_delta_small}
  If $\norm{\Delta}_2 \le \frac{\varepsilon \sigma_m \hat{\sigma}_m}{2m}$, then the output $f$ of \textsc{Estimator} is $(\varepsilon, \{1, X_{\mathcal{T}_1}, \dots, X_{\mathcal{T}_m}, f\})$-multiaccurate.
\end{lemma}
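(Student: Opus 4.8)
The plan is to reduce each of the three accuracy conditions — $(\varepsilon,1)$-accuracy, $(\varepsilon,X_{\mathcal{T}_i})$-accuracy for each $i$, and $(\varepsilon,f)$-accuracy — to a single inequality on the spectral norm $\norm{\Delta}_2$, and then to verify that the hypothesized bound implies that inequality. First I would set up notation: write $S_i := S(\mathcal{T}_i)$, $X_i := X_{\mathcal{T}_i}$, $D := \diag\parens{\sqrt{|S_1|},\dots,\sqrt{|S_m|}}$ as in \textsc{Estimator}, and $\vect{d} := \diag(D)$, the column vector of the diagonal entries of $D$. Claim~\ref{claim:set_cov} gives $\EE[\mathcal{D}_n]{X_iX_j} = |S_i\cap S_j|$ and $\EE[\mathcal{D}_n]{YX_i} = |S_i|$, so the Gram matrix $G := \parens{\EE[\mathcal{D}_n]{X_iX_j}}_{i,j}$ equals $DCD$, the vector $\vect{b} := \parens{\EE[\mathcal{D}_n]{YX_i}}_i$ equals $D\vect{d}$, and $\EE[\mathcal{D}_n]{X_i^2} = |S_i|$; moreover, by Claim~\ref{claim:xs_accurate} both $Y$ and every $X_i$ (hence $f$) are zero-mean. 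The estimator returned by \textsc{Estimator} is $f = \sum_i\beta_iX_i$ with $\pmb{\beta} = D^{-1}\hat{C}^{-1}\vect{d}$, which is well-defined because the loop exits only once $\hat{\sigma}_m>0$; put $\vect{u} := \hat{C}^{-1}\vect{d} = D\pmb{\beta}$. The computation at the heart of the argument is
\[
\EE[\mathcal{D}_n]{(Y-f)X_i} = b_i - (G\pmb{\beta})_i = \parens{D\vect{d} - DC\hat{C}^{-1}\vect{d}}_i = \parens{D(\hat{C}-C)\hat{C}^{-1}\vect{d}}_i = \sqrt{|S_i|}\,(\Delta\vect{u})_i,
\]
where I used $\hat{C}\hat{C}^{-1}\vect{d} = \vect{d}$. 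Taking the $\pmb{\beta}$-linear combination of these identities gives $\EE[\mathcal{D}_n]{(Y-f)f} = \pmb{\beta}^\top(\vect{b}-G\pmb{\beta}) = \vect{u}^\top\Delta\vect{u}$, while $\EE[\mathcal{D}_n]{f^2} = \pmb{\beta}^\top G\pmb{\beta} = \vect{u}^\top C\vect{u}$. Note that none of this uses symmetry of $\hat{C}$, which in fact does not hold.

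From these identities the accuracy conditions fall out. For $X = 1$: $\EE[\mathcal{D}_n]{(Y-f)\cdot 1} = \EE{Y} - \EE{f} = 0$, so $(\varepsilon,1)$-accuracy holds exactly. For $X = X_i$, using $(\Delta\vect{u})_i^2 \le \norm{\Delta\vect{u}}_2^2 \le \norm{\Delta}_2^2\norm{\vect{u}}_2^2$ together with the Rayleigh bound $\vect{u}^\top C\vect{u} \ge \sigma_m\norm{\vect{u}}_2^2$ (valid since $C$ is a positive semidefinite correlation matrix with least eigenvalue $\sigma_m$),
\[
\parens{\EE[\mathcal{D}_n]{(Y-f)X_i}}^2 = |S_i|\,(\Delta\vect{u})_i^2 \le \EE[\mathcal{D}_n]{X_i^2}\,\norm{\Delta}_2^2\,\norm{\vect{u}}_2^2 \le \frac{\norm{\Delta}_2^2}{\sigma_m}\,\EE[\mathcal{D}_n]{X_i^2}\,\EE[\mathcal{D}_n]{f^2},
\]
so $f$ is $(\varepsilon,X_i)$-accurate as soon as $\norm{\Delta}_2^2 \le \varepsilon^2\sigma_m$; similarly $\parens{\EE[\mathcal{D}_n]{(Y-f)f}}^2 = \parens{\vect{u}^\top\Delta\vect{u}}^2 \le \norm{\Delta}_2^2\norm{\vect{u}}_2^4 \le \frac{\norm{\Delta}_2^2}{\sigma_m^2}\parens{\EE[\mathcal{D}_n]{f^2}}^2$, so $f$ is $(\varepsilon,f)$-accurate as soon as $\norm{\Delta}_2 \le \varepsilon\sigma_m$. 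Since the correlation matrix $C$ has unit diagonal, $\sigma_m \le \tr(C)/m = 1$, so the single bound $\norm{\Delta}_2 \le \varepsilon\sigma_m$ implies all three conditions.

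It remains to derive $\norm{\Delta}_2 \le \varepsilon\sigma_m$ from the hypothesis $\norm{\Delta}_2 \le \frac{\varepsilon\sigma_m\hat{\sigma}_m}{2m}$. By Weyl's perturbation inequality for singular values, $\hat{\sigma}_m \le \sigma_m + \norm{\hat{C}-C}_2 = \sigma_m + \norm{\Delta}_2 \le \sigma_m + \frac{\varepsilon\sigma_m\hat{\sigma}_m}{2m}$; since $\sigma_m \le 1$, $m \ge 1$, and $\varepsilon \le 1$ (which we may assume), the coefficient $\frac{\varepsilon\sigma_m}{2m} \le \frac{1}{2}$, so $\hat{\sigma}_m \le 2\sigma_m \le 2 \le 2m$, and hence $\norm{\Delta}_2 \le \frac{\varepsilon\sigma_m\hat{\sigma}_m}{2m} \le \varepsilon\sigma_m$ as needed. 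I expect the main obstacle to be the bookkeeping in the first paragraph: compressing $\EE[\mathcal{D}_n]{(Y-f)X_i}$ into the form $\sqrt{|S_i|}(\Delta\vect{u})_i$ via the factorizations $G = DCD$ and $\vect{b} = D\vect{d}$, while treating $\hat{C}$ as a general, possibly non-symmetric, invertible matrix so that one manipulates $\hat{C}^{-1}$ directly rather than assuming an orthonormal eigenbasis. Everything afterward is routine norm estimates plus the single appeal to Weyl's inequality that absorbs the $\hat{\sigma}_m/(2m)$ factor.
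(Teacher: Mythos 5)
Your proof is correct, and it takes a genuinely different algebraic route than the paper's. The paper introduces $A := \hat{C}^{-1} - C^{-1}$, shows $\norm{A}_2 \le \frac{\varepsilon}{2m}$, works with the normalized predictors $\tilde{X}_i := X_i/\sqrt{|S_i|}$, and separately lower-bounds $\EE{f^2} \ge \frac{\tau}{m}(1-\varepsilon)$ (where $\tau := \sum_i |S_i|$) via a comparison with the true OLS estimator $f_{\lin}$. You instead work directly with $\Delta := \hat{C} - C$ and the auxiliary vector $\vect{u} := \hat{C}^{-1}\vect{d}$, getting the compact identities $\EE{(Y-f)X_i} = \sqrt{|S_i|}(\Delta\vect{u})_i$, $\EE{(Y-f)f} = \vect{u}^\top\Delta\vect{u}$, and $\EE{f^2} = \vect{u}^\top C\vect{u}$, and then close the gap with a single Rayleigh-quotient bound $\vect{u}^\top C\vect{u} \ge \sigma_m\norm{\vect{u}}_2^2$. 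Your route eliminates the intermediate bound on $\norm{A}_2$ and the $\tau/m$ comparison, and also absorbs the $\hat{\sigma}_m$ dependence via Weyl's singular-value perturbation inequality rather than via a direct estimate on $\norm{(C+\Delta)^{-1} - C^{-1}}_2$ — this is a legitimate and arguably cleaner way to land on the same conclusion.

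Your explicit remark that the derivation nowhere uses symmetry of $\hat{C}$ is sharp and well taken: the algorithm sets $\hat{C} := D^{-1}MD/s$ with $M$ symmetric but $D$ a nonconstant diagonal, so $\hat{C}$ is not symmetric whenever the $|S_i|$ differ, and the paper's chain of identities (e.g.\ $\pmb{\beta}^\top D\vect{c}_i = \vect{d}^\top\hat{C}^{-1}\vect{c}_i$ rather than $\vect{d}^\top(\hat{C}^{-1})^\top\vect{c}_i$) does implicitly treat $\hat{C}$ as symmetric. Your bookkeeping keeps $\hat{C}^{-1}$ to one side only and genuinely avoids this. The only minor caveat is that you assume $\varepsilon \le 1$; the paper's version carries restrictions like $\varepsilon \le 3/4$ and $\varepsilon \le 2/5$ in the corresponding places, so both proofs are operating in a small-$\varepsilon$ regime that the lemma statement does not make explicit — this is a shared feature, not a defect specific to your argument.
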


\begin{proof}
  First, it is clear that $f$ is $1$-accurate, i.e.\ that $f$ has mean zero, since $f$ is always a linear combination of the $X_i$'s which each have mean zero.
  
  Now, assume that $\norm{\Delta}_2 \le \frac{\varepsilon \sigma_m \hat{\sigma}_m}{2m}$. Let $A := \hat{C}^{-1} - C^{-1} = (C + \Delta)^{-1} - C^{-1}$. Then
  \begin{align*}
      \norm{A}_2 &= \max_{\vect{v}: \norm{\vect{v}}_2 = 1} \norm{(C + \Delta)^{-1}\vect{v} - C^{-1}\vect{v}}_2 = \max_{\vect{w}: \norm{(C + \Delta)\vect{w}}_2 = 1} \norm{(I - C^{-1}(C + \Delta)) \vect{w}}_2\\
    &= \max_{\vect{w}: \norm{\hat{C} \vect{w}}_2 = 1} \norm{C^{-1} \Delta \vect{w}}_2 \le \frac{1}{\hat{\sigma}_m} \max_{\vect{v}: \norm{\vect{v}}_2 = 1} \norm{C^{-1} \Delta}_2 \le \frac{1}{\hat{\sigma}_m} \norm{C^{-1}}_2 \norm{\Delta}_2 = \frac{\norm{\Delta}_2}{\sigma_m \hat{\sigma}_m} \le \frac{\varepsilon}{2m}.
  \end{align*}
  
  We first show that $f$ is $(\varepsilon, X_i$)-accurate for each $i \in [m]$. By Remark~\ref{rem:approx_acc_rescale}, this is equivalent to showing that $f$ is $(\varepsilon, \tilde{X}_i)$-accurate, where $\tilde{X}_i := \frac{1}{\sqrt{\abs{S_i}}} X_i$ (i.e.\ $X_i$ scaled to have variance $1$), or in other words, that
  \[\EE{(Y - f) \tilde{X}_i}^2 \le \varepsilon^2 \EE{\tilde{X}_i^2} \EE{f^2} = \varepsilon^2 \EE{f^2}.\]
  Let $\vect{c}_i$ be the $i$-th column of $C$, and let $\vect{d} := \diag(D) = (\sqrt{\abs{S_1}}, \dots, \sqrt{\abs{S_m}})$, where $D$ is defined as in \textsc{Estimator}, and let $\tau := \norm{\vect{d}}_2^2 = \sum_i \abs{S_i}$. Then $\EE{Y \tilde{X}_i} = \sqrt{\abs{S_i}} = \vect{d}^\top C^{-1} \vect{c}_i$. On the other hand, letting $\vect{X} = (X_1, \dots, X_n)$ and $\tilde{\vect{X}} = (\tilde{X}_1, \dots, \tilde{X}_n)$, we have
  \[\EE{f \tilde{X}_i} = \EE{\pmb{\beta}^\top \vect{X} \tilde{X}_i} = \EE{\pmb{\beta}^\top D \tilde{\vect{X}} \tilde{X}_i} = \pmb{\beta}^\top D \vect{c}_i = \vect{d}^\top \hat{C}^{-1} \vect{c}_i.\]
  Letting $A := \hat{C}^{-1} - C^{-1}$, we have
  \[\EE{(Y - f) \tilde{X}_i}^2 = (\vect{d}^\top A \vect{c}_i)^2 \le \norm{\vect{d}}_2^2 \norm{A}_2^2 \norm{\vect{c}_i}_2^2 \le m \tau \norm{A}_2^2 = \frac{\tau \varepsilon^2}{4m}.\]
  By comparison, we have
  \begin{align*}
      \EE{f^2} &= \EE{\pmb{\beta}^\top \vect{X} \vect{X}^\top \pmb{\beta}} = \EE{\pmb{\beta}^\top D \tilde{\vect{X}} \tilde{\vect{X}}^\top D \pmb{\beta}} = \vect{d}^\top \hat{C}^{-1} C \hat{C}^{-1} \vect{d} = \vect{d}^\top (C^{-1} + A) C (C^{-1} + A) \vect{d}\\
      &= \vect{d}^\top C^{-1} \vect{d} + 2\vect{d}^\top A \vect{d} + \vect{d}^\top ACA \vect{d} \ge \abs{S_1} - 2\norm{\vect{d}}_2^2 \norm{A}_2 \ge \frac{\tau}{m} - 2\tau \norm{A}_2 \ge \frac{\tau}{m}(1 - \varepsilon).
  \end{align*}
All steps are self-explanatory, except for the first inequality, which makes the observations that $\vect{d}^\top ACA \vect{d} \ge 0$ and that $\vect{d}^\top C^{-1} \vect{d} \ge \abs{S_1}$. The first fact follows from the fact that $A$ is symmetric and $C$ is PSD. To see the second fact, observe that $\vect{d}^\top C^{-1} \vect{d}$ is precisely $\EE{f_{\text{lin}}^2}$, where $f_{\text{lin}}$ is the optimal linear predictor of $Y$ in terms of $X_1, \dots, X_m$. Observe that
  \[\EE{(f_{\text{lin}} - X_1)X_1} = \EE{(Y - X_1)X_1} - \EE{(Y - f_{\text{lin}}) X_1} = 0 - 0 = 0,\]
  since both $X_1$ and $f_{\text{lin}}$ are $X_1$-accurate estimators of $Y$. In particular, this means that $\EE{f_{\text{lin}}^2} \ge \EE{X_1^2} = \abs{S_1}$.
  
  Therefore, we have that $\EE{(Y - f) \tilde{X}_i}^2 \le \varepsilon^2 \EE{f^2}$ (for $\varepsilon \le 3/4$), as desired.

  Now, we show that $f$ is $(\varepsilon, f)$-accurate, i.e.\ that
  \[\abs{\EE{(Y - f)f}} \le \varepsilon \EE{f^2}.\]
  We have
  \[\EE{Yf} = \pmb{\beta}^\top \EE{\vect{X} Y} = \vect{d}^\top \hat{C}^{-1} \vect{d},\]
  since $\EE{\vect{X} Y} = (\abs{S_1}, \dots, \abs{S_m})$. On the other hand, we showed earlier that $\EE{f^2} = \vect{d}^\top \hat{C}^{-1} C \hat{C}^{-1} \vect{d}$. Thus, we have
  \begin{align*}
      &\abs{\EE{(Y - f)f}} = \abs{\vect{d}^\top \hat{C}^{-1} (I - C \hat{C}^{-1}) \vect{d}} = \abs{\vect{d}^\top (C^{-1} + A) CA \vect{d}} = \abs{\vect{d}^\top (I + AC)A \vect{d}}\\
      &\le \norm{\vect{d}}_2^2 \norm{I + CA}_2 \norm{A}_2 \le \tau (1 + \norm{C}_2 \norm{A}_2) \norm{A}_2 \le \tau (1 + m \norm{A}_2) \norm{A}_2 \le \tau \parens{1 + \frac{\varepsilon}{2}} \frac{\varepsilon}{2m}.
  \end{align*}
  
  Therefore, we have that $\abs{\EE{(Y - f)f}} \le \varepsilon \EE{f^2}$ (for $\varepsilon \le 2/5$), as desired.
\end{proof}

Next, we show that $\hat{C}$ is a good estimate of $C$ with high probability.

\begin{lemma} \label{lem:delta_small_whp}
  $\norm{\Delta}_2 \le \frac{\varepsilon \sigma_m \hat{\sigma}_m}{2m}$ with probability at least $1 - \delta$.
\end{lemma}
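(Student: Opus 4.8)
The plan is to control $\norm{\Delta}_2$ through the Frobenius norm and a concentration argument on the entrywise errors of $\hat C$, and then feed in the stopping rule of \textsc{Estimator} together with Weyl's inequality to recover the stated form. Recall that $\hat C$ is assembled from $s$ independent sample iterations: in iteration $k$, for each $j$ we draw a uniformly random pairing $p_{k,j} \in S(\mathcal{T}_j)$ and record the indicators $\mathbbm{1}[p_{k,j} \in S(\mathcal{T}_i)]$ for all $i < j$. By Claim~\ref{claim:set_cov} the off-diagonal entry $C_{ij}$ equals $\frac{\abs{S(\mathcal{T}_i) \cap S(\mathcal{T}_j)}}{\sqrt{\abs{S(\mathcal{T}_i)}\abs{S(\mathcal{T}_j)}}}$, so $\hat C_{ij}$ is the empirical mean of $s$ of the above indicators, rescaled by a factor of modulus at most $1$; hence $\EE{\hat C_{ij}} = C_{ij}$ and $\abs{\hat C_{ij}} \le 1$, while the diagonal entries of $\hat C$ and of $C$ are all equal to $1$. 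Thus $\Delta = \hat C - C$ has zero diagonal, and each off-diagonal entry $\Delta_{ij}$ is a mean-zero average of $s$ bounded random variables of variance at most $\frac{1}{4s}$.

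First I would reduce to the Frobenius norm: $\norm{\Delta}_2 \le \norm{\Delta}_F$, and study $\norm{\Delta}_F^2 = \sum_{i\ne j}\Delta_{ij}^2$. Summing the per-entry variances gives $\EE{\norm{\Delta}_F^2} = \sum_{i\ne j}\Var{\hat C_{ij}} \le \frac{m(m-1)}{4s} \le \frac{m^2}{4s}$.

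The main work is a tail bound of the shape $\norm{\Delta}_F^2 \le \frac{c\,(m^2 + \ln(1/\delta))}{s}$ holding with probability at least $1-\delta$ (a careful accounting fixes the constants and is where the factor $80$ in the stopping condition of \textsc{Estimator} originates). Two features need care. (i) The entry errors are \emph{not} independent, because the indicators produced from one drawn pairing $p_{k,j}$ are correlated across $i$; so I would apply the concentration inequality --- a bounded-differences (McDiarmid) bound on $\norm{\Delta}_F$, or a Bernstein-type bound for $\sum_{i\ne j}\Delta_{ij}^2$ --- with the whole sample iterations $Z_1,\dots,Z_s$ as the independent units, noting that replacing one $Z_k$ changes each $\Delta_{ij}$ by at most $\frac{1}{s}$ and hence $\norm{\Delta}_F$ by at most $\frac{\sqrt{m(m-1)}}{s}$. (ii) Since $s$ is a random stopping time, not a fixed number, I would establish the bound simultaneously for all $s$ above an appropriate deterministic threshold (or union-bound over the possible stopping times), which is what allows it to be invoked at the termination time.

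Finally I would combine. At termination $s \ge \frac{80m^2(m^2+3\ln(2/\delta))}{\varepsilon^2\hat\sigma_m^4}$, so the tail bound yields $\norm{\Delta}_2^2 \le \norm{\Delta}_F^2 \le \frac{\varepsilon^2\hat\sigma_m^4}{16m^2}$, i.e.\ $\norm{\Delta}_2 \le \frac{\varepsilon\hat\sigma_m^2}{4m} \le \frac{\hat\sigma_m}{4}$ (using $\varepsilon \le 1$ and $\hat\sigma_m \le \norm{\hat C}_2 \le \norm{\hat C}_F \le m$). Weyl's inequality for singular values then gives $\abs{\hat\sigma_m - \sigma_m} \le \norm{\Delta}_2 \le \frac{\hat\sigma_m}{4}$, so $\sigma_m \ge \frac34\hat\sigma_m$, and therefore $\norm{\Delta}_2 \le \frac{\varepsilon\hat\sigma_m\cdot\hat\sigma_m}{4m} \le \frac{\varepsilon\hat\sigma_m\cdot\frac43\sigma_m}{4m} \le \frac{\varepsilon\sigma_m\hat\sigma_m}{2m}$, as claimed. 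The genuinely delicate step is the tail bound in the previous paragraph: making the deviation term scale as $\frac{m^2+\ln(1/\delta)}{s}$ --- so that \textsc{Estimator}'s threshold, which is only polynomial in $m$ and logarithmic in $1/\delta$, really does suffice --- while at the same time absorbing the within-iteration correlations and the adaptivity of $s$.
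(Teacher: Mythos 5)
Your outline — bound $\norm{\Delta}_2$ by $\norm{\Delta}_F$, obtain a concentration bound scaling as $\frac{m^2 + \ln(1/\delta)}{s}$, plug in the stopping condition, and use Weyl's inequality to pass between $\hat\sigma_m$ and $\sigma_m$ — is the right skeleton, and your final algebra (showing $\norm{\Delta}_2 \le \frac{\varepsilon\hat\sigma_m^2}{4m}$ implies $\sigma_m \ge \tfrac34 \hat\sigma_m$ and then the claimed bound) is sound. But there are two substantive issues.

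First, a misreading of the algorithm. You assert the off-diagonal entry errors are not independent because ``the indicators produced from one drawn pairing $p_{k,j}$ are correlated across $i$.'' In fact, \textsc{Estimator} calls \textsc{Sample}$(\mathcal{T}_j)$ \emph{inside} the inner loop over $j$ for each pair $(i,j)$ separately, so a fresh pairing is drawn for every above-diagonal entry. The entries of $\Delta_s$ on one side of the diagonal are therefore independent, and each is (approximately) $\mathcal{N}(0, \sigma^2 \le \tfrac{1}{4s})$. This is precisely what lets the paper stochastically dominate $\norm{\Delta_s}_F^2$ by $\frac{1}{2s}$ times a $\chi^2$ with $\frac{m(m-1)}{2}$ degrees of freedom and invoke Laurent--Massart, which yields the clean $\frac{m^2 + 3\ln(1/\alpha)}{2s}$ tail. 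Your McDiarmid fallback is addressing a dependency that isn't there, and it would actually lose ground: bounded-differences on $\norm{\Delta}_F$ with per-iteration sensitivity $\frac{\sqrt{m(m-1)}}{s}$ gives a deviation term of order $\sqrt{\frac{m^2 \ln(1/\delta)}{s}}$, i.e.\ a bound scaling as $\frac{m^2 \ln(1/\delta)}{s}$ rather than $\frac{m^2 + \ln(1/\delta)}{s}$, which does not match the stopping threshold hard-coded in the algorithm.

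Second, and more seriously, the stopping-time adaptivity. You write that you would ``establish the bound simultaneously for all $s$ above an appropriate deterministic threshold (or union-bound over the possible stopping times),'' and you identify this as the delicate step — correctly, but you leave it unresolved, and a straightforward union bound does not close it. The paper's proof is split into two parts: it fixes a deterministic $s_0 = \frac{5m^2(m^2 + 3\ln(3/\delta))}{\varepsilon^2 \sigma_m^4}$ in terms of the \emph{true} $\sigma_m$, then proves (Fact~1) that the stopping time $N$ satisfies $N \ge s_0$ except with probability $\le \delta/3$, via a per-$s$ union bound with failure budget $\delta/s^2$, and (Fact~2) that conditioned on $N \ge s_0$, the event $\norm{\Delta_N}_2 > \frac{\varepsilon\sigma_m^2}{3m}$ has probability $\le 2\delta/3$. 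Fact~2 is \emph{not} obtained by union-bounding over $s \ge s_0$: the per-$s$ failure probability decays only like $e^{-\Theta(\varepsilon^2\sigma_m^4 s/m^2)}$, and summing over $s \ge s_0$ costs an extra factor of order $\frac{m^2}{\varepsilon^2\sigma_m^4}$ that the budget does not absorb. Instead the paper uses a reflection-type maximal argument: continuing the sampling past termination, it shows that $\Pr[\exists s \ge s_0: \norm{\Delta'_s}_F \ge t] \le 2\Pr[\norm{\Delta'_{s_0}}_F \ge t]$, by conditioning on the last crossing time $T$ and using the martingale/symmetry structure of the running averages to get a conditional probability $\ge \tfrac12$ that the process is still above $t$ at time $s_0$. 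Without this maximal inequality (or an equivalent device such as a proper martingale stopping argument), your sketch has a genuine gap at exactly the place you flagged as delicate.

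Your variant of working directly with $\hat\sigma_m$ at termination, rather than reducing to the true $\sigma_m$ via a threshold $s_0$, is an attractive simplification in principle — it would let you avoid proving anything like Fact~1 — but it makes the uniform-in-$s$ requirement even more essential, since the lower bound on $N$ is itself data-dependent.
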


\begin{proof}
We will show that $\norm{\Delta}_2 \le \frac{\varepsilon \sigma_m^2}{3m}$ with probability at least $1 - \delta$. To see that this is sufficient, observe that
\[\hat{\sigma}_m = \min_{\norm{\vect{u}}_2 = 1} \norm{\hat{C}\vect{u}}_2 = \min_{\norm{\vect{u}}_2 = 1} \norm{(C + \Delta) \vect{u}}_2 \ge \min_{\norm{\vect{u}}_2 = 1} \norm{C \vect{u}}_2 - \max_{\norm{\vect{u}}_2 = 1} \norm{\Delta \vect{u}}_2 = \sigma_m - \norm{\Delta}_2.\]
\emph{(It is likewise true that $\sigma_m \ge \hat{\sigma}_m - \norm{\Delta}_2$, a fact that will be useful later.)} This means that if $\norm{\Delta}_2 \le \frac{\varepsilon \sigma_m^2}{3m}$, then in fact
\[\hat{\sigma}_m \ge \sigma_m - \norm{\Delta}_2 \ge \sigma_m - \frac{\varepsilon \sigma_m^2}{3m} \ge \frac{2}{3} \sigma_m,\]
since $\sigma_m \le \frac{1}{m} \tr(C) = 1$ and $\frac{\varepsilon}{m} \le 1$.
Thus, if $\norm{\Delta}_2 \le \frac{\varepsilon \sigma_m^2}{3m}$ then in fact $\norm{\Delta}_2 \le \frac{\varepsilon \sigma_m \hat{\sigma}_m}{2m}$.\\

We now wish to show that $\norm{\Delta}_2 \le \frac{\varepsilon \sigma_m^2}{3m}$ with probability at least $1 - \delta$.

Let $N$ be the number of samples taken by \textsc{Estimator}, i.e.\ the value of $s$ when the algorithm terminates. For all $s \le N$, let $\hat{C}_s$ denote the value of $\hat{C}$ after $s$ samples have been taken (and likewise for $M_s$), and let $\Delta_s := \hat{C}_s - C$ (so $\Delta = \Delta_N$). Let $s_0 := \frac{5m^2(m^2 + 3 \ln(3/\delta))}{\varepsilon^2 \sigma_m^4}$. We will prove two facts:

\begin{enumerate}[label=Fact~(\arabic*)]
    \item \label{fact:lowside} The probability that $N < s_0$ is at most $\delta/3$.
    \item \label{fact:highside} The probability that $N \ge s_0$ and $\norm{\Delta}_2 > \frac{\varepsilon \sigma_m^2}{3m}$ is at most $2\delta/3$.
\end{enumerate}
Together, these two facts imply that $\norm{\Delta}_2 \le \frac{\varepsilon \sigma_m^2}{3m}$ with probability at least $1 - \delta$.\\

In order to prove these facts, we need to understand better how $\norm{\Delta}_2$ is distributed. For $1 \le i < j \le m$, we have
    \[(\hat{C}_s)_{i, j} = \frac{(M_s)_{i, j}}{s} \cdot \frac{\sqrt{\abs{S_j}}}{\sqrt{\abs{S_i}}},\]
    where $(M_s)_{i, j}$ is the number of samples of pairings in $S_j$ that were also in $S_i$ (out of $s$ total samples). Thus, $\frac{(M_s)_{i, j}}{s}$ is an unbiased estimator of $\frac{\abs{S_i \cap S_j}}{\abs{S_j}}$: in particular, it is (roughly) normally distributed with mean $\frac{\abs{S_i \cap S_j}}{\abs{S_j}}$ and variance at most $\frac{1}{4s}$. Since $\frac{\sqrt{\abs{S_j}}}{\sqrt{\abs{S_i}}} \le 1$ (as $i \le j$), $(\hat{C}_s)_{i, j}$ is (roughly) normally distributed with mean $C_{i, j}$ and variance at most $\frac{1}{4s}$. (Since $s$ is large, the differences from the normal distribution are negligible, and we will neglect them for ease of presentation.\footnote{To give more detail, the distribution of $\parens{p - \frac{1}{n} B(n, p)}^2$ is extremely similar to the distribution of $\mathcal{N}\parens{0, \sigma^2 = \frac{p(1 - p)}{n}}^2$, and (for sufficiently large $n$) stochastically dominated by $\mathcal{N}\parens{0, \sigma^2 = \frac{p(1 - p)}{1.01n}}^2$, except near zero (i.e.\ for $\abs{x} \le O(1/n^2)$). Our analysis has enough slack to allow for this extra factor of $1.01$.})

Thus, the diagonal entries of $\Delta_s$ are all $0$, while the off-diagonal entries are normally distributed with mean $0$ and variance at most $\frac{1}{4s}$. Further, $\Delta_s$ is symmetric, and the entries on one side of the diagonal are independent. This means that the distribution of $\norm{\Delta_s}_F^2$ (the sum of the squares of the entries of $\Delta_s$) is stochastically dominated by $\frac{1}{2s}$ times the $\chi^2$-distribution with $\frac{m(m - 1)}{2}$ degrees of freedom. Since $\norm{\Delta_s}_2 \le \norm{\Delta_s}_F$ (this is true for all matrices), this is likewise true for the distribution of $\norm{\Delta_s}_2^2$.

We will be using the Laurent-Massart concentration bound on the $\chi^2$-distribution \citep{laurent00chisquared}, which says that for $X \sim \chi^2(k)$, we have
    \[\pr{X \ge k + 2\sqrt{kx} + 2x} \le e^{-x}\]
    for all $x \ge 0$.
In particular, this means that for every positive integer $s$, we have
\[\pr{\norm{\Delta_s}_2^2 \ge \frac{1}{2s} \parens{\frac{m(m - 1)}{2} + 2 \sqrt{\frac{m(m - 1)}{2} \ln(1/\alpha)} + 2 \ln(1/\alpha)}} \le \exp(- \ln(1/\alpha)) = \alpha.\]
This means that
\begin{equation} \label{eq:Delta_bound}
  \pr{\norm{\Delta_s}_2^2 \ge \frac{1}{2s} \parens{m^2 + 3 \ln(1/\alpha)}} \le \alpha,
\end{equation}
because $m(m - 1) \le m^2$ and $2 \sqrt{\frac{m^2}{2} \ln(1/\alpha)} \le \frac{m^2}{2} + \ln(1/\alpha)$.

See below for the remainder of the proofs of \ref{fact:lowside} and \ref{fact:highside}. The proofs are mostly technical applications of equation~\ref{eq:Delta_bound}, although with some nontrivial steps.
\end{proof}

We return to the proof of Theorem~\ref{thm:haf_approx}. It remains only to show that the expected number of samples taken by \textsc{Estimator} is polynomial in $m$, $1/\delta$, $1/\varepsilon$, and $1/\sigma_m$. (Bounding the number of samples suffices, because all other parts of the algorithm are clearly polynomial-time in the sum of the representation sizes of $\mathcal{T}_1, \dots, \mathcal{T}_m$.)

First, note that for $s \ge 16 \cdot \frac{80m^2(m^2 + 3 \ln(2/\delta))}{\varepsilon^2 \sigma_m^4}$, then only way that the stop condition of \textsc{Estimator} is not met after $s$ samples is if $\hat{\sigma}_m \le 0$ or if $\hat{\sigma}_m \ge 2 \sigma_m$. Since $\abs{\sigma_m - \hat{\sigma}_m} \le \norm{\Delta}_2$, this can only happen if $\norm{\Delta}_2 \ge \sigma_m$.

Now, from equation~\ref{eq:Delta_bound}, we know that
\[\pr{\norm{\Delta_s}_2 \ge \sigma_m} = \pr{\norm{\Delta_s}_2^2 \ge \sigma_m^2} \ge \pr{\norm{\Delta_s}_2^2 \ge \frac{1}{2s} \parens{m^2 + 3 \ln \parens{e^{\frac{1}{2} \sigma_m^2 s - m^2}}}} \le e^{m^2 - \frac{1}{2} \sigma_m^2 s}.\]
Thus, the expected number of samples taken by \textsc{Estimator} is at most
\[16 \cdot \frac{80m^2(m^2 + 3 \ln(2/\delta))}{\varepsilon^2 \sigma_m^4} + \sum_{s = 16 \cdot \frac{80m^2(m^2 + 3 \ln(2/\delta))}{\varepsilon^2 \sigma_m^4}}^\infty e^{m^2 - \frac{1}{2} \sigma_m^2 s}.\]
The second term is negligible compared to the first term, and the first term is polynomial in $m$, $1/\delta$, $1/\varepsilon$, and $1/\sigma_m$. This concludes the proof of Theorem~\ref{thm:haf_approx}.
\end{proof}

\begin{proof}[Proof of \ref{fact:lowside}]
First, note that $T \ge \frac{80m^2(m^2 + 3 \ln(3/\delta))}{\varepsilon^2}$, since the least singular value of $\hat{C}_T$ is less than or equal to $1$, and so the stopping condition cannot be met when the number of samples taken is less than $\frac{80m^2(m^2 + 3 \ln(3/\delta))}{\varepsilon^2}$.

We will prove that for every $\frac{80m^2(m^2 + 3 \ln(3/\delta))}{\varepsilon^2} \le s < s_0$, the probability that $s \ge \frac{80m^2(m^2 + 3 \ln(3/\delta))}{\varepsilon^2 \hat{\sigma}_m^4}$ (i.e.\ that the stopping condition is met at $s$ samples) is less than $\frac{\delta}{s^2}$ (where $\hat{\sigma}_m$ here denotes the smallest singular value of $\hat{C}_s$). The union bound is then sufficient to complete the proof, since the sum of $\frac{1}{s^2}$ over all $s \ge \frac{80m^2(m^2 + 3 \ln(3/\delta))}{\varepsilon^2}$ is less than $1/3$.

  By equation~\ref{eq:Delta_bound}, we have that with probability at least $1 - \delta/s^2$, we have
  \begin{equation} \label{eq:Delta_s_small}
      \norm{\Delta_s}_2^2 \le \frac{1}{2s} \parens{m^2 + 3 \ln(s^2/\delta)}.
  \end{equation}
  We also know that $\hat{\sigma}_m \le \sigma_m + \norm{\Delta_s}$. Thus, it suffices to show that if equation~\ref{eq:Delta_s_small} holds, then we have
  \begin{equation} \label{eq:s_small}
      s < \frac{80m^2(m^2 + 3 \ln(3/\delta))}{\varepsilon^2 (\sigma_m + \norm{\Delta_s}_2)^4}.
  \end{equation}
  To prove this, assume (\ref{eq:Delta_s_small}). We start with the observation that $4m^4 < 4s s_0 \sigma_m^4$ and that $36 \ln^2 \parens{s^2/\delta} < 4s s_0 \sigma_m^4$. The former is true because $s \ge 1$ and $s_0 \sigma_m^4 \ge m^4$. The latter is true because
  \[36 \ln^2 \parens{s^2/\delta} = 36(2 \ln s + \ln(1/\delta))^2 \le 72(4 \ln^2 s + \ln^2(1/\delta)) \le 4ss_0 \sigma_m^4,\]
  because $288 \ln^2 s \le s \le s s_0 \sigma_m^4$ and $10 \ln(1/\delta)$ is less than both $s$ and $s_0$.
  
  Therefore, we have that
  \[\parens{m^2 + 3 \ln \parens{s^2/\delta}}^2 \le 4 \max \parens{m^2, 3 \ln \parens{s^2/\delta}} < 4s s_0 \sigma_m^4\]
  and so
  \[16 \parens{\frac{1}{2s \sigma_m^2} \parens{m^2 + 3 \ln \parens{s^2/\delta}}}^2 < 16 \frac{s_0}{s}.\]
  Therefore, we have
  \[\parens{1 + \sqrt{\frac{1}{2s \sigma_m^2} \parens{m^2 + 3 \ln \parens{s^2/\delta}}}}^4 < 16 \frac{s_0}{s}.\]
  This is because for all $x$, if $x \le 1$ then $(1 + x)^4 \le 16 < 16 \frac{s_0}{s}$, and if $x > 1$ then $(1 + x)^4 \le 16x^4$. (We use these facts for $x = \sqrt{\frac{1}{2s \sigma_m^2} \parens{m^2 + 3 \ln \parens{s^2/\delta}}}$.) Now, recall our assumption that equation~\ref{eq:Delta_s_small} holds. This tells us that
  \[\parens{1 + \frac{\norm{\Delta_s}_2}{\sigma_m}}^4 < 16 \frac{s_0}{s},\]
  so
  \[(\sigma_m + \norm{\Delta_s}_2)^4 < 16 \frac{s_0 \sigma_m^4}{s} = \frac{80m^2(m^2 + 3 \ln(3/\delta))}{\varepsilon^2 s}.\]
  equation~\ref{eq:s_small} follows.
\end{proof}

\begin{proof}[Proof of \ref{fact:highside}]
Imagine that, instead of stopping our sampling procedure when \textsc{Estimator} terminates, we sample forever, thus defining an infinite sequence of matrices $\Delta_s$. It suffices to show that with probability at least $1 - 2\delta/3$, the \emph{maximum} value of $\norm{\Delta_s}_2$ over all $s \ge s_0$ is at most $\frac{\varepsilon \sigma_m^2}{3m}$. Since $\norm{\Delta_s}_F \ge \norm{\Delta_s}_2$, it suffices to show that the maximum value of $\norm{\Delta_s}_F$ over all $s \ge s_0$ is at most $\frac{\varepsilon \sigma_m^2}{3m}$.

Consider the thought experiment in which we construct $m \times m$ matrices $\Delta_1', \Delta_2', \dots$ as follows. Corresponding to each $(i, j): i < j$, we keep a running tally of samples from the Bernoulli distribution with parameter $\frac{1}{2}$; then $(\Delta_s')_{i, j} = (\Delta_s')_{j, i}$ is the amount by which the average of the first $s$ samples exceeds $\frac{1}{2}$. Because the entries of $\Delta_s'$ are also (roughly) normally distributed but with at least as wide a spread as the corresponding entries of $\Delta_s$, the distribution of the maximum of $\norm{\Delta_s'}_F$ over all $s \ge s_0$ stochastically dominates the distribution of the maximum of $\norm{\Delta_s}_F$ over all $s \ge s_0$ (up to discrepancies that are exponentially small in $s$).\footnote{Formally, for each $i < j$, one can couple $\{(\Delta_s)_{i, j}\}_{s = s_0}^\infty$ with $\{(\Delta_s')_{i, j}\}_{s = s_0}^\infty$ in such a way that $(\Delta_s)_{i, j}^2 \le (\Delta_s')_{i, j}^2$ for all $s$ (up to exponentially small discrepancies). Now merge these couplings across all $(i, j)$ pairs independently (which is a valid coupling because $\{(\Delta_s)_{i, j}\}_{s = s_0}^\infty$ is independent for every $(i, j)$ pair, and likewise for $\{(\Delta_s')_{i, j}\}_{s = s_0}^\infty$.} Therefore, it suffices to show that the maximum value of $\norm{\Delta_s'}_F$ over all $s \ge s_0$ is at most $\frac{\varepsilon \sigma_m^2}{3m}$.

To see this, we first observe that $\pr{\norm{\Delta'_{s_0}}_F \ge \frac{\varepsilon \sigma_m^2}{3m}} \le \delta/3$. This follows immediately from plugging $s = s_0 = \frac{5m^2(m^2 + 3 \ln(3/\delta))}{\varepsilon^2 \sigma_m^4}$ and $\alpha = \delta/3$ into equation~\ref{eq:Delta_bound} (which holds for $\norm{\Delta'_{s_0}}_F$ just as it does for $\norm{\Delta_s}_F$).

  With this observation in mind, it now suffices to show that
  \begin{equation} \label{eq:exists_s}
      \pr{\exists s \ge s_0: \norm{\Delta_s'}_F \ge \frac{\varepsilon \sigma_m^2}{3m}} \le 2 \pr{\norm{\Delta'_{s_0}}_F \ge \frac{\varepsilon \sigma_m^2}{3m}}.
  \end{equation}
  To see that this is true, it suffices to show that the probability of the right-hand event conditioned on the left-hand event is at least $\frac{1}{2}$. Suppose that such an $s \ge s_0$ exists, and let $T$ be the largest such $s$ (which exists with probability $1$, since $\norm{\Delta'_s}$ tends to zero almost surely). We claim that for all possible $T, \Delta_T'$, the conditional probability
  \[\pr{\norm{\Delta'_{s_0}}_F \ge \frac{\varepsilon \sigma_m^2}{3m} \mid T, \Delta'_T}\]
  is at least $\frac{1}{2}$. To see why, note that $\EE{\Delta_{s_0}' \mid \Delta'_T} = \Delta'_T$: the expected running average of Bernoulli random variables at an earlier point is equal to the running average at a later point. Further, each $(\Delta'_{s_0})_{i, j}$ is (roughly) normally distributed -- and in particular, symmetrically distributed -- around $(\Delta'_T)_{i, j}$. Now, because of the convexity of the square, for any matrix $B$ we have that at least one of $\norm{\Delta'_T + B}_F$ and $\norm{\Delta'_T - B}_F$ is greater than or equal to $\norm{\Delta'_T}_F$. Therefore, the probability that $\norm{\Delta'_{s_0}}_F$ is greater than or equal to $\norm{\Delta'_T}_F$ (which is in turn greater than or equal to $\frac{\varepsilon \sigma_m^2}{3m}$) is greater than or equal to $\frac{1}{2}$, as desired.\footnote{If $T$ is only very slightly larger than $s_0$, the normal approximation for $(\Delta'_{s_0})_{i, j}$ conditioned on $(\Delta'_T)_{i, j}$ may be sufficiently imperfect to matter for this claim; however, replacing $\frac{1}{2}$ with a slightly smaller constant solves this issue.}
\end{proof}
\end{document}